\newtheorem{theorem}{Theorem}[section]
\newtheorem{thm}[theorem]{Theorem}
\newtheorem{lemma}[theorem]{Lemma}
\newtheorem{corollary}[theorem]{Corollary}
\newtheorem{cor}[theorem]{Corollary}
\newtheorem{prop}[theorem]{Proposition}
\newtheorem*{theorem*}{Theorem}
\newtheorem*{lemma*}{Lemma}
\newtheorem*{cor*}{Corollary}
\newtheorem*{prop*}{Proposition}
\newtheorem*{conjecture*}{Conjecture}
\theoremstyle{definition}
\newtheorem{defn}{Definition}[section]
\newtheorem{definition}{Definition}[section]
\newtheorem*{definition*}{Definition}
\theoremstyle{definition}
\theoremstyle{definition}
\newtheorem{ex}{Example}
\theoremstyle{remark}
\newtheorem*{ex*}{Example}
\theoremstyle{definition}
\theoremstyle{definition}
\newtheorem*{assm*}{Assumption}
\theoremstyle{remark}
\newtheorem{remark}{Remark}[section]
\theoremstyle{remark}
\newtheorem*{remark*}{Remark}
\DeclareFontFamily{U}{mathx}{\hyphenchar\font45}
\DeclareFontShape{U}{mathx}{m}{n}{
      <5> <6> <7> <8> <9> <10> gen * mathx
      <10.95> mathx10 <12> <14.4> <17.28> <20.74> <24.88> mathx12
      }{}
\DeclareSymbolFont{mathx}{U}{mathx}{m}{n}
\DeclareMathSymbol{\intop}  {1}{mathx}{"B3}
\DeclareFontFamily{U}{mathx}{\hyphenchar\font45}
\DeclareFontShape{U}{mathx}{m}{n}{
      <5> <6> <7> <8> <9> <10>
      <10.95> <12> <14.4> <17.28> <20.74> <24.88>
      mathx10
      }{}
\DeclareSymbolFont{mathx}{U}{mathx}{m}{n}
\DeclareMathAccent{\widecheck}{0}{mathx}{"71}
\DeclareMathAccent{\wideparen}{0}{mathx}{"75}
\newcommand\indep{\independent}
\newcommand\independent{\protect\mathpalette{\protect\independenT}{\perp}}
\def\independenT#1#2{\mathrel{\rlap{$#1#2$}\mkern4mu{#1#2}}}
\let\temp\phi
\let\phi\varphi
\let\varphi\temp
\newcommand{\pr}{\mathbb{P}}
\newcommand{\R}{\mathbb{R}}
\newcommand{\normalN}{\mathcal{N}}
            \newcommand{\given}{\,|\,}
\renewcommand{\norm}[1]{\Vert#1\Vert}
\newcommand{\eps}{\varepsilon}
\DeclareMathOperator{\supp}{supp}
\DeclareMathOperator{\dist}{dist}
\newcommand{\param}{\theta}
\newcommand{\params}{\Theta}
\newcommand{\probs}{\mathcal{P}}
\newcommand{\parametrization}{\pi}
\newcommand{\funcs}{\mathcal{F}}
\DeclareMathOperator{\Ext}{Ext}
\DeclareMathOperator{\RELU}{ReLU}
\DeclareMathOperator{\LRELU}{LReLU}
\DeclareMathOperator{\Aff}{Aff}
\DeclareMathOperator{\nbhd}{ne}
\newcommand{\prob}{\pr}
\newcommand{\ncomp}{J}
\title{Identifiability of deep generative models\\without auxiliary information}
\author[]{Bohdan Kivva$^\dag$\footnote{Equal contribution}\;}
\author[]{Goutham Rajendran$^{\dag*}$}
\author[]{Pradeep Ravikumar$^\ddag$}
\author[]{Bryon Aragam$^\dag$}
\affil[]{$^\dag$\emph{University of Chicago}, $^\ddag$\emph{Carnegie Mellon University}}
\begin{document}

\maketitle
\setcounter{footnote}{0}

\begin{abstract}
    We prove identifiability of a broad class of deep latent variable models that (a) have universal approximation capabilities and (b) are the decoders of variational autoencoders that are commonly used in practice. Unlike existing work, our analysis does not require weak supervision, auxiliary information, or conditioning in the latent space. Specifically, we show that for a broad class of generative (i.e. unsupervised) models with universal approximation capabilities, the side information $u$ is not necessary: We prove identifiability of the entire generative model where we do not observe $u$ and only observe the data $x$. The models we consider match autoencoder architectures used in practice that leverage mixture priors in the latent space and ReLU/leaky-ReLU activations in the encoder, such as VaDE and MFC-VAE. Our main result is an identifiability hierarchy that significantly generalizes previous work and exposes how different assumptions lead to different ``strengths'' of identifiability, and includes certain ``vanilla'' VAEs with isotropic Gaussian priors as a special case. For example, our weakest result establishes (unsupervised) identifiability up to an affine transformation, and thus partially resolves an open problem regarding model identifiability raised in prior work. These theoretical results are augmented with experiments on both simulated and real data.

\end{abstract}

\section{Introduction}
\label{sec:intro}

One of the key paradigm shifts in machine learning (ML) over the past decade has been the transition from handcrafted features to automated, data-driven representation learning, typically via deep neural networks. One complication of automating this step in the ML pipeline is that it is difficult to provide guarantees on what features will (or won't) be learned.
As these methods are being used in high stakes settings such as medicine, health care, law, and finance where accountability and transparency are not just desirable but often legally required, it has become necessary to place representation learning on a rigourous scientific footing. In order to do this, it is crucial to be able to discuss ideal, target features and the underlying representations that define these features. As a result, the ML literature has begun to move beyond consideration solely of downstream tasks (e.g. classification, prediction, sampling, etc.) in order to better understand the structural foundations of deep models.

Deep generative models (DGMs) such as variational autoencoders (VAEs) \citep{kingma2013auto,rezende2014stochastic} are a prominent example of such a model, and
are a powerful tool for unsupervised learning of latent representations, useful for a variety of downstream tasks such as sampling, prediction, classification, and clustering.
Despite these successes,
training DGMs is an intricate task: They are susceptible to posterior collapse and poor local minima \citep{yacoby2020failure,dai2020usual,he2018lagging,wang2021posterior},
and characterizing their latent space remains a difficult problem \citep[e.g.][]{klys2018learning,van2017neural}.
For example, does the latent space represent semantically meaningful or practically useful features? Are the learned representations stable, or are they simply artifacts of peculiar choices of hyperparameters?
These questions have been the subject of numerous studies in recent years \citep[e.g.][]{schott2021visual,luise2020generalization,locatello2019challenging,bansal2021revisiting,csiszarik2021similarity,lenc2015understanding}, and 
in order to better understand the behaviour of these models and address these questions, the machine learning literature has recently turned its attention to fundamental identifiability questions \citep{khemakhem2020variational,damour2020underspecification,wang2021posterior}. Identifiability is a crucial primitive in machine learning tasks that is useful for probing stability, consistency, and robustness.
Without identifiability, the output of a model can be unstable and unreliable, in the sense that retraining under small perturbations of the data and/or hyperparameters may result in wildly different models.\footnote{Formally, identifiability means the parametrization of the model is injective. See Section~\ref{sec:prelim} for details.}
In the context of deep generative models, the model output of interest is the latent space and the associated representations induced by the model.

In this paper, we revisit the identifiability problem in deep latent variable models and prove a surprising new result: Identifiability is possible under commonly adopted assumptions \emph{and} without conditioning in the latent space, or equivalently, without weak supervision or side information in the form of auxiliary variables. This contrasts a recent line of work that has established fundamental new results regarding the identifiability of VAEs that requires conditioning on an auxiliary variable $u$ that renders each latent dimension conditionally independent \citep{khemakhem2020variational}. While this result has been generalized and relaxed in several directions \citep{halva2020hidden,halva2021disentangling,khemakhem2020ice,li2019identifying,pmlr-v139-mita21a,sorrenson2019disentanglement,yang2021nonlinear,klindt2020towards,brehmer2022weakly}, fundamentally these results still crucially rely on the side information $u$. We show that this is in fact unnecessary---confirming existing empirical studies \citep[e.g][]{willetts2021don,falck2021multi}---and do so without sacrificing any representational capacity. What’s more, the model we analyze is closely related to deep architectures that have been widely used in practice \citep{dilokthanakul2016deep,falck2021multi,jiang2016variational,johnson2016composing,lee2020meta,li2018learning,willetts2019disentangling,lee2020meta}: We show that there is good reason for this, and provide new insight into the properties of these models and support for their continued use.

\paragraph{Overview}
More specifically, we consider the following generative model for observations $x$:
\begin{align}
\label{eq:defn:nica}
x = f(z) + \eps, \quad
x=(x_{1},\ldots,x_{n})\in\R^{n}, \quad
z=(z_{1},\ldots,z_{m})\in\R^{m},
\end{align}
where the latent variable $z$ follows a Gaussian mixture model (GMM),\footnote{See Remark~\ref{rem:gmm} for extensions to more general mixture priors.}
$f:\R^{m}\to\R^{n}$ is a piecewise affine nonlinearity such as a ReLU network, and $\eps\in\R^{n}$ is independent, random noise.\footnote{Our results include the noiseless case $\eps=0$ as a special case.} We do not assume that the number of mixture components, nor the architecture of the ReLU network, are known in advance, nor do we assume that $z$ has independent components. Both the mixture model and neural network may be arbitrarily complex, and we allow for the discrete hidden state that generates the latent mixture prior to be high-dimensional and dependent. This includes both vanilla VAEs (i.e. with a standard isotropic Gaussian prior) and classical ICA models (i.e. for which the latent variables are mutually independent) as special cases. Since both $z$ and $f$ are allowed to be arbitrarily complex, the model \eqref{eq:defn:nica} has universal approximation capabilities, which is crucial for modern applications.

This model has been widely studied in the literature from a variety of different perspectives:
\begin{itemize}
\item \emph{Nonlinear ICA.} When the $z_{i}$ are mutually independent, \eqref{eq:defn:nica} recovers the standard nonlinear ICA model that has been extensively studied in the literature \citep{hyvarinen1999nonlinear,achard2005identifiability,zhang2008minimal,hyvarinen2017nonlinear,hyvarinen2019nonlinear,hyvarinen2016unsupervised}. 
Although our most general results do not make independence assumptions, our results cover nonlinear ICA as a special case (see Section~\ref{sec:main:cases} for more discussion).
\item \emph{VAE with mixture priors.} When the prior over $z$ is a mixture model (e.g. such as a GMM), the model \eqref{eq:defn:nica} is closely related to popular autoencoder architectures such as
VaDE \citep{jiang2016variational},
SVAE \citep{johnson2016composing},
GMVAE \citep{dilokthanakul2016deep},
DLGMM \citep{nalisnick2016approximate},
VampPrior \citep{tomczak2018vae},
MFC-VAE \citep{falck2021multi},
etc.
Although such VAEs with mixture priors have been used extensively in applications, theoretical results are missing.
\item \emph{Warped mixtures.} Another closely related model is the warped mixture model of \citet{iwata2013warped}, which is a Bayesian version of \eqref{eq:defn:nica}. 
Once again, theoretical guarantees for these models are lacking.
\item \emph{iVAE.} Finally, \eqref{eq:defn:nica} is also the basis of
the iVAE model introduced by \citet{khemakhem2020variational}, where identifiability (up to certain equivalences) is proved when there is an additional auxiliary variable $u$ that is observed such that $z_{i}\indep z_{j}\given u$.
\end{itemize}

\begin{table}
\centering
\begin{tabular}{cccc}
 \toprule
 Assumptions on $f$ & Assumptions on $Z$ & Theoretical guarantees & Result\\
 \midrule
 \ref{assm:mixture} & \ref{assm:relu}, \ref{assm:invib} & $\mathbb{P}(Z)$ identifiable up to & Theorems \\
 && an {affine transformation} & \ref{thm:main:1d}(a), \ref{thm:main:highd}(a)\\
 [1ex]
\ref{assm:mixture} & \ref{assm:relu}, \ref{assm:inv} & $\mathbb{P}(Z)$ and $f$ up to  identifiable&Theorems \\
 &&an {affine transformation} & \ref{thm:main:1d}(c), \ref{thm:main:highd}(d)\\
 [1ex]
\ref{assm:mixture}, \ref{assm:latent} & \ref{assm:relu}, \ref{assm:inv} & $\mathbb{P}(Z)$ and $f$ identifiable up to  & Theorems \\
 &&{permutation, scaling and translation} & \ref{thm:main:1d}(b), \ref{thm:main:highd}(b)\\
 [1ex]
\ref{assm:mixture}, \ref{assm:latent}, \ref{assm:latentdag} & \ref{assm:relu}, \ref{assm:inv}
 & $\mathbb{P}(U, Z)$ and $f$ are identifiable up to & Theorems \\
 &&{permutation, scaling and translation} & \ref{thm:main:highd}(c), \ref{thm:main:highd}(d)\\
 [1ex]
 \bottomrule\\
\end{tabular}
\caption{Summary of results in this paper. The strength of the assumptions increases in each successive row, as do the strength of the guarantees. See Section~\ref{sec:main:results} for formal statements.}\label{tab:summary}
\end{table}

\paragraph{Contributions}
Driven by this recent interest from both applied and theoretical perspectives, our main results (Theorems~\ref{thm:main:1d},~\ref{thm:main:highd}) show that the model \eqref{eq:defn:nica} is identifiable up to various \emph{linear} equivalences, without conditioning or auxiliary information in the latent space. 
In fact, we develop a hierarchy of results under progressively stronger assumptions on the model, beginning with affine equivalence and ending up with a much stronger equivalence up to permutations only. See Table~\ref{tab:summary} for a summary.

In order to develop this hierarchy, we prove several technical results of independent interest: 
\begin{enumerate}
    \item First, we establish a novel identifiability result for nonparametric mixtures (Theorem~\ref{thm:main:npmixF2});
    \item Second, we show how to use the mixture prior to strengthen existing identifiability results for nonlinear ICA (Theorem~\ref{thm:dontneedu-strong});
    \item Third, we extend existing results \citep{kivva2021learning} on the recovery of structured multivariate discrete latent variable models to recovery under an unknown affine transformation (Theorem~\ref{thm:ivae-latentdag}).
\end{enumerate}
Our proof techniques---based on elementary tools from analytic function theory and mixture identifiability---are new and depart from existing work in this area. 
As a consequence, the analysis itself provides new insight into the structure and behaviour of deep generative models.

\paragraph{Related work}
This problem is widely studied, and has garnered significant recent interest, so we focus only on the most closely related work here.

Classical results on nonlinear ICA \citep{hyvarinen1999nonlinear} establish the nonidentifiability of the general model (i.e. without restrictions on $z$ and $f$); see also \citet{darmois1951analyse,jutten2003advances}.
More recently, \citet{khemakhem2020variational} proved a major breakthrough by showing that given side information $u$, identifiability of the entire generative model is possible up to certain (nonlinear) equivalences. Since this pathbreaking work, many generalizations have been proposed \citep{halva2020hidden,halva2021disentangling,khemakhem2020ice,li2019identifying,pmlr-v139-mita21a,sorrenson2019disentanglement,yang2021nonlinear,klindt2020towards,brehmer2022weakly}, all of which require some form of auxiliary information. 
Other approaches to identifiability include various forms of weak supervision such as contrastive learning \citep{zimmermann2021contrastive}, group-based disentanglement \citep{locatello2020weakly}, and independent mechanisms \citep{gresele2021independent}.
Non-identifiability has also been singled out as a contributing factor to practical issues such as posterior collapse in VAEs \citep{wang2021posterior,yacoby2020failure}.

Our approach is to avoid additional forms of supervision altogether, and enforce identifiability in a purely unsupervised fashion. Recent work along these lines includes \citet{wang2021posterior}, who propose to use Brenier maps and input convex neural networks, and \citet{moran2021identifiable} who leverage sparsity and an anchor feature assumption. Aside from different assumptions, the main difference between this line of work and our work is that their work only identifies the latent space $P(Z)$, whereas our focus is on jointly identifying \emph{both} $P(Z)$ and $f$. In fact, we provide a decoupled set of assumptions that allow $f$ or $P(Z)$ or both to be identified. Thus, we  partially resolve in the affirmative an open problem regarding model identifiability raised by the authors in their discussion.

Another distinction between this line of work and the current work is our focus on architectures and modeling assumptions that are \emph{standard} in the deep generative modeling literature, specifically ReLU nonlinearities and mixture priors.
As noted above, there is a recent tradition of training variational autoencoders with mixture priors \citep{dilokthanakul2016deep,falck2021multi,jiang2016variational,johnson2016composing,lee2020meta,li2018learning,willetts2019disentangling,lee2020meta}. Our work builds upon this empirical literature, showing that there is good reason to study such models: Not only have they been shown to be more effective compared to vanilla VAEs, we show that they have appealing theoretical properties as well. In fact, recent work \citep{willetts2021don,falck2021multi} has observed precisely the identifiability phenomena studied in our paper, however, this work lacks rigourous theoretical results to explain these observations.

Another related line of work studies identification in graphical models with latent variables, albeit without any explicit connection to deep generative models \citep{pearl1992statistical,evans2016graphs,markham2020measurement,kivva2021learning}.

Finally, since a key step in our proof involves the analysis of a nonparametric mixture model (see Appendix~\ref{sec:proofdontneedu} for details), it is worth reviewing previous work in mixture models. See \citet{allman2009} for an overview.
Of particular use for the present work are \citet{teicher1963identifiability} and \citet{barndorff1965}, wherein the identifiability of Gaussian and exponential family mixtures, respectively, are proved.
Specifically for nonparametric mixtures, existing results consider product mixtures \citep{teicher1967,hall2003}, grouped observations \citep{ritchie2020consistent,vandermeulen2019operator}, symmetric measures \citep{hunter2007,bordes2006}, and separation conditions \citep{aragam2018npmix}.
For context, we note here that a discrete VAE can be interpreted as a mixture model in disguise: This is a perspective that we leverage in our proofs. We are not aware of previous work in the deep generative modeling literature that exploits this connection to prove identifiability results.

\section{Preliminaries}
\label{sec:prelim}

We first introduce the main generative model that we study and its properties, and then proceed with a brief review of identifiability in deep generative models.

\paragraph{Generative model}
The observations $x\in\R^{n}$ are realizations of a random vector $X$, and are generated according to the generative model \eqref{eq:defn:nica},
where $z\in\R^{m}$ represents realizations of an unobserved random vector $Z$. We make the following assumptions on $Z$ and $f$:\footnote{In the sequel, we will use (P\#) to index assumptions on the prior $P(Z)$, and (F\#) to index assumptions on the decoder $f$.}
\begin{enumerate}[label=(P\arabic*)]
\item\label{assm:mixture} $P(Z)$ is a (possibly degenerate) Gaussian mixture model with an unknown number of components $\ncomp\ge 1$, i.e.
\begin{align}
\label{eq:assm:mixture}
p(z) = \sum_{j=1}^{\ncomp} \lambda_{j}\phi(z;\mu_{j},\Sigma_{j}),
\quad\sum_{j=1}^{\ncomp}\lambda_{j}=1,
\quad
\lambda_j>0,
\end{align}
where $p(z)$ is the density of $P(Z)$ with respect to some base measure, and $\phi(z;\mu_{j},\Sigma_{j})$ is the gaussian density with mean $\mu_{j}$ and covariance $\Sigma_{j}$.
\end{enumerate}
\begin{enumerate}[label=(F\arabic*)]
\item\label{assm:relu} $f$ is a piecewise affine function, such as a multilayer perceptron with ReLU (or leaky ReLU) activations. 
\end{enumerate}
Recall that an affine function is a function $x\mapsto Ax+b$ for some matrix $A$.
As already discussed, special cases of this model have been extensively studied in both applications and theory, and both \ref{assm:mixture}-\ref{assm:relu} are quite standard in the literature on deep generative models and represent a useful model that is widely used in practice \citep[e.g.][]{dilokthanakul2016deep,falck2021multi,jiang2016variational,johnson2016composing,lee2020meta,li2018learning,willetts2019disentangling,lee2020meta}. In particular, when $J=1$ this is simply a classical VAE with an isotropic Gaussian prior (see Section~\ref{sec:main:cases} for more discussion).

\begin{remark}
\label{rem:gmm}
The assumption that $P(Z)$ is a GMM can be replaced with more general exponential family mixtures \citep{barndorff1965} as long as (a) the resulting mixture prior $p(z)$ is an analytic function and (b) the exponential family is closed under affine transformations.
\end{remark}

\paragraph{Universal approximation}
Under assumptions \ref{assm:mixture}-\ref{assm:relu}, the model \eqref{eq:defn:nica} has universal approximation capabilities. In fact, any distribution can be approximated by a mixture model \eqref{eq:assm:mixture} with sufficiently many components $\ncomp$ \cite[e.g.][]{nguyen2019approximations}. Alternatively, when $\ncomp$ is bounded, by taking $f$ to be a sufficiently deep and/or wide ReLU network, any distribution can be approximated by $f(Z)$ \cite[e.g.][]{lu2020universal,teshima2020coupling}, even if $f$ is invertible \citep{ishikawa2022universal}. Thus, there is no loss in representational capacity in \ref{assm:mixture}-\ref{assm:relu}. To the best of our knowledge, our results are the first to establish identifiability of \emph{both} the latent space and decoder for deep generative models \emph{without} conditioning in the latent space or weak supervision. We note that \citet{wang2021posterior} and \citet{moran2021identifiable} also propose deep architectures that identify the latent space, but not the decoder.

\paragraph{Identifiability}
A statistical model is specified by a (possibly infinite-dimensional, as in our setting) parameter space $\params$, a family of distributions $\probs$, and a mapping $\parametrization:\params\to\probs$; i.e. $\parametrization(\param)\in\probs$ for each $\param\in\params$. In more conventional notation, we define $\probs=\{p_{\param}:\param\in\params\}$, in which case $p_{\param}=\parametrization(\param)$. A statistical model is called \emph{identifiable} if the parameter mapping $\parametrization$ is one-to-one (injective). In practical applications, the strict definition of identifiability is too strong, and relaxed notions of identifiability are sufficient. Classical examples include identifiability up to permutation, re-scaling, or orthogonal transformation.
More generally, a statistical model is \emph{identifiable up to an equivalence relation $\sim$} defined on $\params$ if $\parametrization(\param)=\parametrization(\param')\implies \param \sim \param'$. 
For more details on the different notions of identifiability in deep generative models, see \cite{khemakhem2020variational,khemakhem2020ice,pmlr-v139-roeder21a}.

More precisely, we use the following definition. 
Let $f_{\sharp}P$ denote the pushforward measure of $P$ by $f$.

\begin{definition}

Let $\probs$ be a family of probability distributions on $\mathbb{R}^m$ and $\funcs$ be a family of functions $f:\mathbb{R}^m\rightarrow \mathbb{R}^n$. 
\begin{enumerate}
\item For $(P, f)\in \probs\times \funcs$ we say that the prior $P$ is \emph{identifiable (from $f_{\sharp}P$) up to an affine transformation} if for any $(P', f')\in \probs\times \funcs$ such that $f_{\sharp}P\equiv f'_{\sharp}P'$ there exists an invertible affine map $h:\mathbb{R}^m\rightarrow \mathbb{R}^m$ such that $P' = h_{\sharp}P$ (i.e., $P'$ is the pushforward measure of $P$ by $h$).
\item For $(P, f)\in \probs\times \funcs$ we say that the pair $(P, f)$ is \emph{identifiable (from $f_{\sharp}P$) up to an affine transformation} if for any $(P', f')\in \probs\times \funcs$ such that $f_{\sharp}P\equiv f'_{\sharp}P'$ there exists an invertible affine map $h:\mathbb{R}^m\rightarrow \mathbb{R}^m$ 
such that $f' = f\circ h^{-1}$ and $P' = h_{\sharp}P$.
\end{enumerate}
If the noise $\varepsilon$ has a known distribution, then $f_{\sharp}P$ is identifiable from the convolution $(f_{\sharp}P)\ast \varepsilon$. Hence, this definition can be automatically extended to the setup with known noise.
This definition also  can be extended to transformations besides affine transformations (e.g. permutations, translations, etc.) in the obvious way.
\end{definition}

Identifiability is a crucial property for a statistical model: 
Without identifiability, different training runs may lead to very different parameters, making training unpredictable and replication difficult. The failure of identifiability, also known as \emph{underspecification} and \emph{ill-posedness}, has recently been flagged in the ML literature as a root cause of many failure modes that arise in practice \citep{damour2020underspecification,yacoby2020failure,wang2021posterior}. 
As a result, there has been a growing emphasis on identification in the deep learning literature, which motivates the current work.
Finally, in addition to these reproducibility and interpretability concerns, identifiability is a key component in many applications of latent variable models including
causal representation learning \citep{scholkopf2021toward},
independent component analysis \citep{comon1994}, and
topic modeling \citep{arora2012learning,anandkumar2013overcomplete}.
See \citet{ran2017parameter} for additional discussion and examples.

\paragraph{Auxiliary information and iVAE}
It is well-known that assuming independence of the latent factors---i.e. $Z_{i}\indep Z_{j}$---is insufficient for identifiability \citep{hyvarinen1999nonlinear}.
Recent work, starting with iVAE, shows identifiability by additionally assuming that a $k$-dimensional auxiliary variable $u$ is observed such that $p(z\given u)$ is conditionally factorial, i.e. $Z_{i}\indep Z_{j}\given U$. This extra information serves to break symmetries in the latent space and is crucial to existing proofs of identifiability.

To make the connection with this work clear, observe that assumption \ref{assm:mixture} is equivalent to assuming that there is an additional hidden state $U\in\{1,\ldots,\ncomp\}$ such that $P(Z=z\given U=j)=p_{j}(z)$ and $P(U=j)=\lambda_{j}$. More generally, $U=(U_1,\ldots,U_k)$ may be multivariate. In this way, a direct parallel between our work and previous work is evident, with several crucial caveats:
\begin{itemize}
\item We do \emph{not} assume that $U$ is observed---even partially---or known in any way;
\item We allow for the $Z_{i}$ to be arbtrarily dependent even after conditioning on $U$, and this dependence need not be known;
\item We do not even require the number of states $\ncomp$ to be known, and we do not require any bounds on $\ncomp$ (e.g. iVAE requires  $\ncomp\geq m+1$). \item In the case where $U$ is multivariate (i.e $k:=\dim(U)>1$), we do not require the number of latent dimensions $k$, the state spaces, or their dependencies to be known.
\item The original iVAE paper only proves identifiability of $f$ up to a nonlinear transformation (see Lemma~\ref{obs:ivae:equiv} in Appendix~\ref{sec:ivae:equiv} for details). By contrast, we will show identifiability of $f$ up to an affine transformation, without knowing $U$. 
\end{itemize}
In order to break the symmetry without knowing anything about $U$ or its dependencies, we develop fundamentally new insights into nonparametric identifiability of latent variable models.

\section{Main results}
\label{sec:main}

For any positive integer $d$, let $[d]=\{1,\ldots,d\}$.
By \ref{assm:mixture}, we can write the model \eqref{eq:defn:nica} as follows.
Let $U=(U_1,\ldots,U_k)\in[d_1]\times\cdots[d_k]$ where $d_i:=\dim(U_i)$ and $k:=\dim(U)$; we allow $U$ to be multivariate ($k>1$) and dependent---i.e., we do not assume that the $U_i$ are marginally independent.
It follows trivially from \ref{assm:mixture} that $P(U_1=u_1,\ldots,U_k=u_k)\in\{\lambda_1,\ldots,\lambda_{\ncomp}\}$
and $\ncomp=\prod_id_i$, where we recall that $\ncomp$ is the \emph{unknown} number of mixture components in $P(Z)$. 
Denote the marginal distribution of $U$, which depends on $\lambda_j$, by $P_{\lambda}$.
The variables $(U,Z)$ are unobserved and encode the underlying latent structure:
\begin{align}
\label{eq:gmm-ivae}
\left.
\begin{aligned}
U=u
&\sim P_{\lambda}(U=u) \\
[Z\given U=u]
&\sim N(\mu_u, \Sigma_u) \\
[X\given Z=z]
&\sim f(z) + \eps,\quad \eps\sim\normalN(0,\sigma^{2})
\end{aligned}
\right\}
\implies
U\to Z\to X.
\end{align}
Here, $P_{\lambda}$ is the distribution on $U$ described above. Our goal is to identify the latent distribution $P(U,Z)$ and/or the nonlinear decoder $f$ from the marginal distribution $P(X)$ induced by \eqref{eq:gmm-ivae}. We will additionally assume throughout that $m\le n$; see Remark~\ref{remark:overcomplete} for a discussion of the overcomplete case with $m>n$.

Our main results (Theorems~\ref{thm:main:1d}-\ref{thm:main:highd}) provide a hierarchy of progressively stronger conditions under which $P(U,Z)$, $f$, or both, can be identified in progressively stronger ways. The idea is to illustrate explicitly what conditions are sufficient to identify the latent structure up to affine equivalence (the weakest notion of identifiability we consider), equivalence up to permutation, scaling, and translation, and permutation equivalence (the strongest notion of identifiability we consider, and the strongest possible for any latent variable model).

We defer the statement of the main results to Section~\ref{sec:main:results}, after the main conditions have been described. As a preview to the main results, we first present the following corollary:
\begin{cor}\label{cor:relu}
Suppose $k=\dim(U)=1$ , $J\ge 1$, $(U,Z)$ are unobserved, and $X$ is observed. 
(a) If $f$ is an invertible ReLU network, then both $P(U,Z)$ and $f$ are identifiable up to an affine transformation. (b) If $f$ 
is only weakly injective (cf. \ref{assm:invib}), then $P(U,Z)$ is still identifiable up to an affine transformation.
\end{cor}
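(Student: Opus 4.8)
The plan is to treat this as a specialization of the hierarchy (Theorems~\ref{thm:main:1d}--\ref{thm:main:highd}) to the case $k=1$, where $P(U,Z)$ reduces to a plain GMM, and to argue in three stages: strip off the Gaussian observation noise, recover the piecewise-affine geometry together with the mixture components on a single linear region, and then reconcile any two generating pairs by a single global affine map.

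\emph{Deconvolution.} Suppose $(P,f)$ and $(P',f')$ induce the same law of $X$, and write $\nu=f_{\sharp}P$ and $\nu'=f'_{\sharp}P'$. The observed characteristic function factors as $\widehat{P(X)}(t)=\hat\nu(t)\,e^{-\sigma^{2}\Vert t\Vert^{2}/2}$, and since the Gaussian characteristic function never vanishes, $\nu$ is determined by $P(X)$ once $\sigma$ is known. To pin down $\sigma$ I would exploit that $\nu$, being supported on the piecewise-affine image $f(\R^{m})$, cannot absorb a nontrivial isotropic Gaussian convolution factor, so $\sigma^{2}I$ is the maximal such factor of $P(X)$ and is therefore intrinsic; separating this isotropic factor from the affine-transformed component covariances is one of the delicate points. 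This reduces the claim to identifiability of the noiseless pushforward $\nu=f_{\sharp}P$ up to affine transformation.

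\emph{Component separation and local affine recovery.} Decomposing $\nu=\sum_{j}\lambda_{j}\,f_{\sharp}N(\mu_{j},\Sigma_{j})$ exhibits $\nu$ as a finite mixture of ``warped Gaussians.'' I would first apply the nonparametric mixture identifiability result (contribution (1)) to recover $J$, the weights $\lambda_{j}$, and the individual warped components $f_{\sharp}N(\mu_{j},\Sigma_{j})$ from $\nu$; analyticity of the Gaussian densities (Remark~\ref{rem:gmm}) is what makes the components distinguishable. On the relative interior of a top-dimensional affine piece of $f(\R^{m})$, each warped component restricts to an affine image of a genuine Gaussian, and analyticity extends this restriction to the full Gaussian on that affine subspace. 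Comparing the two pairs on this piece, and invoking classical Gaussian-mixture identifiability together with the mixture-strengthened nonlinear ICA and affine-recovery steps (contributions (2)--(3), specialized to $k=1$), yields an invertible affine $h:\R^{m}\to\R^{m}$ with $P'=h_{\sharp}P$ and $f'=f\circ h^{-1}$ on that piece.

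\emph{Globalization and the two cases.} Finally I would propagate the affine relation from one region to the whole map: since $f,f'$ are piecewise affine and their warped components are real-analytic on each region's interior, agreement on an open set forces agreement across adjacent regions, and in case (a) invertibility lets me match the full cell complexes of $f$ and $f'$, giving $f'=f\circ h^{-1}$ globally and hence identifiability of both $f$ and $P(U,Z)$ up to affine. For case (b), weak injectivity (cf. \ref{assm:invib}) guarantees that a full-measure set of latent points is recoverable from $\nu$, which already pins the components $N(\mu_{j},\Sigma_{j})$ and weights $\lambda_{j}$---hence $P(U,Z)$---up to the same affine $h$, even though $f$ need not be reconstructible where it fails to be injective. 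I expect the main obstacle to be this globalization step: showing that one common affine map serves all mixture components and all linear regions simultaneously, rather than a different affine adjustment per component or per region.
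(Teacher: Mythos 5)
Your treatment of part (b) is essentially the paper's own route, and it is correct once the pieces are assembled in the right order: the paper derives Corollary~\ref{cor:relu}(b) directly from Theorem~\ref{thm:main:1d}\ref{thm:main:1d:invib}, whose proof is exactly your skeleton --- deconvolve the noise (treated as standard, with the noise law fixed, following \citealp{khemakhem2020ice}), apply the nonparametric mixture identifiability theorem (Theorem~\ref{thm:main:npmixF2}, via Theorem~\ref{thm:dontneedu}) to obtain a single invertible affine $h$ with $P'=h_{\sharp}P$, and then, since $k=1$, read off $P(U,Z)$ from $P(Z)$ by classical finite-GMM identifiability. Two remarks. First, once you invoke ``contribution (1)'' as a black box, its conclusion already supplies one global affine map matching weights and components simultaneously, so your ``component separation / local affine recovery'' stage is redundant for (b), and the per-component-affine-map worry you raise never arises: the underlying local argument (Theorem~\ref{thm:local-gmm-iden}, the identity theorem for real analytic functions) treats the whole mixture at once. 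Second, your proposed identification of $\sigma$ as the ``maximal isotropic Gaussian factor'' is not sound when $m=n$: e.g.\ $\normalN(0,I)=\normalN(0,(1-t)I)\star\normalN(0,tI)$ for every $t\in[0,1)$, so no maximal factor is attained; the paper instead fixes the noise distribution, as is standard.

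The genuine gap is in part (a), precisely at the step you yourself flag as the main obstacle. Your mechanism --- ``agreement on an open set forces agreement across adjacent regions'' by analyticity --- fails for piecewise affine maps: such maps are not analytic, and knowing $f$ on one linear region constrains it in no way on neighboring regions. The paper's Example~\ref{ex:invertible-ib-fail} makes this failure concrete: two continuous piecewise affine maps $f\neq g$, not related by any affine transformation, which agree on the open rays $x\geq 4$ and $x<-4$, push the same GMM to the same distribution, and yet differ on $(-4,4)$. (Those maps are only weakly injective, which is why the corollary's part (a) demands full invertibility, but the example shows your propagation argument cannot be the mechanism.) What actually closes the argument in the paper is a rigidity lemma of a completely different character, Lemma~\ref{lem:affine-aut}: any \emph{continuous, injective} piecewise affine self-map $\phi$ of $\mathbb{R}^m$ with $\phi_{\sharp}Y\sim Y$ for a GMM $Y$ is globally affine. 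Its proof works across a putative crease hyperplane: both affine pieces permute the multiset of component means (hence fix the barycenter) and the multiset of component covariances (hence have determinant of absolute value one), and injectivity --- entering via the preimage-counting Lemma~\ref{lem:preimage-size} --- rules out the reflection case, forcing the two pieces to coincide. Given this, part (a) follows in two lines: Theorem~\ref{thm:main:npmixF2} gives $h_0$ with $h_0(Y)=Y'$, the composite $h_0^{-1}\circ g^{-1}\circ f$ preserves the law of $Y$ and is therefore affine, so $f=g\circ(\text{affine})$ (Theorem~\ref{thm:dontneedu-strong}). Without this lemma or a substitute for it, your globalization step does not go through; note also that continuity of $f$ (automatic for ReLU networks) is essential to the crease argument, yet it plays no role in your sketch.
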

For comparison, Corollary~\ref{cor:relu} already strengthens existing results, since $U$ is not required to be known and we are able to identify $f$. In fact, the latter answers an open question raised by \citet{wang2021posterior}.
What's more, this is just the \emph{weakest} result implied by our main results: Under stronger assumptions on the latent structure, the affine equivalence presented above can be strengthened further.

Taken together, the results in this section have the following concrete implication for practitioners: For stably training variational autoencoders, there is now compelling justification to work with a GMM prior and deep ReLU/Leaky-ReLU networks. As we saw above, this is commonly done in practice already.

\subsection{Possible assumptions on $f$}
\label{sec:main:f}

To distinguish cases where $f$ is and is not identifiable, we require the following technical definition. Recall that for sets $A, B$, $f^{-1}(A)=\{x:f(x)\in A\}$ and $f(B) = \{f(x): x\in B\}$. 
\begin{definition}\label{def:invertible}
Let $m\leq n$ (see Remark~\ref{remark:overcomplete}) and $f:\mathbb{R}^m\rightarrow \mathbb{R}^n$. 
\begin{enumerate}[label=(F\arabic*)]
\setcounter{enumi}{1}
    \item\label{assm:invib}  We say that $f$ is \emph{weakly injective}   if (i) there exists $x_0\in \mathbb{R}^n$ and $\delta>0$ s.t. $|f^{-1}(\{x\})|=1$ for every $x\in B(x_0, \delta)\cap f(\mathbb{R}^m)$, and (ii) $\{x\in \mathbb{R}^n\, :\, |f^{-1}(\{x\})| = \infty\}\subseteq f(\mathbb{R}^m)$ has measure zero with respect to the Lebesgue measure on $f(\mathbb{R}^m)$.
\item\label{assm:invae} We say that $f$ is \emph{observably injective} if $\{x\in \mathbb{R}^n\, :\, |f^{-1}(\{x\})|>1\}\subseteq f(\mathbb{R}^m)$ has measure zero with respect to the Lebesgue measure on $f(\mathbb{R}^m)$. In other words, $f$ is injective for almost every $x$ in its image $f(\mathbb{R}^m)$ (i.e. almost every ``observable'' $x$). 
\item\label{assm:inv}  We say that $f$ is \emph{injective} if $|f^{-1}(\{x\})|=1$ for every $x\in f(\mathbb{R}^m)$.
\end{enumerate}
\end{definition}

\begin{remark}
For piecewise affine functions assumption \ref{assm:invib} is weaker than assumption \ref{assm:invae}, which in turn is weaker than \ref{assm:inv}.  Therefore, for piecewise affine functions we have the chain of implications:
\[ \text{\ref{assm:inv}} \implies \text{\ref{assm:invae}} \implies \text{\ref{assm:invib}}. \]
In the sequel, we mostly focus on \ref{assm:invib} and \ref{assm:inv} for simplicity; although we prove results for \ref{assm:invae} in Appendix~\ref{app:assm:invae}. See also Remarks~\ref{remark:relunn-generic},~\ref{rem:A4}.
\end{remark}

\begin{ex}\label{ex:relu-invertible}
In general, a deep ReLU network may be either injective or observably injective, or neither (e.g. $\RELU(-\RELU(x)) = 0$).
For example, although $x\mapsto \RELU(x)$ is not injective, it is observably injective, where $\RELU(x)=\max\{0,x\}$ is the usual rectified linear unit. To see this, note that image of $\RELU$ is the set $\mathbb{R}_{\geq} = \{y\mid y\geq 0\}$, and $\RELU$ has the unique preimage for every $y\in \mathbb{R}_{>} = \{y \mid y>0\}$. Clearly, $(\mathbb{R}_{\geq}\setminus \mathbb{R}_{>}) = \{0\}$ has measure zero inside $\mathbb{R}_{\geq}$.

At the same time, $x\mapsto 0$ and $x\mapsto |x|$ are not even weakly injective.
\end{ex}

\begin{remark}\label{remark:relunn-generic}
In Appendix~\ref{sec:relu-inv}, we show that ReLU networks or Leaky ReLU networks are generically observably injective (and hence also weakly injective) under simple assumptions on their architecture.
\end{remark}

\begin{remark}\label{remark:overcomplete}
We restrict attention to the case $m\leq n$, which is a standard assumption, as it is common to think of a latent space to be a low-dimensional representation of the observed space. In the overcomplete case, i.e. when $m>n$, we believe that identifiability is unlikely unless stronger assumptions are made, or weaker notions of identifiability are considered. To see this, consider the projection $f(x, y) = x$, which is trivially affine. Then we can arbitrarily transform the $y$-coordinate without changing $P$, i.e. 
$(f\circ g)_{\sharp}P=f_{\sharp}P$, where $g(x,y)=(x,h(y))$ for any $h$.
As an example of identifiability in the overcomplete regime under stronger assumptions, when the auxiliary variable $u$ is known, 
\cite{khemakhem2020ice} show that the feature maps $f$ and $g$ in conditional energy-based models (for which $p(x\mid u) \propto \exp(f(x)^Tg(u))$) can be identified up to an affine transformation. 
\end{remark}

\subsection{Possible assumptions on $Z$}
\label{sec:main:Z}

Our weakest result requires no additional assumptions on $Z$ beyond \ref{assm:mixture}; see Corollary~\ref{cor:relu}. Under stronger assumptions, more can be concluded. As with the previous section, the assumptions presented here are not necessary, but may be imposed in order to extract stronger results.

The first condition is a mild condition that allows us to strengthen affine identifiability:
\begin{enumerate}[label=(P\arabic*)]
\setcounter{enumi}{1}
    \item\label{assm:latent} $Z_i\indep Z_j \mid U$ for all $i\neq j$ and there exist a pair of states $U = u_1$ and $U = u_2$ such that all $(\left(\Sigma_{u_1}\right)_{tt}/\left(\Sigma_{u_2}\right)_{tt}\mid t\in [m])$ are distinct. (Note that this implies $J\geq 2$).
\end{enumerate}
The second condition is more technical, and is only necessary if $k>1$ and we wish to identify $P(U)$ in addition to $P(Z)$. 
In fact, not only will we recover $P(U)$, but also the (unknown) number of hidden variables (i.e. $k$) and their state spaces (i.e. $d_j$). Note that $P(U)$ is not needed to sample from \eqref{eq:defn:nica}, as long as we have $P(Z)$. Before introducing this condition, we need a preliminary definition.
\begin{definition}\label{def:nbhd}
Let $U_{-i}$ denote $\{U_j : j\neq i\}$. We define $\nbhd(U_i) = [m]\setminus \{t: Z_t\indep U_i\mid U_{-i} \}$ and $\nbhd(Z_i)=\{t : Z_i\in\nbhd(U_t)\}$. For a subset $Z'\subset Z$, $\nbhd(Z')=\cup_{Z_i\in Z'}\nbhd(Z_i)$.
\end{definition}
The neighborhood $\nbhd(U_i)$ collects the variables $Z_t$ that depend on $U_i$ directly.

\begin{enumerate}[label=(P\arabic*)]
\setcounter{enumi}{2}
\item\label{assm:latentdag} The following conditions hold:
    \begin{enumerate}
        \item For all $Z'\subset Z$ and $u_1\ne u_2$, $P(Z'\given \nbhd(Z') = u_1)\ne P(Z'\given\nbhd(Z') = u_2)$;
\item If $P(U',Z,X)=P(U,Z,X)$, then $\dim(U')\le\dim(U)$;  and
        \item For any $U_i\ne U_j$ the set $\nbhd(U_i)$ is not a subset of $\nbhd(U_j)$.
    \end{enumerate}
\end{enumerate}
Condition~\ref{assm:latentdag} 
is a ``maximality'' condition that is adapted from \citet{kivva2021learning}: We are interested in identifying the most complex latent structure with the most number of hidden variables. This is in fact necessary since we can always merge two (or more) hidden variables into a single hidden variable without changing the joint distribution. 
Moreover, if two distinct hidden variables $U_i\ne U_j$ have the same neighborhood (or one is a subset of another), then it is known that $P(U)$ cannot be identified \citep{pearl1992statistical,evans2016graphs,kivva2021learning}. Evidently, if we seek to learn $P(U)$ in addition to $P(Z)$, then this must be avoided.
Finally, as the proof will indicate, this condition is slightly stronger than what is needed (see Remark~\ref{rem:maxim-hierarchy} for details).

\begin{remark}
\label{rem:anchor}
Condition~\ref{assm:latentdag} should be contrasted with the stronger ``anchor words'' assumption that has appeared in prior work \citep{arora2012learning, arora2013practical,moran2021identifiable}: In fact, the existence of an anchor word for each $U_j$ automatically implies that $\nbhd(U_i)$ is not a subset of $\nbhd(U_j)$ for  $i\ne j$. Thus, anchor words are a sufficient but not necessary condition for identifiability, whereas Condition~\ref{assm:latentdag} is indeed necessary as described above.
\end{remark}

More details and discussion on these assumptions can be found in Appendix~\ref{sec:latent-dag-setup}.

\subsection{Main identifiability results}
\label{sec:main:results}

When $\dim(U)=1$, there is no additional structure in $U$ to learn, and so the setting simplifies considerably. We begin with this special case before considering the case of general multivariate~$U$.
\begin{thm}\label{thm:main:1d}
Assume $\dim(U)=1$. Under \ref{assm:mixture}-\ref{assm:relu}, we have the following:
\begin{enumerate}[label=(\alph*)]
\item\label{thm:main:1d:invib} \ref{assm:invib}$\implies P(U,Z)$ is identifiable from $P(X)$ up to an affine transformation of $Z$.
\item\label{thm:main:1d:latent} \ref{assm:invib}+\ref{assm:latent}$\implies P(U,Z)$ is identifiable from $P(X)$ up to permutation, scaling, and/or translation of $Z$.
\item\label{thm:main:1d:inv} In either (a) or (b), if additionally \ref{assm:inv} holds and $f$ is continuous, then $f$ is also identifiable from $P(X)$ up to an affine transformation.
\end{enumerate}
\end{thm}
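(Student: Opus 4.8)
The plan is to prove the three parts as a hierarchy, each building on the previous, reducing everything to an identifiability statement about Gaussian mixtures viewed through a local affine chart. First I would strip away the noise: since $\eps$ is independent Gaussian, $P(X)=(f_\sharp P)\ast N(0,\sigma^2 I)$, and because $f_\sharp P$ is carried by the $m$-dimensional piecewise-affine set $f(\R^m)$ with $m\le n$, a standard deconvolution argument recovers both $\sigma$ and $Q:=f_\sharp P$ from $P(X)$; I therefore work with $Q$ directly. Writing $P,P'$ for the two candidate latent GMMs, part \ref{thm:main:1d:invib} then amounts to: if $Q=f_\sharp P=f'_\sharp P'$ with $f,f'$ piecewise affine and satisfying \ref{assm:invib}, then $P'=h_\sharp P$ for some invertible affine $h$.

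To produce this affine map I would exploit \ref{assm:invib}(i): near the guaranteed injectivity ball $B=B(x_0,\delta)$ I can shrink $B$ so that its preimage lies in a single affine piece of $f$, on which $f(z)=Az+b$ with $A$ of full column rank. On $B$ the measure $Q$ is then supported on the $m$-dimensional affine subspace $V=\operatorname{aff}(A\R^m+b)$ and, restricted to $V$, equals the finite Gaussian mixture $\Phi=\sum_j\lambda_j\,\phi(\cdot;A\mu_j+b,A\Sigma_jA^\top)$. Because a Gaussian-mixture density is real-analytic and finite Gaussian mixtures are identifiable (Teicher; Yakowitz--Spragins), $\Phi$ and the parameter list $\{(\lambda_j,A\mu_j+b,A\Sigma_jA^\top)\}$ are determined by $Q|_B$ and extend to all of $V$. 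Localizing the representation $Q=f'_\sharp P'$ at the same point $x_0$, some affine piece of $f'$ must also map onto a relatively open subset of $V$; pulling $Q$ back through $f'$'s local affine inverse $g'$ and through $f$'s local affine inverse $g$ and composing yields an affine candidate $h=g'\circ g^{-1}$ with $h_\sharp(P|_{\mathrm{loc}})=P'|_{\mathrm{loc}}$, which I then promote to the global identity $h_\sharp P=P'$ by analytic continuation of the two mixture densities. I expect the cross-model alignment here to be the main obstacle: a priori $f$ and $f'$ induce charts into possibly different subspaces, and several affine pieces of $f'$ may map into $B$ at once, so $Q|_B$ could present as a sum of truncated affine images rather than a single clean mixture. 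This is exactly where the new nonparametric-mixture identifiability result is needed---real-analyticity forces the truncated pieces to agree with the single analytic mixture $\Phi$, rules out spurious component matchings, and pins down one affine $h$. Finally, uniqueness of the Gaussian-mixture decomposition upgrades $P'=h_\sharp P$ to identifiability of the joint $P(U,Z)$ up to an affine transformation of $Z$ and relabeling of the single coordinate $U$, proving \ref{thm:main:1d:invib}.

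For \ref{thm:main:1d:latent} I would feed the diagonal structure of \ref{assm:latent} into the affine map $h(z)=Az+b$ from part \ref{thm:main:1d:invib}. Since $P'$ also satisfies \ref{assm:latent}, each pushed covariance $A\Sigma_uA^\top$ must be diagonal. Applying this at the two states $u_1,u_2$ and eliminating $A^\top$ shows that $A$ conjugates the diagonal matrix $\Sigma_{u_1}\Sigma_{u_2}^{-1}$---whose diagonal entries are precisely the variance ratios---to another diagonal matrix. Because \ref{assm:latent} makes these ratios distinct, the eigenvectors of $\Sigma_{u_1}\Sigma_{u_2}^{-1}$ are the standard basis vectors, unique up to sign and order, forcing the columns of $A$ to be scalar multiples of permuted basis vectors; hence $A$ is a permutation times a diagonal scaling and $h$ is a permutation, scaling and translation.

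For \ref{thm:main:1d:inv}, starting from $P'=(h_0)_\sharp P$ with $h_0$ the affine map of part \ref{thm:main:1d:invib}, I set $\tilde f=f'\circ h_0$, so that $\tilde f_\sharp P=f_\sharp P$ with $f,\tilde f$ injective, continuous and piecewise affine. Then $\psi:=\tilde f^{-1}\circ f$ is a piecewise-affine homeomorphism of $\R^m$ satisfying $\psi_\sharp P=P$. On each affine piece the change-of-variables identity equates $p$ with an affinely reparametrized copy of itself on an open set, and real-analyticity of the Gaussian-mixture density extends each such identity to all of $\R^m$; thus every piece's affine map is a global $P$-symmetry. An orientation/injectivity argument at the shared faces---a reflection across a face would fold the map and destroy injectivity---then forces all pieces to coincide, so $\psi$ is globally affine. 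Absorbing it by setting $h:=h_0\circ\psi$ gives simultaneously $P'=h_\sharp P$ (using $\psi_\sharp P=P$) and $f'=f\circ h^{-1}$, which is identifiability of the pair $(P,f)$ up to an affine transformation. I expect the analytic-continuation-plus-orientation step here, together with the cross-chart alignment in \ref{thm:main:1d:invib}, to be the two places demanding the most care.
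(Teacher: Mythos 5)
Your high-level route is the same as the paper's: deconvolve the Gaussian noise, localize to a ball where $f$ is a single full-rank affine chart, transport the problem to the competing model, invoke real-analyticity plus finite-GMM identifiability (Theorem~\ref{thm:local-gmm-iden}) to globalize a local affine match, then specialize: for (b) use the diagonal covariances and distinct variance ratios, and for (c) reduce to showing that a continuous, injective, piecewise-affine self-map preserving a GMM is globally affine. Your part (c) is essentially the paper's own proof (Lemma~\ref{lem:affine-aut} together with Theorem~\ref{thm:identif-inv-affine-strong}), and your part (b) is correct and is in fact a cleaner, non-constructive alternative to the paper's SVD-based recovery argument (Theorem~\ref{thm:gmm-ICA}): from $A\Sigma_{u_1}A^T$ and $A\Sigma_{u_2}A^T$ both being diagonal you get that $A$ conjugates $\Sigma_{u_1}\Sigma_{u_2}^{-1}$ to a diagonal matrix with the same distinct eigenvalues, forcing $A$ to be a permutation times an invertible diagonal matrix.

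The genuine gap is in part (a), at exactly the step you flag as the main obstacle. You must produce a point near which \emph{both} $f$ and $f'$ are injective single affine charts; otherwise ``$f'$'s local affine inverse $g'$'' does not exist and $h = g'\circ g^{-1}$ is undefined. Your proposed mechanism --- ``real-analyticity forces the truncated pieces to agree with the single analytic mixture $\Phi$'' --- does not accomplish this. If two full-rank affine pieces of $f'$ fold onto the same neighborhood of $x_0$ in $V$ (as in the paper's Remark~\ref{rem:non-invert-f-example}, with $x\mapsto|x|$), then near a point generic for $f'$ the density of $Q$ computed from the $f'$ side is a \emph{sum} of two everywhere-positive, globally analytic, affinely-transformed GMM densities; such a sum is itself analytic, so analyticity alone yields no contradiction with its equalling $\Phi$. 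What kills this configuration is mass counting: each overlapping piece, analytically continued, is a full affine image of a probability measure, so the analytic continuation of the local density integrates to the number $s$ of local preimages; computing the same quantity from the $f$ side gives $1$, hence $s=1$. This is precisely the paper's Lemma~\ref{lem:preimage-size} (the preimage count equals the limit of integrals of the analytic continuation of the local density, a functional of the observed distribution alone), applied at a point generic for both maps, whose existence in turn requires Lemma~\ref{obs:oa-generic} (almost every point of $f(\R^m)=f'(\R^m)$ is generic for both). Without this counting step --- or an equivalent argument via linear independence of Gaussian densities and comparison of total weights --- your part (a) does not go through, and parts (b) and (c) inherit the gap, since both build on the affine map $h$ that (a) is supposed to produce.
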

The next result generalizes Theorem~\ref{thm:main:1d} to arbitrary (possibly multivariate) discrete $U$.
This is an especially challenging case: Unlike previous work such as iVAE that assumes $U$ (and hence its structure) is known, we do not assume anything about $U$ is known. Thus, everything about $U$ must be reconstructed based on $P(X)$ alone, hence the need for \ref{assm:latentdag} to identify $P(U)$ below.
\begin{thm}\label{thm:main:highd}
Under \ref{assm:mixture}-\ref{assm:relu}, we have the following:
\begin{enumerate}[label=(\alph*)]
\item\label{thm:main:highd:invib} \ref{assm:invib}$\implies P(Z)$ is identifiable from $P(X)$ up to an affine transformation.
\item\label{thm:main:highd:latent} \ref{assm:invib}+\ref{assm:latent}$\implies P(Z)$ is identifiable from $P(X)$ up to permutation, scaling, and/or translation.
\item\label{thm:main:highd:latentdag} \ref{assm:invib}+\ref{assm:latent}+\ref{assm:latentdag}$\implies (k,d_1,\ldots,d_k,P(U))$ are identifiable from $P(X)$ up to a permutation of $U$, and $P(Z)$ is identifiable up to permutation, scaling, and/or translation.
\item\label{thm:main:highd:inv} In any of (a), (b), or (c), if additionally \ref{assm:inv} holds and $f$ is continuous, then $f$ is also identifiable from $P(X)$ up to an affine transformation.
\end{enumerate}
\end{thm}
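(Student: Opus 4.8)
The plan is to establish the four parts in order, with (a)--(b) pinning down the marginal $P(Z)$, (c) reconstructing the latent product structure, and (d) recovering the decoder $f$; each part reuses the local analysis set up for (a).

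First I would remove the additive noise. Since $\eps\sim\normalN(0,\sigma^2 I)$, the law $P(X)$ is the convolution of the pushforward $f_\sharp P(Z)$ with a fixed isotropic Gaussian, so a standard Gaussian-deconvolution argument (dividing characteristic functions and reading $\sigma$ from the tails) identifies $\sigma$ and recovers $f_\sharp P(Z)$, reducing everything to the noiseless identity $f_\sharp P = f'_\sharp P'$. The core of (a) is then a geometric lemma on pushforwards of GMMs through piecewise-affine weakly injective maps. By \ref{assm:relu}, $\mathbb{R}^m$ decomposes into polyhedral cells on each of which $f(z)=A_\ell z+b_\ell$ is affine; by \ref{assm:invib} there is a ball $B(x_0,\delta)$ on which $f$ is injective, and at a generic $x_0$ in the relative interior of one cell's image the inverse is a single affine map onto an $m$-dimensional affine subspace $S$. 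Because affine images of Gaussians are Gaussian, the restriction of $f_\sharp P$ to a neighborhood of $x_0$ in $S$ is exactly the nondegenerate GMM $\sum_j \lambda_j\,\phi(\cdot\,;A_\ell\mu_j+b_\ell, A_\ell\Sigma_j A_\ell^\top)$. Matching this with the analogous local expansion of $f'_\sharp P'$, the supports force the subspaces to agree, and equality of two analytic GMM densities on an open subset of $S$ extends to all of $S$ by analytic continuation; the identifiability of finite Gaussian mixtures \citep{teicher1963identifiability} then pairs the components, so the affine pushforward of $P$ equals that of $P'$. Composing the two injective affine maps produces an invertible affine $h$ with $P'=h_\sharp P$, which is (a).

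For (b) I would upgrade ``affine'' to ``permutation, scaling, and translation'' using \ref{assm:latent}. Conditional independence makes every $\Sigma_u$ diagonal, and for $P'=h_\sharp P$ with $h(z)=Az+b$ to lie in the model class we need $A\Sigma_u A^\top$ diagonal for each state; applying this to the pair $u_1,u_2$ of \ref{assm:latent}, the map $A$ conjugates the diagonal matrix $\Sigma_{u_1}\Sigma_{u_2}^{-1}$ --- whose diagonal entries are the prescribed distinct ratios --- to another diagonal matrix, so its simple spectrum forces $A$ to carry coordinate axes to coordinate axes, i.e.\ $A$ is a permutation composed with a diagonal scaling. Part (c) then starts from the unordered list of components $\{(\lambda_u,\mu_u,\Sigma_u)\}$ recovered up to this coordinatewise transformation; since permutation/scaling/translation preserves conditional independence and the neighborhoods $\nbhd(U_i)$, I would feed these parameters into the structural-recovery machinery of \citet{kivva2021learning}, where \ref{assm:latentdag}(a) separates distinct states, the maximality clause (b) selects the finest factorization, and the non-nestedness clause (c) identifies each $U_i$, hence $k$, the $d_i$, and $P(U)$ up to relabeling of the hidden variables and their states. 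Finally, for (d), $f_\sharp P=(f'\circ h)_\sharp P$ with $f$ and $g:=f'\circ h$ both continuous, injective (by \ref{assm:inv}) and piecewise affine makes $\psi:=g^{-1}\circ f$ a $P$-preserving homeomorphism of $\mathbb{R}^m$; the component-matching of (a) forces the local affine parts of $f$ and $g$ to coincide on the injectivity ball, so $\psi=\id$ there, and because an injective continuous piecewise-affine map cannot switch to a different affine branch across a cell boundary without folding, $\psi=\id$ propagates to all of $\mathbb{R}^m$, giving $f=f'\circ h$.

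The main obstacle is the local reduction underpinning (a): weak injectivity supplies only one injectivity ball for each of $f$ and $f'$ separately, so the delicate points are to exhibit a single regular $x_0$ at which both maps are simultaneously injective and affine, and to rule out that several affine cells of $f'$ superimpose on the same subspace $S$ and mimic a single injective GMM there. Controlling these overlaps is exactly the nonparametric-mixture identifiability statement that carries the whole argument; by contrast the diagonalization in (b), the invocation of \citet{kivva2021learning} in (c) (once one checks the reduced transformation preserves the neighborhood structure), and the homeomorphism propagation in (d) are comparatively routine once (a) is secured.
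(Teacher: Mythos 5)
Your overall architecture matches the paper's: deconvolve the noise, prove (a) by local affine analysis plus analytic continuation and finite-GMM identifiability, get (b) from the distinct-ratio condition, invoke \citet{kivva2021learning} for (c), and reduce (d) to showing that a distribution-preserving piecewise affine self-map is affine. Parts (b) and (c) are essentially sound (your conjugation argument forcing $A$ to be a permutation times a scaling is a clean equivalent of the paper's SVD-based Theorem~\ref{thm:gmm-ICA}, and your use of \ref{assm:latentdag} mirrors Theorem~\ref{thm:ivae-latentdag}). However, there are two genuine gaps. The first is in part (a), and you name it yourself: weak injectivity gives an injectivity ball for $f$ and for $f'$ \emph{separately}, and one must rule out that several affine branches of $f'$ superimpose over the subspace $S$ and jointly mimic a single injective pushforward. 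Deferring this to ``the nonparametric-mixture identifiability statement'' is circular: that statement (Theorem~\ref{thm:main:npmixF2}) is precisely what is being proved, and the superposition issue is its entire content. The paper closes it with a mass-counting device (Lemma~\ref{lem:preimage-size}): at a point generic with respect to a piecewise affine map, the local density of the pushforward extends to an entire analytic function whose total integral equals the \emph{number of preimages}, since each affine branch contributes a GMM of total mass one. Because $f_\sharp P$ and $f'_\sharp P'$ agree as measures, their local analytic continuations agree, so the preimage counts agree; picking a common generic point inside the injectivity ball of $f$ then forces $f'$ to have exactly one preimage there as well (Corollary~\ref{cor:invertible_fg}), and a superposition of $s>1$ branches is excluded because it would carry total mass $s\neq 1$. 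Without this (or an equivalent) argument, (a) does not go through, and (b)--(d) all depend on it.

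The second gap is in part (d), where two claims fail. First, the component matching of (a) does not force $\psi:=g^{-1}\circ f$ to be the identity on the injectivity ball; it only forces $\psi$ to be an affine map there whose pushforward preserves the GMM $P$ (a nontrivial symmetry of $P$ is possible). This is weaker but still sufficient for affine identifiability, so it is reparable. Second, and not reparable as stated, is your propagation mechanism: the claim that ``an injective continuous piecewise-affine map cannot switch to a different affine branch across a cell boundary without folding'' is false. The leaky-ReLU map $\psi(x)=x$ for $x\geq 0$ and $\psi(x)=x/2$ for $x<0$ is injective, continuous, piecewise affine, and switches branches at $0$ without any folding. What rules out branch-switching is not injectivity but measure preservation: in the paper's Lemma~\ref{lem:affine-aut}, the hypothesis $\psi_\sharp P = P$ forces the two branches $A_1,A_2$ adjacent to a boundary hyperplane $L$ to permute the same multisets of component means and covariances, which yields $\det(A_1^{-1}A_2)^2=1$; combined with $A_1^{-1}A_2$ acting as the identity on $L$, the only alternatives are $A_1=A_2$ or a reflection through $L$, and it is only the reflection that injectivity excludes. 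Your argument drops the distributional input at the cell boundaries, so the propagation step has no force as written.
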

Without \ref{assm:latentdag}, \citet{kivva2021learning} have shown that it is not possible to recover the high-dimensional latent state $U$, however, we can still identify the continuous latent state $Z$, which is enough to generate random samples from the model \eqref{eq:defn:nica}. In order to have fine-grained control over the individual variables in $U$, however, it is necessary to assume \ref{assm:latentdag}.

\begin{remark}
\label{rem:A4}
If \ref{assm:inv} is relaxed to \ref{assm:invae} $f$ may not be identifiable up to an affine transformation, but it is ``essentially'' identifiable in the following sense. Let $S = \{x: |f^{-1}(\{x\})|>1\}$. On every connected component of $\mathbb{R}^m\setminus f^{-1}(S)$, $f$ is identifiable up to an affine transformation (which may depend on the connected component). Note, for $f$ defined by a ReLU NN, points of $S$ are atoms of $P(X)$.
\end{remark}

\begin{remark}\label{rem:non-invert-f-example}
If the assumption \ref{assm:invib} that $f$ is weakly injective is removed, then the claim of Theorem~\ref{thm:main:1d} is not true anymore. Consider $g(x) = f(x) = |x|$ and
\begin{equation}
\begin{split}
    & P = \dfrac{1}{3}N(-2, \sigma^2)+\dfrac{1}{3}N(-1, \sigma^2)+\dfrac{1}{3}N(3, \sigma^2) \quad \text{and} \\
    & P' = \dfrac{1}{3}N(-2, \sigma^2)+\dfrac{1}{3}N(1, \sigma^2)+\dfrac{1}{3}N(3, \sigma^2).
\end{split}
\end{equation}
It is easy to verify that $P$ cannot be transformed into $P'$ by an affine transformation, but $f_{\sharp}P$ and $g_{\sharp}P'$ are equally distributed.
\end{remark}

\smallskip
\begin{remark} In Theorems~\ref{thm:main:1d}\ref{thm:main:1d:invib} and \ref{thm:main:highd}\ref{thm:main:highd:invib}, the identifiability up to an affine transformation is the best possible if no additional assumptions on $Z$ are made (i.e. beyond \ref{assm:mixture}).
Indeed, for an arbitrary invertible affine map $h:\mathbb{R}^m \rightarrow \mathbb{R}^m$, $h(Z)$ has a GMM distribution, $f\circ h^{-1}$ is an invertible piecewise affine map, and $(U, Z, f)$ and $(U, h(Z), f\circ h^{-1})$ in model $\eqref{eq:gmm-ivae}$ generate the same distribution.
\end{remark}

\subsection{Special cases}
\label{sec:main:cases}

Our main results contain some notable special cases that warrant additional discussion.

\paragraph{Classical VAE}
The classical, vanilla VAE \citep{kingma2013auto,rezende2014stochastic} with an isotropic Gaussian prior is equivalent to \eqref{eq:gmm-ivae} with $J = 1$. In this case, $U$ is trivial and the Gaussian distribution $P(Z)$ can be transformed by an affine map to a standard isotropic Gaussian $\normalN(0, I)$. In this case, Theorem~\ref{thm:main:1d}\ref{thm:main:1d:inv} shows that $f$ is identifiable from $P(X)$ up to an orthogonal transformation. In fact, this case can readily be deduced from known results on the identifiability of ReLU networks, e.g. \citet{stock2021embedding}. 

Although the $J=1$ case is already identifiable, there are clear reasons to prefer a clustered latent space: It is natural to model data that has several clusters by a latent space that has similar clusters (e.g. Figure~\ref{fig:pinwheel3}). Although in principle any distribution can be approximated by $f(Z)$ where $Z\sim \normalN(0, I)$ and $f$ is piecewise affine, such $f$ is likely to be extremely complex. At the same time, the same distribution may have a representation with $Z$ being a simple GMM and $f$ being a simple piecewise affine function. Clearly, the latter representation is preferable to the former and can likely be more robustly learned in practice. This is consistent with previous empirical work \citep{dilokthanakul2016deep,falck2021multi,jiang2016variational,johnson2016composing,lee2020meta,li2018learning,willetts2019disentangling}.

\paragraph{Linear ICA}
In classical linear ICA \citep{comon1994}, we observe $X=AZ$, where $Z$ is assumed to have independent components. 
Compared to the general model \eqref{eq:defn:nica}, this corresponds to the special case where $f$ is linear and $\eps=0$.
In our most general setting under \ref{assm:invib} only, our results imply that $P(Z)$ can be recovered up to an affine transformation \emph{without} assuming independent components, which might seem surprising at first. 
This is, however, easily explained: In this case, $X$ is also a GMM, and hence $P(Z)$ can already be trivially recovered up to the affine transformation $z\mapsto Az$. This follows from well-known identifiability results for GMMs \citep{teicher1963identifiability}.
This provides some intuition to how the mixture prior assumption \ref{assm:mixture} helps to achieve identifiability.

\paragraph{Nonlinear ICA}
In classical nonlinear ICA, one assumes the model \eqref{eq:defn:nica} with (a) no assumptions on $f$ and (b) independence assumptions in the latent space. It is well-known that this model is nonidentifiable \citep{hyvarinen1999nonlinear}. Our problem setting is distinguished from the classical nonlinear ICA model via assumptions \ref{assm:mixture}-\ref{assm:relu}. While we do not require the $Z_i$ to be mutually independent, we impose assumptions on the form of $f$. It is precisely this inductive bias that allows us to recover identifiability.
As a result, our identifiability theory does not contradict known results such as the Darmois construction \citep{darmois1951analyse} discussed in \citet{hyvarinen1999nonlinear}.

\subsection{Counterexamples}
\label{sec:main:counterexamples}

A natural question is whether or not the mixture prior \ref{assm:mixture} or the piecewise affine nonlinearity \ref{assm:relu} can be relaxed while still maintaining identifiability. In fact, it is not hard to show this is not possible: If either \ref{assm:mixture} or \ref{assm:relu} is broken, then the model \eqref{eq:defn:nica} becomes nonidentifiable. Of course, this is  entirely expected given known negative results on nonlinear ICA \citep{hyvarinen1999nonlinear}. 

\begin{ex}
If $f$ is allowed to be arbitrary, but \ref{assm:mixture} is still enforced, then \eqref{eq:defn:nica} is no longer identifiable: Pick any two GMMs $P=\sum_{j=1}^{J} \lambda_{j}N(\mu_{j},\Sigma_{j})$ and $P'=\sum_{j=1}^{J'} \lambda_{j}'N(\mu_{j}',\Sigma_{j}')$.
Then we can always find a function $g$ such that $g_{\sharp}P'=f_{\sharp}P$  (e.g. use the inverse CDF transform), and $g\ne f$. 
\end{ex}

\begin{ex}
If $P(Z)$ is allowed to be arbitrary, but \ref{assm:relu} is still enforced, then \eqref{eq:defn:nica} is no longer identifiable: Consider any two arbitrary piecewise affine, injective functions $f,g:\R^m\to \R^m$. Then almost surely the preimages $f^{-1}(\{x\})$ and $g^{-1}(\{x\})$ will not be equivalent up to an affine transformation. In other words, fixing $P(X)$, we can find models $(f,P)$ and $(g,P')$ such that $f_{\sharp} P =P(X)=g_{\sharp}P'$, but $f$ is not equivalent to $g$ (i.e. up to any affine transformation).
\end{ex}

\section{Experiments}
\label{sec:expt}

There has been extensive work already to verify empirically that the model \eqref{eq:defn:nica} under \ref{assm:mixture}-\ref{assm:relu} is identifiable. For example, \citet{willetts2021don} observe that deep generative
models with clustered latent spaces are empirically identifiable, and compared this directly to models that rely on side information, and 
\citet{falck2021multi} show that meaningful latent variables can be learned consistently in a fully unsupervised manner even when $U$ has high-dimensional structure. Moreover, \citet{falck2021multi} indicate that high-dimensional structure is important for improved performance. Beyond these, it is well-known that VAEs with mixture priors such as VaDE \citep{jiang2016variational} achieve competitive performance on many benchmark tasks; see \citet{dilokthanakul2016deep,falck2021multi,johnson2016composing,lee2020meta,li2018learning,willetts2019disentangling,lee2020meta} for additional experiments and verification. Building upon the established success of these methods, we augment these experiments as follows: 1) We use simple examples to verify that the likelihood indeed has a unique minimizer at the ground truth parameters; 2) We train VaDE on (misspecified) simulated toy models; and 3) We measure stability (up to affine transformations) of the learnt latent spaces on real data.
To measure this, we report the Mean Correlation Coefficient \cite[Appendix A.2]{khemakhem2020ice} metric, which is standard, and an $L^2$-based alignment metric (denoted by $\dist_{\Aff, L2}$).
Definitions of these metrics and additional details on the experiments can be found in Appendix~\ref{app:expt}.

\paragraph{Maximum likelihood} We simulated models satisfying \ref{assm:mixture}-\ref{assm:relu} by randomly choosing weights and biases for a single-layer ReLU network and randomly generating a GMM with $J=2$ or 3 components. These models are simple enough that exact computation of the MLE along the likelihood surface is feasible via numerical integration (Figure~\ref{fig:mle}). In all our simulations (50 total), the ground truth was the unique minimizer of the negative log-likelihood, as predicted by the theory. 
These examples also illustrate a small-scale test of misspecification in the theoretical model: We include cases where $J$ is misspecified and $f$ fails to satisfy \ref{assm:inv}, but the MLE succeeds anyway.

\begin{figure}
\centering
\includegraphics[width = 0.3\textwidth]{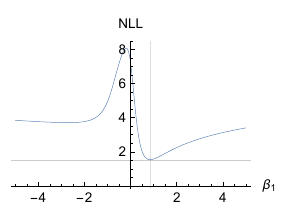}
\includegraphics[width = 0.3\textwidth]{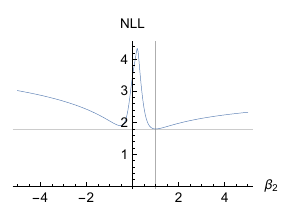}
\includegraphics[width = 0.3\textwidth]{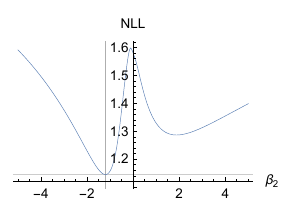}
\caption{Selected examples of the negative log-likelihood for different runs. In each figure, one parameter from a model (e.g. $\beta_j$ is a weight in the neural network defining $f$) is selected, and the value of the negative log-likelihood is visualized as a function of this parameter. Vertical lines indicate the ground truth and (global) minimizer, which always coincide. Three particularly interesting, nonconvex examples are shown here. See Appendix~\ref{app:expt:mle} for details.}\label{fig:mle}
\end{figure}

\paragraph{Simulated data}In our experiments on synthetic datasets we consider, to obtain an experimental evidence of identifiability of model~\eqref{eq:gmm-ivae} we fit VaDE to observed data 5 times (see Figure~\ref{fig:pinwheel3}). Let $Z^{(1)}, Z^{(2)}, \ldots, Z^{(5)}$ be the learned latent spaces. 
For every pair $Z^{(i)}, Z^{(j)}$ we evaluate the MCC and  $\dist_{\Aff, L2}$ loss.
\begin{figure}
\includegraphics[width = \textwidth]{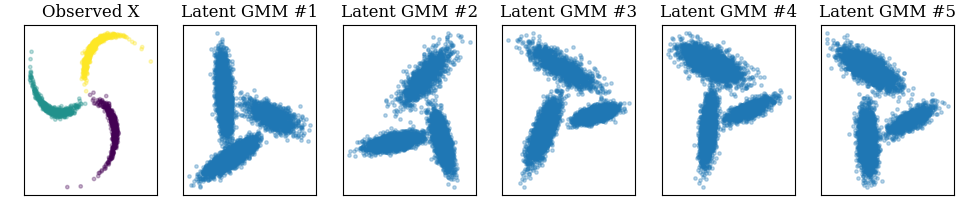}
\caption{Recovered latent spaces for 5 runs of VaDE on pinwheel dataset with 3 clusters}\label{fig:pinwheel3}
\end{figure}
For instance, for the pinwheel dataset with three clusters as in Figure~\ref{fig:pinwheel3}, the average $\dist_{\Aff, L2}(p_1, p_2)$ across 20 pairs $Z^{(i)}, Z^{(j)}$ is 0.113 with standard deviation 0.065. The average weak MCC is 0.87 and the average strong MCC is 1.0. This shows strong evidence of recovery of the latent space up to affine transformations.

\paragraph{Real data} We measure stability of the learnt latent space by training MFCVAE \citep{falck2021multi} on MNIST 10 times with different initializations and then comparing the latent representations learnt. It becomes computationally infeasible to compute $\dist_{\Aff, L2}$ therefore we report only MCC. The strong MCCs are computed to be $0.7$ (ReLU), $0.69$ (LeakyReLU) and the weak MCCs are computed to be $0.91$ (ReLU), $0.94$ (LeakyReLU).
These observations validate the observations first made in \cite{willetts2021don}, who ran extensive experiments on VaDE and iVAE on several large datasets including MNIST, SVHN and CIFAR10.
These strong correlations confirm our theory and are of particular importance to practitioners for whom stability of learning is of the essence.

\section{Conclusion}
\label{sec:conc}

We have proved a general series of results describing a hierarchy of identifiability for deep generative models that are currently used in practice. Our experiments confirm both on exact and approximate simulations that identifiability indeed holds in practice. An obvious direction for future work is to study finite-sample identifiability problems such as sample complexity and robustness (i.e. how many samples are needed to ensure that the global minimizer of the likelihood is reliably close to the ground truth?). Theoretical questions aside, developing a better understanding of the ELBO and its effect on optimization is an important practical question.
For example, an important limitation of the current set of results is that they apply only to the likelihood, which is known to be nonconvex and intractable to optimize (see Figure~\ref{fig:mle} for concrete examples). It is an important open question to use these insights to develop better algorithms and optimization techniques that work on finite-samples with misspecified models (i.e. real data).

More generally, although our assumptions map onto architectures and priors that are widely used in practice, it is important to emphasize the relevant distinction between models and estimators. That is, the architectures used in practice represent the \emph{estimators} used, and may not reflect realistic assumptions on the \emph{model} itself (which is typically misspecified). For example, the piecewise affine assumption may not accurately reflect valid assumptions about real-world problems. Given the lack of purely unsupervised, nonparametric identifiability results in the literature, we view our results as an important technical step towards understanding practical identifiability for deep generative models. Thus, an important future direction is to replace our assumptions with more appropriate modeling assumptions that are relevant for practical applications.

\section{Acknowledgements}

We thank anonymous reviewers for useful comments and suggestions.
G.R. was partially supported
by NSF grants CCF-1816372 and CCF-200892.
B.A. was supported by NSF IIS-1956330, NIH R01GM140467, and the Robert H. Topel Faculty Research Fund at the University of Chicago Booth School of Business. P.R. was supported by ONR via N000141812861, and NSF via IIS-1909816, IIS-1955532, IIS-2211907.

\bibliography{arxiv}
\bibliographystyle{abbrvnat}

\appendix

\section{Detailed comparisons}
\label{sec:main:compare}

Since the original iVAE paper \citep{khemakhem2020variational}, there have been many generalizations and extensions proposed. We pause here to provide a more detailed comparison of our results against this developing literature. For a comparison against iVAE, see Section~\ref{sec:prelim}.

We first discuss related work that assumes auxiliary information is available (i.e. $U$ is known), then discuss more recent work that does not assume any auxiliary information; the ensuing comparisons are then presented in alphabetical order.

\noindent
\paragraph{Assuming auxiliary information is available.}
\begin{enumerate}
    \item \citet{halva2020hidden} achieves identifiability in the fully unsupervised regime for the model in which the latent state is defined by a  Hidden Markov Model (HMM). 
The proof of identifiability in \citet{halva2020hidden} invokes \citet{gassiat2016inference} to essentially recover the HMM transition matrix and the auxiliary variable $U$ from $X$, reducing the problem to \citet{khemakhem2020variational}. Our Theorem~\ref{thm:dontneedu} shows that identifiability in fully unsupervised regime is possible even without additional structure given here by the time-dependency according to
    Markov dynamics.

    \item \citet{khemakhem2020ice} extend \citet{khemakhem2020variational} by observing that the conditional independence $Z_i \indep Z_j \mid U$ is not required for identifiability, so they propose a more general IMCA framework for conditional energy-based models. However, identifiability in \citet{khemakhem2020ice} still critically relies on observing an auxiliary variable (in their setting, this is a dependent variable $Y$). Our Theorem~\ref{thm:dontneedu} achieves same type of identifiability as \citet{khemakhem2020ice} (up to affine transformation) without relying on conditional independence or an auxiliary variable.

    \item \citet{sorrenson2019disentanglement} extends the iVAE identifiability theory of \citet{khemakhem2020variational} by showing that a stronger notion of identifiability can be achieved if $Z$ is distributed according to factorial GMM (instead of a general exponential family as in \citealp{khemakhem2020variational}). 
More specifically, given the auxiliary information $U$, they show that $Z$ can be recovered up to permutation and scaling of the variables $Z_i$. By contrast, in Theorem~\ref{thm:ivae-up-to-perm}, we show that under similar assumptions $Z_i$ are identifiable up to permutation and scaling and importantly, we do this only from $X$, without using $U$ in any way. 
We also do not require the GMM to be factorial.
    Finally, our proof technique is different: While \citet{sorrenson2019disentanglement} relies on \citet{khemakhem2020variational} (and hence, for instance, require $\ncomp\geq m+1$), our proof is independent of \citet{khemakhem2020variational}.

    \item \citet{yang2021nonlinear} studies identifiability of the model~\eqref{eq:gmm-ivae} under the assumption that $f$ is volume preserving and $Z$ comes from a conditionally factorial exponential family, similar to iVAE. They prove that if $U$ is known, \ref{assm:latent} holds, and $f$ is twice differentiable, then $Z$ is identifiable up to permutation and non-linear functions applied to each $Z_i$ (i.e., $Z_i = h_i(Z_{\tau(i)}$). If additionally $Z$ is a GMM, then $Z$ can be recovered up to permutation, scaling, and translation.
In comparison, we do not require $U$ to be known, and we do not require $f$ to be volume preserving or even differentiable everywhere. We show that under the same assumption \ref{assm:latent} the latent variables $Z_i$ can be recovered up to permutation, scaling, and translation if $f$ is only assumed to be piecewise affine. Additionally, we show that a weaker notion of identifiability holds if $Z$ is not assumed to be conditionally factorial.

    \item \cite{zimmermann2021contrastive} considers a contrastive model in which samples arrive in pairs, which is a type of weak supervision. Additionally, it is assumed that the latent variables are sampled uniformly from a convex body, and that $f$ is differentiable and injective. By comparison, our model allows for more general non-uniform mixture priors, non-injective and non-smooth $f$, and is fully unsupervised.

\end{enumerate}

\noindent
\paragraph{No auxiliary information.}
\begin{enumerate}

    \item \citet{falck2021multi} propose a novel Multifacet VAE (MFCVAE) model for unsupervised deep clustering. Their model has the following form
    \begin{equation}\label{eq:multifacet}
     p(x, z, u) = p(x|z)\prod_{j=1}^{k}p(z_j|u_j)p(u_j), \quad z_j|u_j \sim \normalN(\mu_{u_j}, \Sigma_{u_j})
    \end{equation}
    Through empirical experiments, \citet{falck2021multi} emphasizes the importance of high-dimentional structure of $U$ and shows how it results in improved clustering performance. The key idea is that while the number of meaningful clusters in the data may be very large, there may be meaningful individual categorical variables $U_i$ (``facets'') with a much smaller number of states, which may be easier to learn. In this way, by simultaneously performing clustering for each ``facet" $U_i$ one can learn meaningful fine-grained clusters in the data. Note that $k$ binary variables $U_i$ result in $\ncomp = 2^{k}$ fine-grained clusters in the data.   

    Compared to our work, \citet{falck2021multi} is focused on practical implementation details, and lacks a formal identifiability theory.
    In fact, our results provide precisely such a formal identifiability theory in a more general setting. If $p(x|z)$ is modeled by ReLU/leaky-ReLU NN, MFCVAE is a special case of our model~\eqref{eq:gmm-ivae} with high-dimensional $U$.  More specifically, the MFCVAE model \eqref{eq:multifacet} restricts our model \eqref{eq:gmm-ivae} to the case when $u_i$ are independent and $\nbhd(U_i) = \{i\}$. In particular, it satisfies assumption \ref{assm:latentdag}. Therefore, Theorem~\ref{thm:main:highd} implies that for MFCVAE with diagonal covariances $\Sigma_{u_j}$, $\dim(U), \dim(U_j)$, $P(U)$ are identifiable from $P(X)$ up to a permutation of $U$, and $P(Z)$ is identifiable up to permutation, scaling, and/or translation.

    \item \citet{kivva2021learning} establishes the identifiability of latent representations for non-parametric measurement models $U\rightarrow X$. Their result crucially relies on the fact that observed variables are conditionally independent $X_i\indep X_j\mid U$. Our Theorem~\ref{thm:ivae-latentdag} significantly generalizes this result, by showing the same guarantees for the model~\eqref{eq:gmm-ivae} that allows arbitrarily complex dependencies between the observed variables $X$.

    \item \citet{moran2021identifiable} propose a sparse VAE and prove that the latent space of this model is identifiable. Similar to \citet{wang2021posterior}, identifiability of $f$ is not addressed. Their identifiability results also assume an anchor feature assumption, which we do not require. Even our strongest assumption \ref{assm:latentdag} is weaker compared to the anchor feature assumption (see Remark~\ref{rem:anchor}). Moreover, we do not require any sparsity assumptions.

    \item \citet{wang2021posterior} propose LIDVAE as a way to identify the latent space of a VAE without auxiliary information, however, their approach only guararantees identifiability of $P(Z)$, and does not address $f$ (this is acknowledged by the authors in their discussion as an open question).
By restricting $f$ to be a Brenier map, they guarantee that the likelihood is injective, which leads to identifiability of $P(Z)$. Compared to \cite{wang2021posterior} our work restricts $f$ in a different way (i.e. by an injective ReLU network), which matches common practice. Moreover, we show that both $f$ and the multivariate $U$ structure (i.e. in addition to $P(Z)$) are identifiable under mild additional assumptions.

\end{enumerate}

\section{Proof outline}\label{app:proof}

We will prove the main results by breaking the argument into four phases:
\begin{enumerate}
\item (Appendix~\ref{sec:proofdontneedu}) First, we show that if $f$ is weakly injective, then $\pr(Z)$ is identifiable (Theorem~\ref{thm:dontneedu}). The proof involves a novel result on identifiability of a nonparametric mixture model (Theorem~\ref{thm:main:npmixF2}) that may be of independent interest.
\item (Appendix~\ref{sec:identf}) Second, we show that if $f$ is continuous and injective, then $f$ is identifiable up to an affine transformation (Theorem~\ref{thm:dontneedu-strong}). This result strengthens existing identifiability results in nonlinear ICA by exploiting the mixture prior, which is crucial in the sequel.
\item (Appendix~\ref{sec:icagmmproof}) Next, we show that if $Z$ is conditionally factorial GMM, then under mild generic assumptions, the individual variables $Z_i$ can be recovered (up to permutation, scaling and translation) (Theorem~\ref{thm:gmm-ICA}). 
\item (Appendix~\ref{sec:latent-dag-setup}) Finally, since for conditionally factorial $Z$ we are able to recover the individual variables $Z_i$, we show how we can apply the theory developed in~\cite{kivva2021learning} to recover the multivariate discrete latent variable $U$, its dimension, domain sizes of each $U_i$ and $\Pr(U, Z)$ (Theorem~\ref{thm:ivae-latentdag}). Since we can only recover $Z$ up to permutation, scaling and translation, the results from~\cite{kivva2021learning} cannot be applied directly, and we show how to perform this recovery under an unknown affine transformation.
\end{enumerate}

Each of these phases tackles a particular level of the identifiability hierarchy described in the main theorems. A detailed proof outline of each main theorem is provided below; technical proofs can be found in the subsequent appendices.

A notable difference between Theorems~\ref{thm:main:1d} ($k=1$) and~\ref{thm:main:highd} ($k>1$) is the conclusion in the latent space: Theorem~\ref{thm:main:1d} identifies $P(U,Z)$ jointly whereas Theorem~\ref{thm:main:highd} identifies $P(Z)$ and $P(U)$ separately. The reason is simple: 
If $U$ is 1-dimensional, i.e., $k=1$, then $P(U, Z)$ for~\eqref{eq:gmm-ivae} is trivially identifiable from $P(Z)$, since $P(Z)$ is assumed to be a GMM by \ref{assm:mixture}. Indeed, since finite mixture of Gaussians are identifiable, we can recover $P(U = u)$ and $P(Z\mid U = u)$ as mixture weights and corresponding Gaussian components. This extends to more general exponential mixtures as in Remark~\ref{rem:gmm}, see \citet{barndorff1965} for details.

When $k>1$, the situation is considerably more nontrivial, as one also needs to learn the high-dimensional structure of $U$.

\begin{proof}[Proof of Theorem~\ref{thm:main:1d}]
We assume $\varepsilon=0$ without any loss of generality; i.e. it is sufficient to consider the noiseless case.
This follows from a standard deconvolution argument as in \citet{khemakhem2020ice} (see Step I of the proof of Theorem~1).
\begin{enumerate}[label=(\alph*)]
    \item By Theorem~\ref{thm:dontneedu}, $\pr(Z)$ is identifiable up to an affine transformation. Moreover, as described above, we can identify $\pr(U, Z)$ from $\pr(Z)$. 
\item Since $P(Z)$ is identifiable up to an affine transformation by part a), claim follows from Theorem~\ref{thm:gmm-ICA}.
    \item By Theorem~\ref{thm:dontneedu-strong}, $f$ is identifiable. \qedhere
\end{enumerate}
\end{proof}

\begin{proof}[Proof of Theorem~\ref{thm:main:highd}]
As with Theorem~\ref{thm:main:1d}, we assume $\varepsilon=0$ without loss of generality.
\begin{enumerate}[label=(\alph*)]
    \item By Theorem~\ref{thm:dontneedu}, $\pr(Z)$ is identifiable up to an affine transformation.
    \item  Since $P(Z)$ is identifiable up to an affine transformation by part a), by Theorem~\ref{thm:gmm-ICA}, $Z_i$ are identifiable up to permutation, scaling and translation.
    \item Follows from Theorem~\ref{thm:ivae-latentdag}.
    \item By Theorem~\ref{thm:dontneedu-strong}, $f$ is identifiable. \qedhere
\end{enumerate}
\end{proof}

\section{Identifiability of $Z$ up to an affine transformation via nonparametric mixtures}\label{sec:proofdontneedu}

In this section we prove that if in model~\eqref{eq:gmm-ivae} the function $f$ is weakly injective, then $Z$ is identifiable up to an affine transformation. More specifically, we prove the following:

\begin{theorem}\label{thm:dontneedu}
Assume that $(U, Z, X)$ are distributed according to model $\eqref{eq:gmm-ivae}$. If $f$ is weakly injective (see \ref{assm:invib} in Definition~\ref{def:invertible}), then $\pr(U, Z)$ is identifiable from $\pr(X)$ up to an affine transformation.
\end{theorem}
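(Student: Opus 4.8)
The plan is to push the whole problem into the observation space $\R^n$, recognize $\pr(X)$ as a finite mixture whose components are the images of the latent Gaussians under $f$, apply the nonparametric mixture identifiability result to match these components across any two models, and finally use the weak injectivity of $f$ together with analyticity of Gaussian densities to upgrade matched components into a single affine equivalence. First I would remove the noise: by the standard deconvolution argument cited from \citet{khemakhem2020ice}, $\pr(X)$ determines $f_\sharp\pr(Z)$, so it suffices to treat the noiseless case $X=f(Z)$. Writing $\pr(Z)=\sum_{j=1}^{\ncomp}\lambda_j\,N(\mu_j,\Sigma_j)$ from \ref{assm:mixture} and using linearity of the pushforward, $\pr(X)=\sum_j\lambda_j\,Q_j$ with $Q_j:=f_\sharp N(\mu_j,\Sigma_j)$. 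Thus $\pr(X)$ is a finite mixture whose components are pushforwards of Gaussians through $f$.

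Second, I would invoke Theorem~\ref{thm:main:npmixF2} to show that the weighted components $\{(\lambda_j,Q_j)\}_j$ are uniquely determined by $\pr(X)$; this requires checking that pushforward-Gaussian components of a weakly injective piecewise affine map satisfy the hypotheses of that theorem, which is where analyticity of the Gaussian density and condition \ref{assm:invib} enter. Consequently, if $(P,f)$ and $(P',f')$ both satisfy the assumptions and $f_\sharp P\equiv f'_\sharp P'$, the two finite mixtures coincide and so do their weighted-component multisets: after a relabeling $\sigma$ we have $\lambda_j=\lambda'_{\sigma(j)}$ and $f_\sharp N(\mu_j,\Sigma_j)=f'_\sharp N(\mu'_{\sigma(j)},\Sigma'_{\sigma(j)})$ as measures on $\R^n$ for every $j$.

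Third, I would recover the affine map. By the first clause of \ref{assm:invib} there is a ball on which $f$ is injective and, shrinking it so that its preimage lies in the interior of a single affine piece, $f$ is an affine embedding $z\mapsto Az+b$ there; the same holds for $f'$. Locally pulling the matched measures back through these affine charts turns each $Q_j$ into the Gaussian $N(\mu_j,\Sigma_j)$ on one side and $N(\mu'_{\sigma(j)},\Sigma'_{\sigma(j)})$ on the other, so the transition map $h:=(f')^{-1}\circ f$ is affine where it is defined and simultaneously satisfies $h_\sharp N(\mu_j,\Sigma_j)=N(\mu'_{\sigma(j)},\Sigma'_{\sigma(j)})$. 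Since Gaussians have everywhere-positive analytic densities supported on all of $\R^m$, a rigidity and analytic-continuation argument shows that the locally defined affine $h$ extends to a single global invertible affine map that works for every component at once, giving $P'=h_\sharp P$ and hence identifiability of $\pr(U,Z)$ up to an affine transformation, with $U$ labeling the matched components through $\sigma$.

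The step I expect to be the main obstacle is the third one. Two difficulties must be handled simultaneously: weak (rather than full) injectivity means $(f')^{-1}$, and therefore $h$, are a priori defined only on the injective ball and must be propagated to all of $\R^m$; and one must rule out genuinely piecewise-affine transports that accidentally send Gaussians to Gaussians, which is precisely where the analyticity of the Gaussian density and the rigidity of piecewise-affine maps are essential. Verifying the hypotheses of Theorem~\ref{thm:main:npmixF2} for the unusual pushforward-Gaussian components in the second step is the other place that will require care.
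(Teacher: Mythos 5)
Your overall strategy (deconvolve the noise, view $\pr(X)$ as a finite mixture, invoke nonparametric mixture identifiability, then recover an affine map from local charts) points in the right direction, but two of your steps contain genuine errors. First, Step 2 both misreads Theorem~\ref{thm:main:npmixF2} and asserts a false intermediate claim. The conclusion of Theorem~\ref{thm:main:npmixF2} is not that the observation-space decomposition $\pr(X)=\sum_j\lambda_j Q_j$ with $Q_j=f_{\sharp}N(\mu_j,\Sigma_j)$ is unique; its conclusion is already the existence of an affine $h$ with $h_{\sharp}P\equiv P'$ in the \emph{latent} space, i.e.\ exactly the statement of Theorem~\ref{thm:dontneedu} (modulo deconvolution and reading off $P(U,Z)$ from the identifiable finite GMM $P(Z)$). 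Had you invoked it that way, you would be done after Step 2 --- and that is precisely the paper's proof. Instead you use it to conclude $\lambda_j=\lambda'_{\sigma(j)}$ and $f_{\sharp}N(\mu_j,\Sigma_j)=f'_{\sharp}N(\mu'_{\sigma(j)},\Sigma'_{\sigma(j)})$ as measures on $\R^n$. This observation-space component matching is false under \ref{assm:invib}: the paper's Example~\ref{ex:invertible-ib-fail} exhibits weakly injective piecewise affine $f,g$ and $P=\tfrac12 N(-2,1)+\tfrac12 N(2,1)$ with $f_{\sharp}P=g_{\sharp}P$, yet comparing densities at $y=1$ shows $f_{\sharp}N(2,1)$ equals neither $g_{\sharp}N(2,1)$ nor $g_{\sharp}N(-2,1)$; the mixtures agree although no pairing of the pushed-forward components does.

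Second, Step 3 would prove too much and skips the actual crux. Extending $h=(f')^{-1}\circ f$ from a local chart to ``a single global invertible affine map'' amounts to showing $f=f'\circ h$ globally, i.e.\ identifiability of $f$ itself, which Example~\ref{ex:invertible-ib-fail} shows is impossible under \ref{assm:invib} alone (it requires \ref{assm:inv} and continuity, Theorem~\ref{thm:dontneedu-strong}). What is actually needed is weaker and different: a single ball on which $f$ and $f'$ are \emph{simultaneously} injective. That is the real obstruction --- \ref{assm:invib} gives each map its own injectivity ball, a priori in different locations --- and the paper resolves it with Lemma~\ref{lem:preimage-size} (counting preimages of a generic point directly from the density via analytic continuation) and Corollary~\ref{cor:invertible_fg}. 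Once a common ball exists, one pulls back the \emph{full mixture}, not individual components, through the two local affine charts, obtaining two GMMs that agree on an open patch of an $m$-dimensional affine subspace; local-to-global identifiability of GMMs (Theorem~\ref{thm:local-gmm-iden}) then yields $h_{\sharp}P=P'$ globally, with nothing global ever claimed about $f$. Your closing paragraph names the right difficulties, but the mechanism you sketch to handle them (component matching plus global extension of $(f')^{-1}\circ f$) cannot work.
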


We will prove this result by first proving a result on identifiability of nonparametric mixtures that may be of independent interest.

\begin{theorem}\label{thm:main:npmixF2}
Let $f, g:\mathbb{R}^m \rightarrow \mathbb{R}^n$ be piecewise affine functions satisfying \ref{assm:invib}. Let $Y~\sim \sum\limits_{i=1}^{J} \lambda_i\normalN(\mu_i, \Sigma_i)$ and $Y' \sim \sum\limits_{j=1}^{J'} \lambda_j'\normalN(\mu_j', \Sigma_j')$ be a pair of GMMs (in reduced form). Suppose that $f(Y)$ and $g(Y')$ are equally distributed.

 Then there exists an invertible affine transformation $h:\mathbb{R}^m \rightarrow \mathbb{R}^m$ such that $h(Y) \equiv Y'$, i.e., $J = J'$ and for some permutation $\tau\in S_J$ we have $\lambda_i = \lambda'_{\tau(i)}$ and $h_{\sharp}\normalN(\mu_i, \Sigma_i) = \normalN(\mu'_{\tau(i)}, \Sigma'_{\tau(i)})$.
\end{theorem}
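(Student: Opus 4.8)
The plan is to show that the single distribution $\nu := f_{\sharp}P = g_{\sharp}P'$ already determines the latent GMM up to an invertible affine map. The strategy is to read off a ``local chart'' of the mixture on any region where the pushforward map is injective, and then to use real-analyticity of Gaussian densities to propagate this local information. The subtle point is that $f$ and $g$ a priori witness injectivity on unrelated regions; the crux of the argument is a mass-counting step that forces the region where $f$ is injective to be a region where $g$ is injective as well.

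First I would exploit the piecewise affine structure: $\mathbb{R}^m$ is partitioned into finitely many full-dimensional polyhedral cells, on each of which $f$ equals an affine map. Weak injectivity \ref{assm:invib}(i) supplies a ball on which $f$ has a unique preimage, and I would check that this forces the existence of a cell $C$ whose affine map $A^f:w\mapsto Aw+b$ has full column rank $m$ (otherwise every image point would have an infinite fiber, contradicting (i)). Taking a base point in $\intr(C)$, the image $V^f := A^f(\mathbb{R}^m)$ is an $m$-dimensional affine subspace, and on a relatively open set $O_f\subseteq V^f$ the unique-preimage property implies that the density of $\nu$ (with respect to $m$-dimensional Hausdorff measure) coincides with the density $q_f$ of the pushforward mixture $A^f_{\sharp}P=\sum_i\lambda_i\normalN(A\mu_i+b,\,A\Sigma_i A^{\top})$. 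Crucially, $q_f$ is real-analytic on $V^f$ and $A^f_{\sharp}P$ is affine-equivalent to $P$.

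Next I would analyze $\nu|_{O_f}$ from the side of $g$, which is the heart of the proof and the step I expect to be most delicate. By \ref{assm:invib}(ii) for $g$, almost every point of $O_f$ has only finitely many $g$-preimages, and the cells of $g$ contributing full-dimensional mass to $O_f\subseteq V^f$ must map onto $V^f$ by injective rank-$m$ affine maps $B_1,\dots,B_p$. On an open sub-region $R\subseteq O_f$ where the active cell set is constant, say $\{B_{j_1},\dots,B_{j_{p'}}\}$, the density of $\nu$ equals $\sum_{k=1}^{p'}q_{j_k}$, where $q_{j_k}$ is the analytic probability density of $(B_{j_k})_{\sharp}P'$ on $V^f$. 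Comparing the two representations gives $q_f=\sum_{k=1}^{p'}q_{j_k}$ on $R$; since both sides are real-analytic on the connected subspace $V^f$ and agree on the open set $R$, the identity extends to all of $V^f$. Integrating over $V^f$ and using that $q_f$ and each $q_{j_k}$ are probability densities yields $1=p'$, so the multiplicity is forced to equal one. This normalization-plus-analyticity trick is precisely what rules out $g$ folding several sheets onto the region where $f$ is injective, and is the main obstacle to make rigorous (handling the measure-zero set of infinite fibers, the lower-dimensional boundaries between active-cell regions, and the preservation of ``reduced form'').

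Finally, with $p'=1$ I obtain $q_f=q_{j_1}$, i.e. $A^f_{\sharp}P=(B_{j_1})_{\sharp}P'$ as Gaussian mixtures on $V^f$. Setting $h:=B_{j_1}^{-1}\circ A^f$, an invertible affine self-map of $\mathbb{R}^m$, this reads $h_{\sharp}P=P'$, so $h(Y)\equiv Y'$. Since $h$ is invertible affine, $h_{\sharp}P$ is again a GMM in reduced form, so applying classical identifiability of finite Gaussian mixtures \citep{teicher1963identifiability} to the equal mixtures $h_{\sharp}P$ and $P'$ gives $K=K'$ together with a permutation $\tau\in S_K$ matching the weights, $\lambda_i=\lambda'_{\tau(i)}$, and the components, $h_{\sharp}\normalN(\mu_i,\Sigma_i)=\normalN(\mu'_{\tau(i)},\Sigma'_{\tau(i)})$, which is exactly the claim.
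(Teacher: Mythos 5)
Your proposal is correct and takes essentially the same approach as the paper: your mass-counting step (extending the identity $q_f=\sum_{k=1}^{p'}q_{j_k}$ to all of $V^f$ by real-analyticity and integrating to force $p'=1$) is precisely the paper's preimage-counting-via-analytic-continuation argument (Lemma~\ref{lem:preimage-size} and Corollary~\ref{cor:invertible_fg}), which likewise concludes that $g$ must have a single sheet over the region where $f$ is injective. Your final step---composing the two full-rank affine charts into $h=B_{j_1}^{-1}\circ A^f$ and invoking classical finite-GMM identifiability---is the content of the paper's Theorem~\ref{thm:identif-inv-affine}.
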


In other words, a mixture model whose components are piecewise affine transformations of a Gaussian is identifiable. To see this more clearly, observe that 
\begin{align*}
    \sum_{j=1}^{J} \lambda_{k}f_{\sharp}\normalN(\mu_{k},\sigma_{k})
    \sim
    f_{\sharp}\Big(\sum_{j=1}^{J} \lambda_{k}\normalN(\mu_{k},\sigma_{k})\Big).
\end{align*}
To the best of our knowledge, this identifiability result for a nonparametric mixture model is new to the literature.
In Theorem~\ref{thm:main:npmixF2}, the transformation and number of components is allowed to be unknown and arbitrary, and no separation or independence assumptions are needed.

\subsection{Technical lemmas}\label{sec:technical-lemas-iden}

We recall that a $m$-dimensional Gaussian distribution $\normalN(\mu, \Sigma)$ with covariance $\Sigma$ and mean $\mu$ has the following density function
\begin{equation}
    p(x) = \dfrac{1}{\sqrt{(2\pi)^m\det \Sigma}}\exp\left((-1/2)(x-\mu)^T\Sigma^{-1}(x-\mu)\right).
\end{equation}
We assume that all Gaussian components are \emph{non-degenerate} in the sense that $\Sigma$ is positive definite. 
We also recall that if $Y\sim \normalN(\mu, \Sigma)$ and $Y' = AY+b$ for an invertible $A\in \mathbb{R}^{m\times m}$ and $b\in \mathbb{R}^m$, then $Y'\sim \normalN(A\mu+b, A\Sigma A^T) $.

\begin{definition}
We say that a Gaussian mixture distribution
\begin{equation}
    P = \sum\limits_{j = 1}^J \lambda_j \normalN(\mu_j, \Sigma_j)
\end{equation}
is in reduced form if $\lambda_j>0$ for every $j\in [J]$ and for every $i\neq j\in [J]$ we have $(\mu_i, \Sigma_i)\neq (\mu_j, \Sigma_j)$.
\end{definition}

In the proofs we use the notion of real analytic functions. We remind the definition for reader's convenience.
\begin{definition}
Let $D\subseteq \mathbb{R}^n$ be an open set. A function $f:D \rightarrow \mathbb{R}$ is called a (real) analytic function if for every compact $K\subset D$ there exists a constant $C>0$ such that for any $\alpha \in \mathbb{N}^{n}$ we have
\begin{equation}
    \sup\limits_{x\in K} \left| \dfrac{\partial^{\alpha} f}{\partial x^{\alpha}}  (x)\right|\leq \alpha!C^{|\alpha|+1}.
\end{equation}
Alternatively, a real analytic function $f:D\rightarrow \mathbb{R}$ can be defined as a function that has a Taylor expansion convergent on $D$. 
\end{definition}

It is a standard fact that a linear combination and a product of analytic functions are analytic, and it is well-known that the density of the multivariate Gaussian is a real analytic function on $\mathbb{R}^m$. We will also need the standard notion of analytic continuation:
\begin{definition}
\label{defn:acont}
Let $D_0\subseteq D\subseteq  \mathbb{R}^n$ be open sets. Let $f_0:D_0\rightarrow \mathbb{R}$. We say that an analytic function $f:D\rightarrow \mathbb{R}$ is an \emph{analytic continuation} of $f_0$ onto $D$ if $f(x) = f_0(x)$ for every $x\in D_0$.
\end{definition}

\begin{definition}
Let $x_0\in \mathbb{R}^m$ and $\delta>0$. Let $p:B(x_0, \delta)\rightarrow \mathbb{R}$.  Define
\begin{equation}
    \Ext(p): \mathbb{R}^m\rightarrow \mathbb{R}
\end{equation}
to be the unique analytic continuation of $p$ on the entire space $\mathbb{R}^m$ if such a continuation exists, and to be $0$ otherwise.
\end{definition}

\begin{definition}
Let $D_0\subset D$ and $p:D\rightarrow \mathbb{R}$ be a function. We define $p|_{D_0}: D_0\rightarrow \mathbb{R}$ to be a restriction of $p$ to $D_0$, namely a function that satisfies $p|_{D_0}(x) = p(x)$ for every $x\in D_0$.
\end{definition}

\begin{theorem}\label{thm:local-gmm-iden}
Consider a pair of finite GMMs (in reduced form) in $\mathbb{R}^m$
\begin{equation}
    P=\sum_{j=1}^{J} \lambda_{j}\normalN(\mu_{j}, \Sigma_{j})\quad \text{and}\quad P'=\sum_{j=1}^{J'} \lambda_{j}'\normalN(\mu'_{j}, \Sigma'_{j}).
\end{equation}
Assume that there exists a ball $B(x_0, \delta)$ such that $P$ and $P'$ induce the same measure on $B(x_0, \delta)$. Then $P \equiv P'$, i.e., $J = J'$ and for some permutation $\tau$ we have $\lambda_i = \lambda'_{\tau(i)}$ and $(\mu_i, \Sigma_i) = (\mu'_{\tau(i)}, \Sigma'_{\tau(i)})$.
\end{theorem}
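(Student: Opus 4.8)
The plan is to exploit the real analyticity of Gaussian mixture densities to upgrade the local agreement on $B(x_0,\delta)$ to global agreement on all of $\mathbb{R}^m$, and then invoke the classical global identifiability of finite Gaussian mixtures \citep{teicher1963identifiability}. Concretely, let $p$ and $p'$ denote the densities of $P$ and $P'$. Each is a finite linear combination of multivariate Gaussian densities, hence real analytic on all of $\mathbb{R}^m$. By hypothesis, $p(x)=p'(x)$ for every $x\in B(x_0,\delta)$. The key step is to show that two real analytic functions on $\mathbb{R}^m$ that agree on an open ball must agree everywhere.

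First I would record the identity theorem for real analytic functions: if an analytic function $g:\mathbb{R}^m\to\mathbb{R}$ vanishes on a nonempty open set, then it vanishes identically. Applying this to $g=p-p'$ (which is analytic as a difference of analytic functions and vanishes on the open set $B(x_0,\delta)$) yields $p\equiv p'$ on $\mathbb{R}^m$. This is the mechanism that converts local information into global information, and it is precisely why assumption \ref{assm:mixture} (or more generally Remark~\ref{rem:gmm}) insists on analyticity of the prior. I would note that the analytic continuation is unique, so this step is where the $\Ext(\cdot)$ operator defined above becomes meaningful: $\Ext(p)=p$ and $\Ext(p')=p'$ as functions on $\mathbb{R}^m$, and they coincide.

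Second, with $P\equiv P'$ as distributions on $\mathbb{R}^m$, I would apply the classical identifiability theorem for finite Gaussian mixtures. The family of multivariate Gaussian densities $\{\phi(\,\cdot\,;\mu,\Sigma)\}$ is identifiable in Teicher's sense: a finite mixture determines its number of components, its weights, and its component parameters uniquely up to a relabeling. Since both $P$ and $P'$ are given in reduced form (so no two components share the same $(\mu,\Sigma)$ and all weights are strictly positive), this forces $K=K'$ and the existence of a permutation $\tau\in S_K$ with $\lambda_i=\lambda'_{\tau(i)}$ and $(\mu_i,\Sigma_i)=(\mu'_{\tau(i)},\Sigma'_{\tau(i)})$, which is exactly the claim.

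The main obstacle, and the only genuinely delicate point, is the first step: making the identity theorem rigorous in the multivariate real analytic setting. In one variable the identity theorem is standard, but for $m>1$ one must be slightly careful, since a real analytic function on $\mathbb{R}^m$ can vanish on a large set (e.g. a hyperplane) without vanishing identically; the crucial hypothesis is that the zero set contains a nonempty \emph{open} set. The cleanest route is either to cite the standard real-variable identity theorem directly, or to reduce to the one-dimensional case by restricting $p-p'$ to lines through $x_0$ and arguing that its directional Taylor coefficients all vanish. I would also want to confirm that a finite Gaussian mixture density genuinely extends to a real analytic function on all of $\mathbb{R}^m$ (it does, being a finite sum of exponentials of quadratics), so that the continuation hypothesis implicit in $\Ext$ is satisfied. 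Everything else is bookkeeping on the permutation $\tau$.
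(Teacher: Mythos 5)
Your proposal is correct and follows exactly the paper's own argument: the paper's proof is precisely ``identity theorem for real analytic functions plus classical identifiability of finite Gaussian mixtures,'' and your write-up simply fills in the details (analyticity of the difference $p-p'$, vanishing on an open set, then Teicher-style uniqueness using the reduced form). The only point worth noting is the trivial bookkeeping step that equality of the two \emph{measures} on $B(x_0,\delta)$ gives equality of the continuous densities on that ball, which you implicitly use before invoking the identity theorem.
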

\begin{proof}
Follows from the identity theorem for real analytic functions and the identifiability of finite GMMs.
\end{proof}

\begin{definition}
Let $f:\mathbb{R}^m\rightarrow \mathbb{R}^n$ be a piecewise affine function. We say that a point $x\in f(\mathbb{R}^m)\subseteq \mathbb{R}^n$ is \emph{generic with respect to $f$} if the preimage $f^{-1}(\{x\})$ is finite and there exists $\delta>0$, such that $f:B(z, \delta)\rightarrow \mathbb{R}^n$ is affine for every $z\in f^{-1}(\{x\})$. 
\end{definition}

\begin{lemma}\label{obs:oa-generic}
If $f:\mathbb{R}^m\rightarrow \mathbb{R}^n$ is a piecewise affine function such that $\{x\in \mathbb{R}^n\, :\, |f^{-1}(\{x\})| = \infty\}\subseteq f(\mathbb{R}^m)$ has measure zero with respect to the Lebesgue measure on $f(\mathbb{R}^m)$, then $\dim (f(\mathbb{R}^m)) = m$ and almost every point in $f(\mathbb{R}^m)$ (with respect to the Lebesgue measure on $f(\mathbb{R}^m)$) is generic with respect to~$f$.
\end{lemma}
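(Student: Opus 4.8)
The plan is to prove Lemma~\ref{obs:oa-generic} by combining the piecewise-affine structure of $f$ with the hypothesis that the set of points with infinite preimage is Lebesgue-null inside $f(\mathbb{R}^m)$. First I would recall that a piecewise affine $f:\mathbb{R}^m\to\mathbb{R}^n$ partitions $\mathbb{R}^m$ into finitely many (relatively open) polyhedral pieces $P_1,\ldots,P_N$, on each of which $f$ agrees with an affine map $A_i z + b_i$. On each piece the image $f(P_i)$ is a polyhedral subset of the affine subspace $A_i(\mathbb{R}^m)+b_i$, whose dimension equals $\rank(A_i)\le m$. Since $f(\mathbb{R}^m)=\bigcup_i f(P_i)$ is a finite union, its dimension is $\max_i \rank(A_i)$. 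I would first argue $\dim(f(\mathbb{R}^m))=m$: if every piece had $\rank(A_i)<m$, then each $f(P_i)$ would lie in a proper affine subspace, so each would have positive-codimension preimage structure forcing infinite fibers. More precisely, if $\rank(A_i)<m$ then $A_i$ has nontrivial kernel, so the entire affine slice $z+\ker(A_i)$ (intersected with $P_i$) maps to a single point; this produces a set of points in $f(P_i)$ each with infinite preimage. If all pieces were deficient, the whole image would be covered by such infinite-fiber points, contradicting the measure-zero hypothesis (a full-measure subset of a nonempty set cannot be empty). Hence at least one piece, say $P_{i_0}$, has $\rank(A_{i_0})=m$, giving $\dim(f(\mathbb{R}^m))=m$.

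Next I would establish genericity of almost every point. Call a piece \emph{full} if $\rank(A_i)=m$ and \emph{deficient} otherwise. For a deficient piece, $f(P_i)$ lies in an affine subspace of dimension $<m$, hence has $m$-dimensional Lebesgue measure zero inside $f(\mathbb{R}^m)$ (which is $m$-dimensional); so $\bigcup_{\text{deficient }i} f(P_i)$ is Lebesgue-null in $f(\mathbb{R}^m)$. Now take any point $x\in f(\mathbb{R}^m)$ lying outside this null set, outside the hypothesized infinite-preimage null set, and outside the image of the boundaries $\bigcup_i \partial P_i$ (these boundaries are lower-dimensional, and since each full piece's map is a homeomorphism onto its image, boundary images are $(m-1)$-dimensional and thus null). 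For such $x$: every $z\in f^{-1}(\{x\})$ lies in the relative interior of some full piece, so there is $\delta_z>0$ with $B(z,\delta_z)\subseteq \intr(P_i)$ on which $f$ is affine; and because $x$ has finite preimage (it avoids the infinite-fiber set) one can take $\delta=\min_z \delta_z>0$ uniformly. This is exactly the genericity condition.

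The main obstacle I anticipate is the bookkeeping around which exceptional sets are genuinely Lebesgue-null \emph{with respect to the Lebesgue measure on $f(\mathbb{R}^m)$}, rather than on $\mathbb{R}^n$. The subtlety is that $f(\mathbb{R}^m)$ may not be a smooth manifold—it is a finite union of polyhedral pieces possibly of differing dimensions and glued along faces—so I must be careful that "measure on $f(\mathbb{R}^m)$" refers to the $m$-dimensional Hausdorff measure restricted to the $m$-dimensional part. The clean way to handle this is to work piecewise: restrict attention to the (finitely many) full pieces, note their images are $m$-dimensional polyhedra carrying ordinary $m$-dimensional Lebesgue measure, and verify that the deficient-piece images, boundary images, and the infinite-fiber set each contribute zero $m$-measure. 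A second technical point is ensuring the uniform $\delta>0$ in the genericity definition, which requires finiteness of the preimage; this is precisely what the avoidance of the infinite-fiber null set buys us, so the argument closes cleanly once the exceptional set is assembled as a finite union of null sets.
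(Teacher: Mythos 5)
Your proof is correct and follows essentially the same route as the paper's: first, some affine piece must have full column rank (otherwise every image point would have infinite preimage, contradicting the measure-zero hypothesis), giving $\dim(f(\mathbb{R}^m))=m$; then the exceptional set is assembled as the union of the infinite-fiber set and the ($m$-dimensionally null) images of the points where $f$ fails to be locally affine, with finiteness of the fiber supplying the uniform $\delta$. The only cosmetic differences are that the paper allows a locally finite (possibly infinite) collection of pieces rather than finitely many, and it does not need to exclude images of rank-deficient full-dimensional pieces separately, since those are already contained in the infinite-fiber set $S$.
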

\begin{proof}
Let $g_i(z) = Az+b$, $g:D\rightarrow \mathbb{R}^n$ be one of the affine pieces defining piecewise affine function $f$. If $A$ does not have full column rank, then every $x\in g(D)$ has an infinite number of preimages. Therefore, the assumption of the lemma implies that for at least one of the affine pieces $g_i$, $A$ has full column rank. Thus, $\dim (f(\mathbb{R}^m)) = m$.

Let $S = \{x\in \mathbb{R}^n\, :\, |f^{-1}(\{x\})| = \infty\}$ then by assumption $S$ has measure zero in $f(\mathbb{R}^m)$. Let $E$ be the set of points $z\in \mathbb{R}^m$ such that for every $\delta>0$, $f$ is not affine on $B(z, \delta)$.  
Since $f$ is piecewise affine, $E$ can be covered by a locally-finite union of $(m-1)$-dimensional subspaces, i.e. every compact set intersects only finitely many of these (potentially infinite) $(m-1)$-dimensional subspaces. Thus $E$ has measure zero. Moreover, since $\dim (f(\mathbb{R}^m)) = m$, $f(E)$ has measure zero in $f(\mathbb{R}^m)$. 

Finally, by definition, every $x\in f(\mathbb{R}^m)\setminus \left(S \cup f(E)\right)$ is generic.
\end{proof}

We make the following useful observation.

\begin{lemma}\label{lem:preimage-size}
Consider a random variable $Z$ distributed according to the GMM $ \sum\limits_{j =1}^{J}\lambda_{j}\normalN(\mu_j, \Sigma_j)$. Consider the random variable $X = f(Z)$, where $f:\mathbb{R}^m\rightarrow \mathbb{R}^m$ is a piecewise affine function, such that $\dim(f(\mathbb{R}^m)) = m$. Let $x_0\in \mathbb{R}^m$ be a generic point with respect to $f$. Let $p$ be the density function of $X$. Then the number of points in the preimage $f^{-1}(\{x_0\})$ can be computed as
\begin{equation}
    |f^{-1}(\{x_0\})| = \lim\limits_{\delta \rightarrow 0} \int_{x\in \mathbb{R}^m} \Ext(p|_{B(x_0, \delta)})(x)dx.
\end{equation}
\end{lemma}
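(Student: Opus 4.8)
The plan is to show that, in a small neighborhood of the generic point $x_0$, the density $p$ of $X=f(Y)$ is \emph{exactly} a finite sum of pushforward-Gaussian densities --- one analytic ``copy'' of the law of $Y$ per preimage of $x_0$ --- and then to read off $|f^{-1}(\{x_0\})|$ as the total mass of the analytic continuation of this sum. First I would enumerate $f^{-1}(\{x_0\})=\{z_1,\dots,z_N\}$, which is finite by genericity, and fix $\delta_0>0$ so that $f$ restricted to each $B(z_i,\delta_0)$ is affine, say $f(z)=A_iz+b_i$ there, with the balls pairwise disjoint. Writing $q$ for the GMM density of $Y$, the goal is to identify $p$ with $\sum_{i=1}^N q\big(A_i^{-1}(\cdot-b_i)\big)\,|\det A_i|^{-1}$ near $x_0$.

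The key structural observation is that genericity already forces each $A_i$ to be invertible: if some $A_i$ were rank-deficient, then $z_i+\ker A_i$ would meet $B(z_i,\delta_0)$ in a continuum of points all mapping to $x_0$, contradicting finiteness of $f^{-1}(\{x_0\})$. (Full-dimensionality $\dim f(\mathbb{R}^m)=m$ is then automatic and consistent.) Thus each local piece $T_i:z\mapsto A_iz+b_i$ is an affine bijection from $B(z_i,\delta_0)$ onto an open neighborhood of $x_0$, and the pushforward of $q$ under $T_i$ contributes the term $x\mapsto q\big(A_i^{-1}(x-b_i)\big)\,|\det A_i|^{-1}$ to $p$ near $x_0$.

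The step requiring the most care --- and what I expect to be the main obstacle --- is establishing that, for all sufficiently small $\delta$, these $N$ terms account for the \emph{entire} density on $B(x_0,\delta)$, with no contribution from preimages lying outside $\bigcup_i B(z_i,\delta_0)$. Since $f$ is piecewise affine with finitely many regions, I would argue that any region $D_j$ whose (closed, polyhedral) image $f(D_j)$ accumulates to $x_0$ must actually contain a preimage of $x_0$, hence one of the $z_i$; regions carrying no $z_i$ therefore have images whose closures avoid $x_0$, so choosing $\delta<\delta^\ast$ below the resulting separation gives $p|_{B(x_0,\delta)}=\sum_{i=1}^N q\big(A_i^{-1}(\cdot-b_i)\big)\,|\det A_i|^{-1}$ exactly. (Rank-deficient pieces map into measure-zero sets and contribute nothing to the density.) This locality/properness claim is the crux; everything else is bookkeeping.

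With the exact local formula in hand, I would finish by noting that each summand is the density of the pushforward of the probability measure $q\,dz$ under the invertible affine map $T_i$, hence is itself a reduced GMM density that is real analytic on all of $\mathbb{R}^m$ and integrates to $1$. Consequently the finite sum is real analytic on $\mathbb{R}^m$ and agrees with $p$ on $B(x_0,\delta)$, so by the identity theorem for real analytic functions it \emph{is} the unique analytic continuation: $\Ext\big(p|_{B(x_0,\delta)}\big)=\sum_{i=1}^N q\big(A_i^{-1}(\cdot-b_i)\big)\,|\det A_i|^{-1}$ for every $\delta<\delta^\ast$. Integrating over $\mathbb{R}^m$ then yields $\sum_{i=1}^N 1=N=|f^{-1}(\{x_0\})|$ for all such $\delta$, and the stated limit as $\delta\to 0$ follows immediately.
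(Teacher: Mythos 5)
Your proof is correct and follows essentially the same route as the paper's: enumerate the finite preimage, use genericity to realize $f$ as an invertible affine map near each preimage point (with invertibility forced by finiteness of the preimage, as you note), identify $p$ on a small ball $B(x_0,\delta)$ with the exact finite sum of pushforward-GMM densities, recognize that sum as \emph{the} analytic continuation via the identity theorem, and integrate each Gaussian component to $1$ to count preimages. The one difference is rigor rather than approach: your explicit ``properness/locality'' step---arguing that pieces carrying no preimage of $x_0$ have closed (polyhedral) images bounded away from $x_0$, so that for small $\delta$ nothing outside the local balls contributes to $p|_{B(x_0,\delta)}$ (an argument that does rely on the pieces being finitely many closed polyhedra)---is precisely the point the paper's proof asserts implicitly when it writes the local density formula, so your write-up is, if anything, more complete.
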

\begin{proof}
Since $x_0$ is generic with respect to $f$, the preimage of $x_0$ consists of finitely many points, $f^{-1}(\{x_0\}) = \{z_1, z_2, \ldots, z_s\}$, and there exists $\varepsilon>0$ such that for every $i\in [s]$ there is a well-defined invertible affine function $g_i:B(z_i,\varepsilon)\rightarrow \mathbb{R}^m$ such that $g_i(z) = f(z)$ for all $z\in B(z_i,\varepsilon)$.

We can write $g_i(z) = A_iz+b_i$ for some $A_i\in \mathbb{R}^{m\times m}$ and $b_i\in \mathbb{R}^m$. Let $\delta_0>0$ be such that
\begin{equation}
    B(x_0, \delta_0)\subseteq \bigcap_{i=1}^{s} g_i(B(z_i, \varepsilon)).
\end{equation}
Let $0<\delta<\delta_0$. Then, for $\mu_{ij}' = A_i\mu_k+b_i$ and $\Sigma_{ij} = A_i\Sigma_j A_i^T$, and every $x\in B(x_0, \delta)$ we have
\begin{equation}\label{eq:local-mixture-image}
p|_{B(x_0, \delta)}(x) = \sum\limits_{i =1}^{s}\sum\limits_{j = 1}^{J} \dfrac{\lambda_j}{\sqrt{(2\pi)^m\det \Sigma_{ij}}}\exp\left((-1/2)(x-\mu_{ij}')^T\Sigma_{ij}^{-1}(x-\mu_{ij}')\right).
\end{equation}
The RHS of \eqref{eq:local-mixture-image} is a real analytic function defined on all of $\mathbb{R}^m$ (i.e. it is an entire function) that equals $p$ on an open neighborhood, hence it defines $\Ext(p|_{B(x_0, \delta)})$ on the entire space $\mathbb{R}^m$.
Therefore,
\begin{equation}
\begin{split}
    & \int_{x\in \mathbb{R}^m} \Ext(p|_{B(x_0, \delta)})(x)dx = \\
    & = \int_{x\in \mathbb{R}^m} \sum\limits_{i =1}^{s}\sum\limits_{j = 1}^{J} \dfrac{\lambda_j}{\sqrt{(2\pi)^m\det \Sigma_{ij}}}\exp\left((-1/2)(x-\mu_{ij}')^T\Sigma_{ij}^{-1}(x-\mu_{ij}')\right) = \\
     & = \sum\limits_{i =1}^{s}\int_{x\in \mathbb{R}^m} \sum\limits_{j = 1}^{J} \dfrac{\lambda_j}{\sqrt{(2\pi)^m\det \Sigma_{ij}}}\exp\left((-1/2)(x-\mu_{ij}')^T\Sigma_{ij}^{-1}(x-\mu_{ij}')\right) = \\
      & = s = |f^{-1}(\{x_0\})|. \qedhere
\end{split}
\end{equation}
\end{proof}

We can deduce the following corollary.

\begin{corollary}\label{cor:invertible_fg}
Let $f, g:\mathbb{R}^m \rightarrow \mathbb{R}^n$ be piecewise affine functions that satisfy~\ref{assm:invib}. 

Let $Z~\sim \sum\limits_{i=1}^{J} \lambda_i\normalN(\mu_i, \Sigma_i)$ and $Z' \sim \sum\limits_{j=1}^{J'} \lambda_j'\normalN(\mu_j', \Sigma_j')$. Suppose that $f(Z)$ and $g(Z')$ are equally distributed. Assume that for $x_0\in \mathbb{R}^n$ and $\delta>0$, $f$ is invertible on $B(x_0, 2\delta)\cap f(\mathbb{R}^{m})$.

Then there exists $x_1\in B(x_0, \delta)$ and $\delta_1>0$ such that both $f$ and $g$ are invertible on $B(x_1, \delta_1)\cap f(\mathbb{R}^{m})$. 
\end{corollary}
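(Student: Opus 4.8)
The plan is to reduce the statement to an equality of preimage counts and to exploit that $f(Y)$ and $g(Y')$ induce the \emph{same} density. Write $X:=f(Y)\equiv g(Y')$ and let $p$ be its density on the common support $\overline{f(\mathbb{R}^m)}=\overline{g(\mathbb{R}^m)}$. First I would note that it suffices to handle $g$ alone: since $f$ is invertible on $B(x_0,2\delta)\cap f(\mathbb{R}^m)$, any ball $B(x_1,\delta_1)$ with $x_1\in B(x_0,\delta)$ and $\delta_1<\delta$ satisfies $B(x_1,\delta_1)\subseteq B(x_0,2\delta)$, so $f$ is automatically invertible on $B(x_1,\delta_1)\cap f(\mathbb{R}^m)$. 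Hence I only need to produce such a ball on which $g$ is invertible as well.

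Second, I would identify the preimage counts of $f$ and $g$ at a well-chosen point. By Lemma~\ref{obs:oa-generic}, applied to both $f$ and $g$ (both satisfy condition (ii) of \ref{assm:invib}), we have $\dim f(\mathbb{R}^m)=\dim g(\mathbb{R}^m)=m$ and almost every point of each image is generic; moreover the two images agree up to a set of $m$-dimensional measure zero, since they have the same closure (the support of $X$) and the pushforward densities are positive on the full-rank affine pieces. Because $B(x_0,\delta)\cap f(\mathbb{R}^m)$ has positive $m$-dimensional measure while each excluded set has measure zero, I can choose $x_1\in B(x_0,\delta)\cap f(\mathbb{R}^m)$ that also lies in $g(\mathbb{R}^m)$ and is generic with respect to both $f$ and $g$. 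Now Lemma~\ref{lem:preimage-size}, applied once to $(f,Y)$ and once to $(g,Y')$, expresses both $|f^{-1}(\{x_1\})|$ and $|g^{-1}(\{x_1\})|$ as the same limiting integral $\lim_{\delta\to 0}\int \Ext(p|_{B(x_1,\delta)})$ of the common density $p$. Hence $|g^{-1}(\{x_1\})|=|f^{-1}(\{x_1\})|=1$, the last equality holding because $x_1\in B(x_0,2\delta)\cap f(\mathbb{R}^m)$, where $f$ is invertible.

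Third, I would upgrade this pointwise invertibility of $g$ at $x_1$ to invertibility on an entire ball. Because $g$ is piecewise affine, the ``singular set'' $\Sigma_g$ consisting of the images of the region boundaries together with the images of the non-full-rank affine pieces is a closed, locally finite union of polytopes of dimension at most $m-1$; on $g(\mathbb{R}^m)\setminus\Sigma_g$ the preimage count is locally constant, since a small ball meeting no image-boundary is covered by exactly the same collection of full-rank pieces. As $x_1$ is generic, all of its preimages lie in interiors of full-rank pieces, so $x_1\notin\Sigma_g$ and $\dist(x_1,\Sigma_g)>0$. Choosing $\delta_1<\min\{\delta,\dist(x_1,\Sigma_g)\}$ small enough that, in addition, $B(x_1,\delta_1)\cap f(\mathbb{R}^m)=B(x_1,\delta_1)\cap g(\mathbb{R}^m)$ (valid because near the generic point $x_1$ both images coincide with the local support of $X$), the preimage count of $g$ equals $1$ throughout $B(x_1,\delta_1)\cap f(\mathbb{R}^m)$, which yields the desired ball.

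The main obstacle is the third step: the pointwise equality $|g^{-1}(\{x_1\})|=1$ must be propagated to a full neighborhood, which forces me to control both the local constancy of the preimage count of the piecewise-affine map $g$ off its lower-dimensional singular set and the local coincidence of the two images $f(\mathbb{R}^m)$ and $g(\mathbb{R}^m)$ (a priori known only to agree up to measure zero). The density-based identity of the second step is clean precisely because $f(Y)$ and $g(Y')$ share the density $p$, so the remaining work is purely geometric and concentrated in verifying these two local facts about piecewise-affine images.
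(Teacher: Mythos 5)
Your proposal is correct and follows essentially the same route as the paper's proof: pick, via Lemma~\ref{obs:oa-generic}, a point $x_1\in B(x_0,\delta)\cap f(\mathbb{R}^m)$ that is generic with respect to both $f$ and $g$, use Lemma~\ref{lem:preimage-size} with the shared density $p$ to transfer the preimage count $|f^{-1}(\{x_1\})|=1$ to $|g^{-1}(\{x_1\})|=1$, and then use genericity to obtain invertibility of $g$ on a small ball around $x_1$. The only differences are cosmetic: your third step spells out the local-constancy-off-the-singular-set argument that the paper compresses into its final sentence, and you establish agreement of the images only up to measure zero, whereas the paper gets exact equality from $f(\mathbb{R}^m)=\supp(f(Y))=\supp(g(Y'))=g(\mathbb{R}^m)$; both suffice.
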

\begin{proof}
Since $f$ is piecewise affine and $f$ is invertible on $B(x_0, 2\delta)\cap f(\mathbb{R}^m)$, then $\dim f(\mathbb{R}^m) = m$.
Note that since $f(Z)$ and $g(Z')$ are equally distributed and since regular GMMs have positive density at every point, we have
\[ f(\mathbb{R}^m) = \supp(f(Z)) = \supp(g(Z')) = g(\mathbb{R}^m). \]
Therefore, $\dim(g(\mathbb{R}^m)) = \dim(f(\mathbb{R}^m)) = m$ and, by Lemma~\ref{obs:oa-generic}, almost every point $x\in B(x_0, \delta) \cap f(\mathbb{R}^m) $ is generic with respect to $f$ and w.r.t to $g$. Let $x_1 \in B(x_0, \delta)$ be such a point. Since $f$ is invertible on $B(x_1, \delta)$, we have that $|f^{-1}(\{x_1\})| = 1$. Since $x_1$ is generic with respect to $f$ and with respect to to $g$, by Lemma~\ref{lem:preimage-size}, we deduce that $|g^{-1}(\{x_1\})| = 1$. Therefore, since $x_1$ is generic, there exists $0<\delta_1<\delta$ such that on  $\left(B(x_1, \delta_1) \cap f(\mathbb{R}^m)\right) \subset \left(B(x_0, 2\delta)\cap f(\mathbb{R}^m)\right)$ the function $g$ is invertible.
\end{proof}

\subsection{Identifiability of nonparametric mixtures}

First we prove our identifiability theorem under the assumption that $f$ and $g$ are invertible in the neighborhood of the same point.

\begin{thm} \label{thm:identif-inv-affine}
Let $f, g:\mathbb{R}^m \rightarrow \mathbb{R}^n$ be piecewise affine. Let $Z~\sim \sum\limits_{i=1}^{J} \lambda_i\normalN(\mu_i, \Sigma_i)$ and $Z' \sim \sum\limits_{j=1}^{J'} \lambda_j'\normalN(\mu_j', \Sigma_j')$ be a pair of GMMs (in reduced form). Suppose that $f(Z)$ and $g(Z')$ are equally distributed.

Assume that there exists $x_0\in \mathbb{R}^n$ and $\delta>0$ such that $f$ and $g$ are invertible on $B(x_0, \delta)\cap f(\mathbb{R}^m)$. Then there exists an invertible affine transformation $h:\mathbb{R}^m \rightarrow \mathbb{R}^m$ such that $h(Z) \equiv Z'$, i.e., $J = J'$ and for some permutation $\tau$ we have $\lambda_i = \lambda'_{\tau(i)}$ and $h_{\sharp}\normalN(\mu_i, \Sigma_i) = \normalN(\mu'_{\tau(i)}, \Sigma'_{\tau(i)})$.
\end{thm}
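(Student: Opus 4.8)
The plan is to reduce the problem to the identifiability of \emph{ordinary} finite GMMs (Theorem~\ref{thm:local-gmm-iden}) by reading off the affine map $h$ from the local affine behaviour of $f$ and $g$ at a single well-chosen point. First I would select a good base point. Since $f$ is piecewise affine and invertible on $B(x_0,\delta)\cap f(\mathbb{R}^m)$, Lemma~\ref{obs:oa-generic} yields $\dim f(\mathbb{R}^m)=m$ and that almost every point of $f(\mathbb{R}^m)$ is generic with respect to \emph{both} $f$ and $g$ (the support equality $\supp f(Y)=\supp g(Y')$ gives $\dim g(\mathbb{R}^m)=m$ as well). Hence I can fix a point $x_1\in B(x_0,\delta)\cap f(\mathbb{R}^m)$ generic for both maps. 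By genericity together with the invertibility hypothesis, $x_1$ has a unique preimage $y_1=f^{-1}(\{x_1\})$ and a unique preimage $y_1'=g^{-1}(\{x_1\})$, and on small balls around them $f$ agrees with an affine map $z\mapsto A_f z+b_f$ and $g$ agrees with $w\mapsto A_g w+b_g$, where $A_f,A_g$ have full column rank $m$.

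Next I would build the affine map. Since $f(Y)\equiv g(Y')$, on a small ball $B(x_1,\delta_1)$ the measure $f_{\sharp}P_Y$ — which, by the unique-preimage property, is exactly the push-forward of $P_Y$ under the single branch $z\mapsto A_f z+b_f$ — coincides with $g_{\sharp}P_{Y'}$, the push-forward of $P_{Y'}$ under $w\mapsto A_g w+b_g$. As these are the same $m$-dimensional measure, their affine supports must coincide, i.e.\ $b_f+\operatorname{col}(A_f)=b_g+\operatorname{col}(A_g)$. This is precisely what lets me compose the local affine inverse of $g$ with $f$ to obtain a genuine \emph{invertible} affine self-map $h:=g^{-1}\circ f$ of $\mathbb{R}^m$, with $h(z)=A_h z+b_h$ where $A_h=A_g^{+}A_f$ is invertible ($A_g^{+}$ a left inverse of $A_g$). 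Pushing the equality of measures forward by the local affine inverse of $g$ converts the left-hand side into $h_{\sharp}P_Y$ and the right-hand side into $P_{Y'}$, so these two GMMs agree as measures on a ball $B(y_1',\delta_2)\subseteq\mathbb{R}^m$.

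Finally I would invoke local GMM identifiability. The measure $h_{\sharp}P_Y=\sum_i\lambda_i\,\normalN(A_h\mu_i+b_h,\,A_h\Sigma_i A_h^{T})$ is again a finite GMM in reduced form, since the invertibility of $h$ preserves both the distinctness of the pairs $(\mu_i,\Sigma_i)$ and the positivity of the $\lambda_i$. As $h_{\sharp}P_Y$ and $P_{Y'}$ agree on the ball $B(y_1',\delta_2)$, Theorem~\ref{thm:local-gmm-iden} forces $h_{\sharp}P_Y\equiv P_{Y'}$ globally, which is exactly the desired conclusion: $K=K'$ and there is a permutation $\tau\in S_K$ with $\lambda_i=\lambda'_{\tau(i)}$ and $h_{\sharp}\normalN(\mu_i,\Sigma_i)=\normalN(\mu'_{\tau(i)},\Sigma'_{\tau(i)})$.

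I expect the main obstacle to be the careful bookkeeping in the middle step: the push-forwards $f_{\sharp}P_Y$ and $g_{\sharp}P_{Y'}$ are supported on an $m$-dimensional affine slice of $\mathbb{R}^n$, so one must argue (i) that only the single affine branch at the unique preimage contributes to the local measure, (ii) that the two affine slices genuinely coincide, so that $g^{-1}\circ f$ is well defined and affine on all of $\mathbb{R}^m$, and (iii) that the change of variables by $g^{-1}$ correctly transports this $m$-dimensional measure back to honest Lebesgue densities on $\mathbb{R}^m$. Once this local affine structure is pinned down, the analytic-continuation step packaged in Theorem~\ref{thm:local-gmm-iden} is immediate.
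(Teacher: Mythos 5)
Your proposal is correct and follows essentially the same route as the paper's proof: localize to a ball where both $f$ and $g$ act as invertible affine maps onto a common $m$-dimensional affine slice, transport the equality of pushforward measures back to $\mathbb{R}^m$ via the composed local affine inverse (your $h=g^{-1}\circ f$ is exactly the paper's $h_g^{-1}\circ h_f$), and invoke the local-to-global GMM identifiability of Theorem~\ref{thm:local-gmm-iden}. The only cosmetic difference is that you locate the good point via the genericity lemma (Lemma~\ref{obs:oa-generic}) and left inverses of the branch matrices, whereas the paper uses piecewise affineness of $f^{-1},g^{-1}$ and auxiliary affine bijections $h_f,h_g:\mathbb{R}^m\to L$.
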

\begin{proof} Since $f$ and $g$ are piecewise affine and both $f$ and $g$ are invertible on $B(x_0, \delta)\cap f(\mathbb{R}^m)$, then $\dim f(\mathbb{R}^m) = m$ and the inverse functions are piecewise affine. Hence, moreover, there exist $x_1$ and $\delta_1>0$ with $B(x_1, \delta_1)\subseteq B(x_0, \delta)$ such that $f^{-1}$ and $g^{-1}$ on $B(x_1, \delta_1)\subseteq B(x_0, \delta)$ are defined by affine functions.

Let $L\subseteq \mathbb{R}^n$ be an $m$-dimensional affine subspace, such that $B(x_1, \delta_1)\cap f(\mathbb{R}^m)  = B(x_1, \delta_1)\cap L$.

Let $h_f, h_g: \mathbb{R}^m\rightarrow L$ be a pair of invertible affine functions such that $h_f^{-1}$ coincides with $f^{-1}$ on $B(x_1, \delta_1)\cap L$ and $h_g^{-1}$ coincides with $g^{-1}$ on $B(x_1, \delta_1)\cap L$. This means that distributions $h_f(Y)$ and $h_g(Y')$ coincide on $B(x_1, \delta_1)\cap L$. Moreover, since $h_f$ and $h_g$ are affine transformations, then $h_f(Y)$ and $h_g(Y')$ are finite GMMs. Therefore, by Theorem~\ref{thm:local-gmm-iden}, $h_f(Y)\equiv h_g(Y')$. The claim of the theorem holds for $h = h_g^{-1}\circ h_f$.
\end{proof}

Combining this identifiability result with results of Section~\ref{sec:technical-lemas-iden}, we obtain the proof of our main identifiability result for non-parametric mixtures.

\begin{proof}[Proof of Theorem~\ref{thm:main:npmixF2}]
By Corollary~\ref{cor:invertible_fg} there exists $x_0\in f(\mathbb{R}^m)$ that is generic with respect to to both $f$ and $g$ and $\delta>0$ such that $f$ and $g$ are invertible on $B(x_0, \delta)\cap f(\mathbb{R}^m)$. Therefore, the result follows from Theorem~\ref{thm:identif-inv-affine}.
\end{proof}

\subsection{Proof of Theorem~\ref{thm:dontneedu}}

We give a proof by contradiction. Assume that there exists another model $(U', Z', X')$ and a piecewise affine function $g$ in model~\ref{eq:gmm-ivae} that generates the same distribution, i.e., $\pr(X) = \pr(X')$.

By Corollary~\ref{cor:invertible_fg} there exists $x_0\in f(\mathbb{R}^m)$ that is generic with respect to to both $f$ and $g$ and $\delta>0$ such that $f$ and $g$ are invertible on $B(x_0, \delta)\cap f(\mathbb{R}^m)$. Therefore, by Theorem~\ref{thm:identif-inv-affine}, there exists $h:\mathbb{R}^m\rightarrow \mathbb{R}^m$ such that $Z' = h(Z)$. In other words, $P(U, Z)$ is identifiable up to an affine transformation.

\section{Identifiability of $f$}
\label{sec:identf}

In this section we show that if $f$ is continuous piecewise affine and injective then it is identifiable from $P(X)$ up to an affine transformation. 

\begin{theorem}\label{thm:dontneedu-strong}
Assume that $(U, Z, X)$ are distributed according to model $\eqref{eq:gmm-ivae}$. Assume that $f$ is continuous piecewise affine and satisfies \textup{\ref{assm:inv}} (i.e., $f$ is injective).

Then $(\pr(U, Z), f)$ is identifiable from $\pr(X)$ up to an affine transformation.
\end{theorem}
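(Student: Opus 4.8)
The plan is to build on Theorem~\ref{thm:dontneedu}, which already identifies the latent distribution, and to upgrade it to an identification of the decoder. Suppose $(U',Z',X')$ with continuous piecewise affine injective decoder $g$ generates the same observed law, i.e.\ $f_{\sharp}P(Z)=g_{\sharp}P(Z')$, and write $P:=P(Z)$ for the (full-support) GMM. Since \ref{assm:inv} implies \ref{assm:invib}, Theorem~\ref{thm:main:npmixF2} (via Theorem~\ref{thm:identif-inv-affine}) produces an invertible affine $h$ with $P(Z')=h_{\sharp}P$. Replacing $g$ by $\tilde g:=g\circ h$, which is again continuous piecewise affine and injective, reduces the problem to the case $f_{\sharp}P=\tilde g_{\sharp}P$: it then suffices to show $f=\tilde g$, for then $g=f\circ h^{-1}$ as required. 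Because $\tilde g$ is injective, $f(\R^m)=\supp f_{\sharp}P=\supp \tilde g_{\sharp}P=\tilde g(\R^m)$, so $\psi:=\tilde g^{-1}\circ f:\R^m\to\R^m$ is a well-defined continuous piecewise affine bijection (the inverse of an injective continuous piecewise affine map is again continuous piecewise affine on its image), and it is measure preserving: $\psi_{\sharp}P=\tilde g^{-1}_{\sharp}f_{\sharp}P=\tilde g^{-1}_{\sharp}\tilde g_{\sharp}P=P$. The goal becomes to prove that such a $\psi$ must be the identity.

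Next I would exploit analyticity. On each polyhedral region $R_i$ where $\psi$ acts as an invertible affine map $\phi_i(z)=A_iz+b_i$, measure preservation gives $p(z)=|\det A_i|\,p(A_iz+b_i)$ for $z\in R_i$, where $p$ is the GMM density. Both sides are real analytic on $\R^m$ and agree on the open set $R_i$, so by the identity theorem (as in Theorem~\ref{thm:local-gmm-iden}) the identity $p(z)=|\det A_i|\,p(A_iz+b_i)$ holds for \emph{all} $z\in\R^m$; equivalently $(\phi_i)_{\sharp}P=P$, so every affine piece of $\psi$ is a global affine symmetry of $P$. Let $G$ be the group of invertible affine maps preserving $P$. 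Any element of $G$ must fix the mean and the (nondegenerate) covariance of the mixture, so its linear part lies in the orthogonal group of that covariance form; hence $G$ is compact.

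Finally I would show that $\psi$ has a single piece. Across a shared facet of two adjacent regions the affine pieces $\phi_i,\phi_j$ agree on the spanning hyperplane $H$, so $\tau:=\phi_j^{-1}\phi_i\in G$ fixes $H$ pointwise. Its linear part fixes the $(m-1)$-dimensional direction space of $H$ and, being conjugate to an orthogonal map by compactness of $G$, is diagonalizable with eigenvalues of modulus one; as $m-1$ of them equal $1$, the last is real, hence $\pm1$, so $\tau$ is either the identity or the affine reflection across $H$. A reflection is impossible: it swaps the two sides of $H$, so with $\phi_j=\phi_i\tau$ one gets $\phi_j(R_j)=\phi_i(\tau(R_j))$ overlapping $\phi_i(R_i)$, contradicting the injectivity of $\psi$ (equivalently of $f$). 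Therefore $\phi_i=\phi_j$ for every pair of adjacent pieces, and since $\R^m$ is connected and tiled by these polyhedra, $\psi$ is globally a single affine symmetry. Absorbing it into $h$ (replace $h$ by $h\circ\psi$, which still satisfies $(h\circ\psi)_{\sharp}P=h_{\sharp}P=P(Z')$ because $\psi_{\sharp}P=P$) yields $g=f\circ(h\circ\psi)^{-1}$, proving that $(P(U,Z),f)$ is identifiable up to an affine transformation. The main obstacle is precisely this last rigidity step---ruling out nontrivial patchings of distinct affine symmetries---where analyticity (forcing each piece to be a global symmetry), compactness of $G$, and global injectivity (forbidding folds) must all be combined.
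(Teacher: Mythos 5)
Your proposal is correct, and its skeleton coincides with the paper's: the paper also first applies the affine identification of the prior (Theorem~\ref{thm:identif-inv-affine}, via Theorem~\ref{thm:dontneedu}) to reduce to a single map $\psi=h_0^{-1}\circ g^{-1}\circ f$ with $\psi_{\sharp}P=P$, and then proves a rigidity lemma (Lemma~\ref{lem:affine-aut}) stating that a continuous piecewise affine map preserving a GMM must be affine, exactly as you do with your $\psi=\tilde g^{-1}\circ f$. The difference lies in how the rigidity lemma is executed, and here your argument is genuinely distinct and, in one respect, more robust. The paper works at a single crease: it derives multiset equalities of the component means and covariances for the two adjacent pieces $f_\pm(x)=A_\pm x+b_\pm$, fixes the barycenter, and then argues from $\det(A_1^{-1}A_2)^2=1$ together with $A_1^{-1}A_2$ fixing the crease hyperplane pointwise that $A_1^{-1}A_2 v=\pm v$ in the normal direction, ruling out $-v$ by injectivity (via the preimage-counting lemma). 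As literally written, that determinant step does not exclude \emph{shear} transition maps (which fix a hyperplane pointwise and have determinant one) in the case where the barycenter lies on the crease hyperplane. Your route closes exactly this loophole: by upgrading each affine piece to a \emph{global} symmetry of $P$ via analytic continuation, noting that the affine symmetry group of a nondegenerate GMM is compact (linear parts orthogonal with respect to the mixture covariance), you get that the transition map is diagonalizable with unimodular eigenvalues, which forces it to be the identity or a reflection---shears are impossible because they are not diagonalizable. The remaining steps (reflection contradicts injectivity of $\psi$; connectivity of the polyhedral tiling patches the pieces into one global affine map, which is then absorbed into $h$) are sound and parallel the paper's. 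In short: same strategy, but your compactness/diagonalizability classification of the crease transition is a cleaner and strictly more complete version of the paper's determinant computation.
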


Before proving this theorem, we provide an example that shows that assumption~\ref{assm:invib} does not guarantee that $f$ can be recovered uniquely up to an affine transformation in Theorem~\ref{thm:dontneedu}.

\begin{ex}\label{ex:invertible-ib-fail}
 Consider
\begin{equation}
    Y\sim  \dfrac{1}{2}\normalN(-2, 1)+\dfrac{1}{2}\normalN(2, 1)
\end{equation}
Define a pair of piecewise affine functions (see also Figure~\ref{fig:ex:invertible-ib-fail})
\begin{align}
f(x) = \left\{
\begin{aligned}
& x-4,\quad \text{for } x\geq 2,\\
& -x, \quad \text{for } -2\leq x< 2,\\
& x+4, \quad \text{for } -4\leq x< -2,\\
& (x+4)/5, \quad \text{for } x<-4.
\end{aligned}
\right.
\qquad
g(x) = \left\{
\begin{aligned}
& x-4,\quad \text{for } x\geq 4,\\
& -x+4, \quad \text{for } 2\leq x< 4,\\
& x, \quad \text{for } -2\leq x< 2,\\
& -x-4, \quad \text{for } -4\leq x<-2,\\
& (x+4)/5, \quad \text{for } x<-4.
\end{aligned}
\right.
\end{align}

\begin{figure}[h]
\begin{center}
\includegraphics[scale = 0.35]{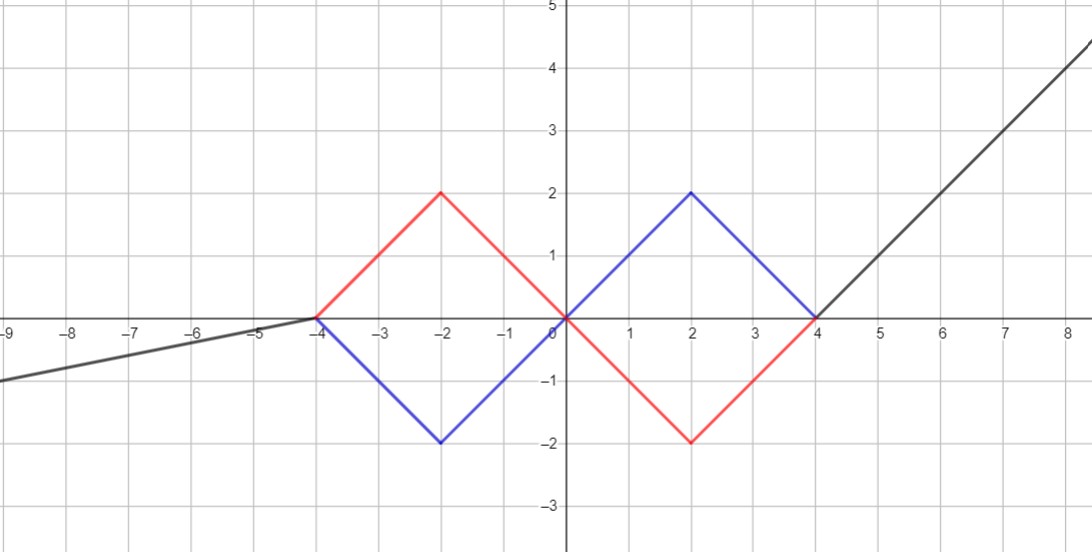}
\caption{Graphs of $f$ (black and red) and $g$ (black and blue) in Example~\ref{ex:invertible-ib-fail}.}\label{fig:ex:invertible-ib-fail}
\end{center}
\end{figure}

Then it is easy to see that $f(Y)$ and $g(Y)$ have the same distribution, but $f$ cannot be transformed into $g$ by an affine transformation.
\end{ex}

In order to prove Theorem~\ref{thm:dontneedu-strong} we need to show that for a mixture of Gaussians $P$ and a pair of piecewise affine functions $f, g$ if $f_{\sharp}P = g_{\sharp}P$, then $f = h\circ g$ for some invertible affine $h$. We first consider the case when $g$ is the identity.

\begin{lemma}\label{lem:affine-aut}
Let $Z~\sim \sum\limits_{j=1}^{J} \lambda_j\normalN(\mu_j, \Sigma_j)$. Assume that $f:\mathbb{R}^m \rightarrow \mathbb{R}^m$ is a continuous piecewise affine function such that $f(Z)\sim Z$. Then $f$ is affine.
\end{lemma}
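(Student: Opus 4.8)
The plan is to show that the constraint $f_\sharp P = P$ forces $f$ to be injective at almost every point and to restrict to a \emph{symmetry} of $P$ on each affine piece, and then to use the rigidity of the symmetry group of a non-degenerate GMM to rule out any genuine ``bend'' between adjacent pieces. I will write $P=\sum_i\lambda_i\normalN(\mu_i,\Sigma_i)$ with density $p$, and let $g_i=A_i(\cdot)+b_i$ denote the affine pieces of $f$ on the full-dimensional cells $C_i$.

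First I would record some reductions. Since $P$ is absolutely continuous with full support, $\supp(f_\sharp P)=\overline{f(\mathbb{R}^m)}=\mathbb{R}^m$; as $f(\mathbb{R}^m)$ is a finite union of polyhedra it is closed, so $f$ is surjective. If some $A_i$ were not invertible then $g_i(C_i)$ would be lower-dimensional while $P(C_i)>0$, so $f_\sharp P$ would carry positive mass on a Lebesgue-null set, contradicting $f_\sharp P = P\ll\mathrm{Leb}$; hence every $A_i$ is invertible and, by Lemma~\ref{obs:oa-generic}, almost every $y$ is generic.

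The key step is that \emph{$f$ is injective at almost every point and each piece is a symmetry of $P$}. For a generic $y$ whose preimages lie in the interiors of cells $C_{i_1},\dots,C_{i_s}$, the pushforward density near $y$ is, exactly as in \eqref{eq:local-mixture-image}, a finite sum $\sum_{t=1}^{s}q_t$, where $q_t(y)=p(g_{i_t}^{-1}(y))/|\det A_{i_t}|$ is the density of $(g_{i_t})_\sharp P$; each $q_t$ is an entire GMM density integrating to $1$. Since $f_\sharp P=P$, this sum equals $p$ on a neighborhood of $y$, and because all functions involved are real analytic on $\mathbb{R}^m$, Theorem~\ref{thm:local-gmm-iden} together with analytic continuation gives $p=\sum_{t=1}^{s}q_t$ on all of $\mathbb{R}^m$. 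Integrating both sides yields $1=s$. Hence $s=1$ for every generic $y$ (so $f$ is injective a.e.), and the single surviving branch satisfies $(g_{i_1})_\sharp P=P$; applying this at a generic point of each $f(C_i)$ shows $(g_i)_\sharp P=P$ for every full-dimensional piece.

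Finally I would exploit the rigidity of symmetries. As $P$ has finite mean $\bar\mu$ and positive-definite covariance $\bar\Sigma$, any affine $h$ with $h_\sharp P=P$ fixes $\bar\mu$ and satisfies $A\bar\Sigma A^\top=\bar\Sigma$; after the affine whitening $z\mapsto\bar\Sigma^{-1/2}(z-\bar\mu)$ (which preserves the statement ``$f$ is affine'') we may assume $\bar\mu=0$, $\bar\Sigma=I$, so that every piece is \emph{linear orthogonal}: $b_i=0$ and $A_i\in O(m)$. Consider two adjacent cells meeting along a facet spanning a hyperplane $H=\{n^\top x=c\}$; continuity forces $A_i=A_j$ on $H$. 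If $c\ne0$ then $H$ contains a linear basis and $A_i=A_j$ outright. If $c=0$ then $A_i=A_j$ on $n^\perp$, and $A_iA_j^{-1}\in O(m)$ fixes the hyperplane $A_j(n^\perp)$ pointwise, so it is either the identity (whence $A_i=A_j$) or the reflection across it; in the reflection case one has $A_in=-A_jn$, and a short computation shows $g_i,g_j$ map their halfspaces onto the \emph{same} side of $A_j(H)$, i.e.\ $f$ is $2$-to-$1$ on an open set, contradicting the a.e.\ injectivity established above. Thus adjacent pieces always coincide, and since the cell-adjacency graph of a polyhedral subdivision of $\mathbb{R}^m$ is connected, all pieces are equal and $f$ is affine. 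The main obstacle is precisely this last ``no genuine fold'' step: a continuous piecewise-affine injection can certainly bend (already in one dimension), so injectivity alone is insufficient—it is the interaction with measure-preservation, via the orthogonality forced by whitening, that excludes the bend.
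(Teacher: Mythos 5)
Your proof is correct, and its skeleton matches the paper's: both arguments hinge on (i) showing that at generic points $f$ has a single preimage and that each affine piece, viewed as a global affine map, pushes $P$ forward to itself (your mass-counting/analytic-continuation step is precisely the paper's Lemma~\ref{lem:preimage-size} combined with Theorem~\ref{thm:local-gmm-iden}), and (ii) ruling out a ``bend'' between two pieces by showing it would force $f$ to be two-to-one on an open set. Where you genuinely differ is the rigidity step. The paper works with the raw mixture parameters: the multiset identity on the means shows both pieces fix the barycenter (so after a translation $b_1=b_2=0$), the product of covariance determinants gives $\det(A_1^{-1}A_2)^2=1$, and it then asserts that a map fixing the bend hyperplane $L$ pointwise with determinant $\pm1$ sends the normal $v$ to $\pm v$. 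That assertion silently skips shears ($Mv=v+w$ with $0\neq w\in L$) and oblique reflections ($Mv=-v+w$), which require a further argument (e.g.\ that a symmetry of $P$ preserves the overall covariance and is therefore conjugate to an orthogonal map, hence diagonalizable). Your whitening reduction removes this case analysis at the source: once $\bar\mu=0$ and $\bar\Sigma=I$, every piece lies in $O(m)$, and an orthogonal map fixing a hyperplane pointwise is either the identity or the orthogonal reflection across it, so the only alternative to $A_i=A_j$ is a genuine fold, which your two-to-one computation contradicts via a.e.\ injectivity. The trade-off: the paper's route stays entirely local to a single bend and needs no global cell structure, whereas yours invokes the (standard, but here unproved) facts that a continuous piecewise affine map admits a polyhedral cell decomposition whose facet-adjacency graph is connected; in exchange you get a cleaner and more airtight case analysis. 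One cosmetic point: your surjectivity claim (via closedness of $f(\mathbb{R}^m)$) presumes finitely many pieces, but nothing downstream actually uses surjectivity---density of the image and genericity almost everywhere suffice.
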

\begin{proof} Since $Z$ has positive density at every point and $f(Z)\sim Z$ we must have $\dim f(\mathbb{R}^m) = m$.

If $f$ is not affine, then there exist an $(m-1)$-dimensional affine subspace $L$, $z_0\in L$ and $\delta>0$ such that the following holds: The subspace $L$ divides $B(z_0, \delta)$ into two sets (formally, these  are ``half-balls'') $B^+$ and $B^-$ such that $f_+(z):=f|_{B^+}(z) = A_1z+b_1$ and $f_-(z):=f|_{B^-}(z) = A_2z+b_2$, where $(A_1, b_1)\neq (A_2, b_2)$ and $A_1, A_2$ are invertible.

Since $f(Z)\sim Z$ we have
\[
\{f_+(\mu_1),  \ldots, f_+(\mu_J)\} = \{\mu_1, \mu_2, \ldots, \mu_K\} = \{f_-(\mu_1),  \ldots, f_-(\mu_J)\}
\]
as multisets (i.e. including repetitions). Let $\mu_* = \dfrac{1}{J}\sum\limits_{j=1}^J \mu_j$. Then, since $f_+$ and $f_-$ are affine we get $f_+(\mu_*) = f_-(\mu_*) = \mu_*$. By translating $Y$ and adjusting $f$ accordingly, we may assume that $\mu^* = 0$. In this case, $b_1 = b_2 = 0$.
Moreover, since $f_+(z) = f_{-}(z)$ for $z\in L$, we get
\begin{equation}\label{eq:hyperplane0}
    (A_1^{-1}A_2)(z) = z\quad  \text{for all } z\in L.
\end{equation}

Finally, since $f(Y)\sim Y$, we have
\[ \{A_1\Sigma_1 A_1^T, \ldots, A_1\Sigma_J A_1^T\} =\{\Sigma_1, \Sigma_2, \ldots \Sigma_J\} =\{A_2\Sigma_1 A_2^T, \ldots, A_2\Sigma_J A_2^T\},
\]
as multisets (i.e. including repetitions). This implies that
\[\prod\limits_{j=1}^J\det\left( A_1\Sigma_jA_1^T\right) = \prod\limits_{j=1}^J\det\left( \Sigma_j\right) = \prod\limits_{j=1}^J\det\left( A_2\Sigma_jA_2^T\right).\]
Hence, $\det(A_1)^2 = \det(A_2)^2 = 1$, and $\det(A_1^{-1}A_2)^2 = 1$. By \eqref{eq:hyperplane0}, $A_1^{-1}A_2$ is the identity map on $L$. Let $v$ be a unit vector orthogonal to $L$ (in the direction of $B^{+}$). Then we get that either $A_1^{-1}A_2 v = v$, or $A_1^{-1}A_2 v = -v$. In the latter case $A_1(y_0+(\delta/2) v) = A_2(y_0-(\delta/2) v)$, which means that $f$ is not injective. This contradicts Lemma~\ref{lem:preimage-size}. Therefore, we must have $A_1^{-1}A_2 v = v$, and so, by \eqref{eq:hyperplane0}, $A_1 = A_2$.

Therefore, $f_+ = f_-$, which contradicts $(A_1, b_1)\neq (A_2, b_2)$.
It follows that $f$ must be affine.
\end{proof}

\begin{theorem}\label{thm:identif-inv-affine-strong}
Let $f, g:\mathbb{R}^m \rightarrow \mathbb{R}^n$ be continuous invertible piecewise affine functions. Let $Z~\sim \sum\limits_{i=1}^{J} \lambda_i\normalN(\mu_i, \Sigma_i)$ and $Z' \sim \sum\limits_{j=1}^{J'} \lambda_j'\normalN(\mu_j', \Sigma_j')$ be a pair of GMMs (in reduced form). Suppose that $f(Z)$ and $g(Z')$ are equally distributed.

 Then there exists an affine transformation $h:\mathbb{R}^m \rightarrow \mathbb{R}^m$ such that $h(Z) \equiv Z'$ and $g = f\circ h^{-1}$.
\end{theorem}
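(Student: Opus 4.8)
The plan is to bootstrap the distributional identifiability already furnished by Theorem~\ref{thm:identif-inv-affine} into an identity at the level of the maps themselves, using the rigidity statement of Lemma~\ref{lem:affine-aut}. First I would invoke Theorem~\ref{thm:identif-inv-affine}: since $f,g$ are invertible (hence in particular weakly injective in the sense of \ref{assm:invib}) and $f(Y)\equiv g(Y')$, there is an invertible affine map $h_0:\mathbb{R}^m\to\mathbb{R}^m$ with $h_0(Y)\equiv Y'$. This already secures the first half of the conclusion; the remaining work is to correct $h_0$ so that it simultaneously conjugates $g$ into $f$.

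The key idea is to form the \emph{residual} self-map $\phi:=f^{-1}\circ g\circ h_0:\mathbb{R}^m\to\mathbb{R}^m$ and show it is affine. Here $g\circ h_0$ is again a continuous invertible piecewise affine map, and $f^{-1}$ is continuous piecewise affine on $f(\mathbb{R}^m)$ because $f$ is a piecewise affine homeomorphism onto its $m$-dimensional image; hence $\phi$ is continuous piecewise affine and invertible (once well-definedness of the composition is checked). I would then compute the law of $\phi(Y)$: since $h_0(Y)\equiv Y'$ we have $g(h_0(Y))\equiv g(Y')\equiv f(Y)$, and applying the measurable map $f^{-1}$, which inverts $f$ on its image, gives $\phi(Y)=f^{-1}\big(g(h_0(Y))\big)\equiv f^{-1}(f(Y))=Y$. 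Thus $\phi(Y)\equiv Y$, so Lemma~\ref{lem:affine-aut} applies and $\phi$ is affine.

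With $\phi$ affine, the theorem assembles quickly. From $\phi=f^{-1}\circ g\circ h_0$ we obtain $g\circ h_0=f\circ\phi$, hence $g=f\circ\phi\circ h_0^{-1}$. Setting $h:=h_0\circ\phi^{-1}$, which is affine and invertible as a composition of affine maps, yields $f\circ h^{-1}=f\circ\phi\circ h_0^{-1}=g$, i.e.\ $g=f\circ h^{-1}$. Moreover $h$ still transports the law correctly: because $\phi(Y)\equiv Y$ we also have $\phi^{-1}(Y)\equiv Y$, so $h(Y)=h_0(\phi^{-1}(Y))\equiv h_0(Y)\equiv Y'$. Thus both required properties hold for this single affine $h$.

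I expect the main obstacle to be the well-definedness and regularity of the residual map $\phi$, rather than the algebra. Concretely, one must verify that $g\circ h_0$ takes values in $f(\mathbb{R}^m)$ so that $f^{-1}\circ g\circ h_0$ is meaningful, and that $f^{-1}$ is genuinely piecewise affine. The first point follows from the support argument used in Corollary~\ref{cor:invertible_fg}: since $Y,Y'$ are GMMs with full support and $f(Y)\equiv g(Y')$, the images $f(\mathbb{R}^m)$ and $g(\mathbb{R}^m)$ coincide (both equal the support of the common distribution), so $g\circ h_0$ and $f$ share the same image. The second point uses that a continuous injective piecewise affine map is a PL homeomorphism onto its image, whose inverse is again piecewise affine. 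It is worth stressing that injectivity (assumption \ref{assm:inv}) is essential here: it is precisely what makes $f^{-1}$ a genuine function and forces $\phi(Y)\equiv Y$, and without it the conclusion fails, as Example~\ref{ex:invertible-ib-fail} shows for merely weakly injective maps.
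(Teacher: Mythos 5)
Your proof is correct and is essentially the paper's own argument: the paper likewise obtains $h_0$ from Theorem~\ref{thm:identif-inv-affine}, applies Lemma~\ref{lem:affine-aut} to the residual self-map $h_0^{-1}\circ g^{-1}\circ f$ (which is exactly the inverse of your $\phi = f^{-1}\circ g\circ h_0$), and assembles $h$ from the resulting affine map. Your additional care about well-definedness (image equality via the support argument, $f^{-1}$ being piecewise affine) fills in details the paper leaves implicit, but the underlying approach is the same.
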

\begin{proof}
By Theorem~\ref{thm:identif-inv-affine}, there exists an invertible affine transformation $h_0:\mathbb{R}^m \rightarrow \mathbb{R}^m$ such that $h_0(Z) = Z'$. Then, $f(Z) \sim  g(h_0(Z))$, and since $g$ and $h_0$ are invertible, we can rewrite this as $Z\sim (h_0^{-1}\circ g^{-1} \circ f )(Z) $. By Lemma~\ref{lem:affine-aut}, $(h_0^{-1}\circ g^{-1} \circ f )$ is affine, i.e. there exists an invertible affine map $h_1$ such that
\[ h_0^{-1}\circ g^{-1} \circ f  = h_1 \quad \Leftrightarrow \quad f = g\circ (h_0\circ h_1)\]
Hence the claim of the theorem holds for $h = h_0\circ h_1$.
\end{proof}

\begin{proof}[Proof of Theorem~\ref{thm:dontneedu-strong}]
Immediately follows from Theorems~\ref{thm:dontneedu} and \ref{thm:identif-inv-affine-strong}.
\end{proof}

\subsection{Identifiability under assumption~\ref{assm:invae}}\label{app:assm:invae}

In this section we discuss the case \ref{assm:invae}. In particular, show that in \eqref{eq:gmm-ivae} under the weaker assumption \ref{assm:invae}, $f$ is identifiable up to an affine transformation on the preimage of every connected open set onto which $f$ is injective.

\begin{theorem}
Let $f, g:\mathbb{R}^m \rightarrow \mathbb{R}^n$ be continuous piecewise affine functions satisfying~\textup{\ref{assm:invae}}. 

Let $Z~\sim \sum\limits_{i=1}^{J} \lambda_i\normalN(\mu_i, \Sigma_i)$ and $Z' \sim \sum\limits_{j=1}^{J'} \lambda_j'\normalN(\mu_j', \Sigma_j')$ be a pair of variables with GMM distribution (in reduced form). Suppose that $f(Z)$ and $g(Z')$ are equally distributed.

 Let $\mathcal{D}\subseteq \mathbb{R}^n$ be a connected open set such that $f$ and $g$ are injective onto $\mathcal{D}$. Then there exists an affine transformation $h:\mathbb{R}^m \rightarrow \mathbb{R}^m$ such that $h(Z) \equiv Z'$ and $g(z) = (f\circ h^{-1})(z)$ for every $z\in g^{-1}(\mathcal{D})$.
\end{theorem}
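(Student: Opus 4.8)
The plan is to treat this as the \ref{assm:invae} analogue of Theorem~\ref{thm:identif-inv-affine-strong}, reducing the functional identity to a \emph{localized} version of Lemma~\ref{lem:affine-aut}. First, since for piecewise affine maps \ref{assm:invae} implies \ref{assm:invib}, Theorem~\ref{thm:main:npmixF2} furnishes an invertible affine $h_0:\mathbb{R}^m\rightarrow\mathbb{R}^m$ with $h_0(Y)\equiv Y'$; this already delivers the distributional half of the conclusion, and it remains to produce an affine map realizing the functional identity on $g^{-1}(\mathcal{D})$. I would set $G=g\circ h_0$, so that $G$ is continuous piecewise affine, injective onto $\mathcal{D}$, and $f(Y)\equiv g(Y')\equiv G(Y)$. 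Because $g$ is injective onto $\mathcal{D}$ and piecewise affine, its restriction $g|_{g^{-1}(\mathcal{D})}$ is a homeomorphism onto $\mathcal{D}\cap g(\mathbb{R}^m)$ with piecewise affine inverse (as in Theorem~\ref{thm:identif-inv-affine}), so the transition map $\psi=g^{-1}\circ f$ is a well-defined continuous piecewise affine homeomorphism $f^{-1}(\mathcal{D})\rightarrow g^{-1}(\mathcal{D})$; both sets are connected open (images of the connected $\mathcal{D}$ under the respective homeomorphisms). Chasing measures through the common restricted law $(f_{\sharp}P)|_{\mathcal{D}}=(g_{\sharp}P')|_{\mathcal{D}}$ shows $\psi$ transports $P|_{f^{-1}(\mathcal{D})}$ to $P'|_{g^{-1}(\mathcal{D})}$, so $h_1:=h_0^{-1}\circ\psi$ transports $P|_{\Omega}$ to $P|_{\Omega'}$, where $\Omega=f^{-1}(\mathcal{D})$, $\Omega'=G^{-1}(\mathcal{D})$, and $P$ is the common GMM law of $Y$.

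The heart of the argument, and the main obstacle, is the claim that $h_1$ is affine on the connected open set $\Omega$ — a local analogue of Lemma~\ref{lem:affine-aut}. The difficulty is that the proof of Lemma~\ref{lem:affine-aut} invokes the global identities ``means map to means'' and the matching of determinants, which presuppose that $f(Y)\sim Y$ holds on all of $\mathbb{R}^m$, whereas here $h_1$ is only defined over $\mathcal{D}$. I would supply these identities by a purely local analytic-continuation argument. Suppose $h_1$ were not affine on $\Omega$; then inside $\Omega$ there is an interface: a hyperplane piece splitting a ball $B\subset\Omega$ into $B^{+},B^{-}$ with $h_1|_{B^{+}}=A_1 z+b_1$ and $h_1|_{B^{-}}=A_2 z+b_2$, where $(A_1,b_1)\neq(A_2,b_2)$ and $A_1,A_2$ are invertible (invertibility follows from $h_1$ being a homeomorphism, or from Lemma~\ref{lem:preimage-size}). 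On the image side, the one-sided pushforward densities are the finite GMMs $q_{\pm}(w)=\sum_i\lambda_i\,\normalN(A_{\pm}\mu_i+b_{\pm},\,A_{\pm}\Sigma_i A_{\pm}^{T})(w)$, and each coincides with the entire real-analytic density $p_Y$ of $P$ on an open one-sided neighborhood. By the identity theorem for real-analytic functions (the mechanism of Theorem~\ref{thm:local-gmm-iden}), each $q_{\pm}$ then equals $p_Y$ \emph{everywhere}, so by GMM identifiability both affine pieces $z\mapsto A_{\pm}z+b_{\pm}$ are genuine symmetries of $P$, i.e. $\{(A_{\pm}\mu_i+b_{\pm},A_{\pm}\Sigma_i A_{\pm}^{T})\}=\{(\mu_i,\Sigma_i)\}$ as weighted multisets. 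This is exactly the global information that Lemma~\ref{lem:affine-aut} used, now legitimately derived over $\mathcal{D}$.

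With both pieces recognized as symmetries of $P$, I would run the remainder of Lemma~\ref{lem:affine-aut} verbatim: matching the products of covariance determinants gives $\det(A_1)^2=\det(A_2)^2=1$; agreement of $h_1$ across the interface forces $A_1^{-1}A_2$ to fix the hyperplane pointwise; and an eigenvalue $-1$ in the transverse direction would make $h_1$ non-injective, contradicting that $h_1$ is a homeomorphism (Lemma~\ref{lem:preimage-size}). Hence $A_1=A_2$ and $b_1=b_2$, a contradiction, so $h_1$ is a single affine map on the connected set $\Omega$, which extends to an affine $h_1:\mathbb{R}^m\rightarrow\mathbb{R}^m$. Finally I would conclude by setting $h=h_0\circ h_1$: unwinding $h_0^{-1}\circ g^{-1}\circ f=h_1$ on $\Omega$ yields $f=g\circ h$ there, and since $h$ maps $f^{-1}(\mathcal{D})$ bijectively onto $g^{-1}(\mathcal{D})$, substituting $y=h(x)$ gives $g(y)=(f\circ h^{-1})(y)$ for every $y\in g^{-1}(\mathcal{D})$. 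For the distributional statement $h(Y)\equiv Y'$, note $h_1$ is one affine map transporting $P|_{\Omega}$ to $P|_{\Omega'}$, so its global pushforward density agrees with $p_Y$ on the open set $\Omega'$; by analyticity $h_1(Y)\equiv Y$ globally, whence $h(Y)=h_0(h_1(Y))\equiv h_0(Y)=Y'$.
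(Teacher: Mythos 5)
Your proof is correct and follows the same skeleton as the paper's own argument: obtain an invertible affine $h_0$ with $h_0(Y)\equiv Y'$ from the nonparametric mixture theorem (Theorem~\ref{thm:main:npmixF2}/\ref{thm:identif-inv-affine}), show that the transition map $h_0^{-1}\circ g^{-1}\circ f$ is affine on $f^{-1}(\mathcal{D})$, and compose to get $h=h_0\circ h_1$. The difference lies in the middle step, and it is a substantive one: the paper simply asserts that Lemma~\ref{lem:affine-aut} can be ``applied on $f^{-1}(\mathcal{D})$,'' even though that lemma, as stated and proved, requires the \emph{global} identity $f(Y)\sim Y$ on all of $\mathbb{R}^m$ --- its proof uses the multiset identities for the means and covariances of the full mixture. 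You correctly identify this as the obstacle and supply the missing localization: pushing the restricted measure through each affine piece at a putative interface and invoking analytic continuation of GMM densities (the mechanism of Theorem~\ref{thm:local-gmm-iden}) shows each piece is a global symmetry of $P$, after which the interface argument of Lemma~\ref{lem:affine-aut} runs verbatim, with the non-injectivity contradiction now immediate since $h_1$ is a homeomorphism by construction. You also verify the distributional claim $h(Y)\equiv Y'$ for the final map (again by analyticity), which the paper leaves implicit in this case. The one caveat you share with the paper is the connectedness of $f^{-1}(\mathcal{D})$: your justification (``image of the connected $\mathcal{D}$ under a homeomorphism'') implicitly reads ``injective onto $\mathcal{D}$'' as $\mathcal{D}\subseteq f(\mathbb{R}^m)$ with $f^{-1}$ defined on all of $\mathcal{D}$; under the weaker reading ``injective over $\mathcal{D}$,'' the preimage $f^{-1}(\mathcal{D})=f^{-1}(\mathcal{D}\cap f(\mathbb{R}^m))$ of a connected open set can be disconnected (e.g.\ a ball meeting two branches of the image), and your argument --- like the paper's --- would then only produce an affine map on each connected component. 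Since the paper's proof makes the identical unqualified claim, this is an ambiguity of the statement rather than a defect of your proof.
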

\begin{proof}
Similarly, as in the proof of Theorem~\ref{thm:identif-inv-affine-strong}, by Theorem~\ref{thm:identif-inv-affine}, there exists an invertible affine transformation $h_0:\mathbb{R}^m \rightarrow \mathbb{R}^m$ such that $h_0(Z) = Z'$. Then, $f(Z) \sim  g(h_0(Z))$, and since $g$ is invertible on $\mathcal{D}$ and $h_0$ is invertible, we can rewrite this as $Z\sim (h_0^{-1}\circ g^{-1} \circ f )(Z)$ on $f^{-1}(\mathcal{D})$. Since $f$ is invertible and continuous piecewise affine, $f^{-1}(\mathcal{D})$ is an open connnected set. Therefore, applying Lemma~\ref{lem:affine-aut} on $f^{-1}(\mathcal{D})$, we deduce that $(h_0^{-1}\circ g^{-1} \circ f )$ is affine on $f^{-1}(\mathcal{D})$, i.e. there exists an invertible affine map $h_1$ such that
\[ h_0^{-1}\circ g^{-1} \circ f  = h_1 \quad \Leftrightarrow \quad f = g\circ (h_0\circ h_1) \quad \text{on $f^{-1}(\mathcal{D})$}\]
Therefore, for $h = (h_0\circ h_1)$, we have $g(y) = (f\circ h^{-1})(z)$ for every $z\in g^{-1}(\mathcal{D})$.
\end{proof}

\begin{remark}
Let $f$ be a continuous piecewise affine function that satisfies~\ref{assm:invae}. Denote
\[S = \{x\in \mathbb{R}^n\, :\, |f^{-1}(\{x\})|>1\}\subseteq f(\mathbb{R}^m). \]
Recall that assumption~\ref{assm:invae} says that $S$ has measure zero in $f(\mathbb{R}^m)$.

We claim that~\ref{assm:invae} implies that for every $x\in S$ in fact $|f^{-1}(\{x\})| = \infty$. Indeed, if for all sufficiently small $\delta>0$ we have $\dim \left(B(x, \delta)\cap f(\mathbb{R}^m) \right)<m$, then $|f^{-1}(\{x\})| = \infty$ since $f$ is continuous piecewise affine. Otherwise,  using Corollary~\ref{obs:oa-generic}, we get that for every $\delta>0$ there exists a generic with respect to $f$ point $x_{\delta}\in B(x, \delta)\cap f(\mathbb{R}^m)$. Assumption~\ref{assm:invae} implies that $|f^{-1}(\{x_{\delta}\})| = 1$ for every $x_{\delta}$. Therefore, since $f$ is continuous piecewise affine we get that either $|f^{-1}(\{x\})| = 1$  or $|f^{-1}(\{x\})| = \infty$.
\end{remark}

\section{Identifiability of $Z$ up to a permutation, scaling and translation}\label{sec:icagmmproof}

Under \ref{assm:latent}, we have
\begin{equation}\label{eq:GMM-ICA}
    Z\sim  \sum\limits_{j = 1}^J \lambda_j \normalN (\mu_j, \Sigma_j),
\end{equation}
where $\Sigma_j$ is diagonal for every $j\in [J]$. In the setup of model~\eqref{eq:gmm-ivae} this just means that $Z_i\indep Z_j\mid U$.

Let $Y = AZ+b$, where $A:\mathbb{R}^m\rightarrow \mathbb{R}^m$ is an invertible linear map and $b\in \mathbb{R}^m$. Then $Y$ is also a GMM. We next show how $Z$ may be recovered from $Y$ up to a permutation, scaling, and translation.

\begin{theorem}\label{thm:gmm-ICA}
Let $J\geq 2$, and $\lambda_j>0$ for all $j\in [J]$. Let $Z = (Z_1, Z_2, \ldots, Z_m)$ be given by
\begin{equation}
    Z\sim \sum\limits_{j = 1}^{J} \lambda_{j} \normalN(\mu_{j}, \Sigma_{j})
\end{equation}
Assume that $\Sigma_j$ is diagonal for every $j\in [J]$. Let $Y = AZ+b$, where $A:\mathbb{R}^m\rightarrow \mathbb{R}^m$ is an invertible linear map and $b\in \mathbb{R}^m$. Moreover, assume that there exist indices $i_1, i_2\in [J]$, such that all numbers $(\left(\Sigma_{i_1}\right)_{tt}/\left(\Sigma_{i_2}\right)_{tt}\mid t\in [m])$ are distinct. Given $Y$, one can recover an invertible linear map $A': \mathbb{R}^m\rightarrow \mathbb{R}^m$, such that $(A')^{-1}A = QD$, where $Q$ is a permutation matrix and $D$ is a diagonal matrix with positive entries.
\end{theorem}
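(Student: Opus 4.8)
The plan is to reduce the recovery of $A$ to a simultaneous diagonalization (by congruence) of the covariance matrices of $Y$, exploiting that the distinct-ratios hypothesis makes the relevant spectrum simple and hence pins down the diagonalizer up to permutation and scaling.

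\textbf{Reduce to the covariances of $Y$.} Since $Y=AZ+b$ and $Z$ is a GMM with diagonal components, $Y$ is itself a finite GMM whose $k$-th component is $\normalN(A\mu_k+b,\,A\Sigma_k A^{T})$. By identifiability of finite Gaussian mixtures (Theorem~\ref{thm:local-gmm-iden}), the law of $Y$ determines $K$, the weights $\lambda_k$, the means $A\mu_k+b$, and, crucially, the covariances $\Gamma_k:=A\Sigma_k A^{T}$ as an unlabeled collection. Because each $\Sigma_k\succ 0$ and $A$ is invertible, every $\Gamma_k\succ 0$. It therefore suffices to produce, from $\{\Gamma_k\}$, an invertible $A'$ for which $M:=(A')^{-1}A$ is a \emph{monomial} matrix (one nonzero entry per row and column), since every monomial matrix can be written as $QD$ with $Q$ a permutation and $D$ diagonal.

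\textbf{Simultaneously diagonalize via the distinguished pair.} First I would scan over pairs of recovered covariances and select one, say $(\Gamma_{a},\Gamma_{b})$, whose generalized eigenvalues (the eigenvalues of $\Gamma_b^{-1}\Gamma_a$) are pairwise distinct. The hypothesis guarantees at least the pair corresponding to $(i_1,i_2)$ qualifies: substituting $w=A^{T}v$ reduces $\Gamma_a v=\lambda\Gamma_b v$ to $\Sigma_a w=\lambda\Sigma_b w$, whose solutions are $\lambda_t=(\Sigma_a)_{tt}/(\Sigma_b)_{tt}$ with eigenvector $e_t$. Whitening with $\Gamma_b$, let $O$ be orthogonal with $O^{T}\big(\Gamma_b^{-1/2}\Gamma_a\Gamma_b^{-1/2}\big)O=\Lambda$ diagonal, and set $B:=O^{T}\Gamma_b^{-1/2}$ and $A':=B^{-1}$. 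Then $B\Gamma_b B^{T}=I$ and $B\Gamma_a B^{T}=\Lambda$ are both diagonal.

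\textbf{The diagonalizer is monomial and consistent.} Writing $M=BA$, the identities $M\Sigma_b M^{T}=I$ and $M\Sigma_a M^{T}=\Lambda$ say that the vectors $\Sigma_b^{1/2}M^{T}e_i$ are orthonormal eigenvectors of the diagonal matrix $\Sigma_b^{-1/2}\Sigma_a\Sigma_b^{-1/2}$, whose entries are the distinct ratios. Since a diagonal matrix with distinct entries has only the signed standard basis vectors as eigenvectors, each row of $M$ must be a nonzero multiple of some $e_{\sigma(i)}$; hence $M=(A')^{-1}A$ is monomial and equals $QD$. The same $B$ then diagonalizes every other $\Gamma_k$, because $B\Gamma_k B^{T}=M\Sigma_k M^{T}$ is diagonal; this confirms consistency across all components and lets me verify that the pair chosen above was indeed valid.

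\textbf{Main obstacle and ambiguities.} The hard part is the monomial conclusion of the last step: it hinges entirely on the distinctness of the ratios $(\Sigma_{i_1})_{tt}/(\Sigma_{i_2})_{tt}$, which is precisely what forces a simple spectrum and thus pins the eigenvectors down to order and sign. Without it, a degenerate eigenvalue block would admit arbitrary orthogonal mixing, destroying the monomial structure. The remaining ambiguities are benign and match the claimed form: the ordering of eigenvalues supplies the permutation $Q$, and the normalization of each eigenvector supplies the positive entries of $D$. The leftover sign freedom is intrinsic to the model—replacing $(Z,A)$ by $(SZ,AS)$ for a sign matrix $S$ leaves both the law of $Y$ and the diagonality of the $\Sigma_k$ unchanged—so positivity of $D$ is obtained simply by fixing a sign convention for the recovered eigenvectors.
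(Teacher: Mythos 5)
Your proof is correct and takes essentially the same approach as the paper: both reduce, via identifiability of finite Gaussian mixtures, to the recovered component covariances $\Gamma_k = A\Sigma_k A^T$ and then simultaneously diagonalize the distinguished pair, with the distinct-ratio hypothesis forcing the diagonalizing map to agree with $A$ up to a monomial factor $QD$. The differences are cosmetic: the paper takes PSD square roots $V_i$ of the recovered covariances and invokes uniqueness of the SVD of $V_{i_1}^{-1}V_{i_2}$, whereas you whiten by one covariance and use that a diagonal matrix with simple spectrum has only signed standard basis vectors as eigenvectors (equivalent facts); you are in fact more explicit than the paper about the residual sign ambiguity, which is genuinely intrinsic to the model (the paper's claim that the SVD factors are determined ``up to a permutation'' silently ignores simultaneous sign flips, which is exactly the $(Z,A)\mapsto(SZ,AS)$ freedom you identify).
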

\begin{remark}
The translation $b$ is impossible to recover without stronger assumptions, as $b$ corresponds to an arbitrary translation in the $Z$ space. In other words, choice of $b$ determines the origin in the coordinate space of $Z$ and it can be completely arbitrary.
\end{remark}
\begin{remark}
A slightly different version of Theorem~\ref{thm:gmm-ICA} under different assumptions appeared in \cite{yang2021nonlinear}. The main difference is that \cite{yang2021nonlinear} assumed that $f$ is volume-preserving but nonlinear, whereas we restrict to the general (i.e. not necessarily volume-preserving) linear case.
\end{remark}

\begin{proof} Without loss of generality assume $i_1 = 1$ and $i_2 = 2$.

Let $\Sigma_i$ be the covariance matrices of $Z_i$ and let $\widetilde{\Sigma}_i$ be the covariance matrices of $Y_i$ for $i\in [J]$. Clearly
\begin{equation}
\widetilde{\Sigma}_i = A\Sigma_i A^T
\quad\text{for each $i\in[J]$}.
\end{equation}

The matrices $\widetilde{\Sigma}_i$ are PSD. Therefore, using SVD we can find PSD matrices $V_i$, such that for every $i\in [J]$,
\begin{equation}
    \widetilde{\Sigma}_i = V_iV_i^T. \end{equation}
Moreover, such a decomposition is unique up to an orthogonal matrix, i.e., for every pair of such decompositions $\widetilde{\Sigma}_i = V_iV_i^T = V_i'(V_i')^T$ there exists a unitary matrix $R$ such that $V_iR = V_i'$. Therefore, for every $i\in [J]$ there exists a matrix $R_i$, such that
\begin{equation}
    V_iR_i = A\Sigma_i^{1/2}
\end{equation}
In particular,
\begin{equation}
    V_1R_1\Sigma_1^{-1/2} = V_2R_2\Sigma_2^{-1/2} \quad \Rightarrow \quad R_1\left(\Sigma_1^{-1/2}\Sigma_2^{1/2}\right)R_2^{-1} = \left(V_1^{-1}V_2\right)
\end{equation}
Since $R_1$ and $R_2^{-1}$ are unitary and $\left(\Sigma_1^{-1/2}\Sigma_2^{1/2}\right)$ is diagonal, they can be determined from the SVD of $V_1^{-1}V_2$. Moreover, they can be determined uniquely up to a permutation matrix since all diagonal entries of $\Sigma_1^{-1/2}\Sigma_2^{1/2}$ are distinct. In other words, using SVD for $\left(V_1^{-1}V_2\right)$  we can find $R_1'$ such that for some permutation matrix $P$ we have
\begin{equation}
    V_1R_1'Q = A\Sigma_1^{1/2}, \quad \text{so, for }\quad A' := V_1R_1 \quad \text{we have}\quad (A')^{-1}A = Q\Sigma_1^{-1/2}.
\end{equation}
This concludes the proof.
\end{proof}

As an immediate corollary we can deduce the following theorem from Theorem~\ref{thm:dontneedu}.

\begin{theorem}\label{thm:ivae-up-to-perm}
Assume that $(U, Z, X)$ are distributed according to model $\eqref{eq:gmm-ivae}$ and that $f$ is weakly injective.
Suppose that $Z_i\indep Z_j \mid U$ for all $i\neq j$. Moreover, assume that there exist a pair of states $U = u_1$ and $U = u_2$ such that all $(\left(\Sigma_{u_1}\right)_{tt}/\left(\Sigma_{u_2}\right)_{tt}\mid t\in [m])$ are distinct.

Then $P(U, Z)$ is identifiable from $P(X)$ up to permutation, scaling ans translation of $Z_i$.
\end{theorem}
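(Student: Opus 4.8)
The plan is to obtain this result by composing the two identifiability statements already in hand: Theorem~\ref{thm:dontneedu}, which converts weak injectivity of $f$ into affine identifiability of $P(U,Z)$, and Theorem~\ref{thm:gmm-ICA}, which shows that a conditionally factorial GMM can be ``un-mixed'' from an arbitrary affine transformation, up to permutation and positive scaling, as soon as two of its components have distinct coordinatewise variance ratios. The work of this corollary is therefore just to certify that the hypotheses line up and to chain the two conclusions.

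First I would invoke Theorem~\ref{thm:dontneedu}. Since $f$ is weakly injective, any competing model $(U',Z',X')$ with the same observed law $P(X')=P(X)$ satisfies $P' = (h_0)_{\sharp}P$ for an invertible affine map $h_0(z)=Az+b$; equivalently, from $P(X)$ the latent law is pinned down only as the affine orbit of the true $P(Z)$, and I may fix any representative $Y := AZ+b$ with $A,b$ unknown. Next I would check that the remaining hypotheses are exactly those of Theorem~\ref{thm:gmm-ICA}: the assumption $Z_i\indep Z_j\mid U$ says precisely that every conditional covariance $\Sigma_u$ is diagonal, i.e. $Z$ is a conditionally factorial GMM, and the assumed pair of states $u_1,u_2$ with all ratios $(\Sigma_{u_1})_{tt}/(\Sigma_{u_2})_{tt}$ distinct supplies the indices $i_1,i_2$ required there. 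Applying Theorem~\ref{thm:gmm-ICA} to $Y=AZ+b$ then yields an invertible linear $A'$ with $(A')^{-1}A = QD$ for a permutation matrix $Q$ and a positive diagonal matrix $D$, so that $(A')^{-1}(Y-b) = QD\,Z$. Hence the recovered latent agrees with the true $Z$ up to a permutation and a positive coordinatewise rescaling, and combined with the free translation this is exactly identifiability of $P(U,Z)$ up to permutation, scaling, and translation.

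I expect the only genuine difficulty to reside inside Theorem~\ref{thm:gmm-ICA} itself (the SVD / simultaneous-diagonalization step that turns ``two components with distinct variance ratios'' into uniqueness up to permutation and scaling), which here may be taken as given. For the corollary proper the single point that cannot be sharpened is the translation $b$: it amounts to a free choice of origin in the $Z$-coordinates and leaves $P(X)$ unchanged, which is why the conclusion must allow a translation and nothing stronger. The remaining effort is bookkeeping---matching the diagonal-covariance and distinct-ratio hypotheses and carrying the affine representative furnished by Theorem~\ref{thm:dontneedu} into Theorem~\ref{thm:gmm-ICA}.
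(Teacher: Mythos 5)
Your proposal is correct and follows essentially the same route as the paper's own proof: invoke Theorem~\ref{thm:dontneedu} to pin down the latent law up to an affine map $Y = AZ + b$, then verify that conditional independence gives diagonal $\Sigma_u$ and the distinct-variance-ratio pair supplies the indices needed for Theorem~\ref{thm:gmm-ICA}, whose conclusion $(A')^{-1}A = QD$ yields recovery up to permutation, scaling, and translation. The paper's proof is exactly this two-step chaining, including your observation that the translation is intrinsically unrecoverable.
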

\begin{proof}
By Theorem~\ref{thm:dontneedu}, $\pr(Z)$ is identifiable from $\pr(X)$ up to an affine transformation. That is, we can reconstruct a random variable $Y$ from $\pr(X)$ which satisfies $Y = AZ+b$ for some invertible $A\in \mathbb{R}^{m\times m}$.

Now, by Theorem~\ref{thm:gmm-ICA}, we can find $A'$ such that $Z' = (A')^{-1}Y = QDZ+(A')^{-1}b$, where $Q$ is a permutation matrix and $D$ is a diagonal matrix. This means, that we can recover $Z$ up to permutation, shift and scaling of individual variables $Z_i$.
\end{proof}

\section{Identifiability of multivariate $U$ structure}\label{sec:latent-dag-setup}

When $k=1$, $P(Z)$ contains all the information about $P(U, Z)$, however, when $k>1$ (i.e. $U$ is multivariate), this may not be true anymore. It is not even obvious that $P(Z)$ must contain information about the true dimension of $U$. The distribution $P(U, Z)$ may contain interesting dependencies between individual variables $U_i$ and $Z_j$.

Previously, \cite{kivva2021learning} studied necessary and sufficient conditions for identifiability of $P(U)$ when $Z$ is observed under the so-called \emph{measurement model}. A key limitation of \citet{kivva2021learning} is that it requires the observed variables to be conditionally independent, which is not the case in our setting.
Ultimately, this is a consequence of $Z$ being unobserved: Previous work such as \citet{kivva2021learning} assumes there is only a single layer of hidden variables connected to the observations. In our setting, under \eqref{eq:gmm-ivae}, we need to recover $U$ from $Z$, the latter of which is unobserved. As a result, if we can only identify $Z$ up to an affine transformation (e.g., like in Theorem~\ref{thm:dontneedu});
i.e. we can only recover $Z' = AZ+b$, then it almost surely will not be conditionally factorial.  Hence, the results from \citet{kivva2021learning} cannot be applied directly for weak (e.g., up to affine transformation, or as in \citealp{khemakhem2020variational}) notions of identifiability of $Z$.

Luckily, in Section~\ref{sec:icagmmproof}, we showed how to recover the true $Z$ from $Z' = AZ+b$. This will enable us to identify $P(U)$ in Theorem~\ref{thm:main:highd}\ref{thm:main:highd:latentdag}. In the remainder of this appendix, we outline these details.

We say that a distribution $\pr(U,Z)$ satisfies the \emph{Markov property} with respect to the neighborhoods $\nbhd(Z_i)$ (cf. Definition~\ref{def:nbhd}) if
\begin{equation}
    \prob(U,Z) = \prob(U)\prod_{i} \prob(Z_i\mid \nbhd(Z_{i})).
\end{equation}

\begin{remark}
The neighborhoods $\nbhd(Z_i)$ define a bipartite graph between $(U_1,\ldots,U_k)$ and $(Z_1,\ldots,Z_m)$ that is described in \citet{kivva2021learning}. Since this graph is not needed for our purposes, we proceed without further mention of this graph. The assumptions below have been re-phrased accordingly.
\end{remark}

\cite{kivva2021learning} show that assumptions \ref{assm:twins}-\ref{assum:ssc} below are necessary for identifiability of $U$.

\begin{enumerate}[label=(L\arabic*)]
\item \label{assm:twins} (No twins)  For any $U_{i}\neq U_{j}$ we have $\nbhd(U_i)\neq\nbhd(U_j)$.
    \item \label{assm:maximal-old} (Maximality) There is no $U'$ such that:
    \begin{enumerate}
        \item $\pr(U', Z)$ is Markov with respect to the neighborhoods $\nbhd(Z_i)$ defined by $U'$;
        \item $U'$ is obtained from $U$ by splitting a hidden variable (equivalently, $U$ is obtained from $U'$ by merging a pair of vertices);
        \item $U'$ satisfies Assumption~\ref{assm:twins}.
    \end{enumerate}
    \item\label{assump:nondeg} (Nondegeneracy) The distribution over $(U, Z)$ satisfies:
    \begin{enumerate}[label=(\alph*)]
    \item\label{assump:strong:NZ} $\pr(U = u)>0$ for all $u$.
    \item\label{assump:strong:SDC} For all $Z'\subset Z$ and $u_1\ne u_2$, $\pr(Z'|\nbhd(Z') = u_1)\ne\pr(Z'|\nbhd(Z') = u_2)$, where $u_1$ and $u_2$ are distinct configurations of $\nbhd(Z')$.
    \end{enumerate}
    \item \label{assum:ssc} (Subset condition) For any pair of distinct variables $U_i, U_j$ the set $\nbhd(U_i)$ is not a subset of $\nbhd(U_j)$.
\end{enumerate}

We prove the following identifiability result.

\begin{theorem}\label{thm:ivae-latentdag}
Assume that $(U, Z, X)$ are distributed as in $\eqref{eq:gmm-ivae}$ and that $f$ satisfies~\ref{assm:invib}.
Assume further that~\ref{assm:latent}-\ref{assm:latentdag} hold and $P(U = u)>0$ for all $u$ in the domain of $U$.

Then $\dim(U)=k$, $\dim(U_j)$, $\pr(U, Z)$ are identifiable from $P(X)$ up to a permutation of variables $U_i$ and permutation, scaling and translation of variables $Z_i$. 
\end{theorem}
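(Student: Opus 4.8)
The plan is to reduce the problem to the measurement-model identifiability theory of \citet{kivva2021learning}, after first repairing the structure that a general affine ambiguity would otherwise destroy. Recall the obstruction flagged above: \citet{kivva2021learning} requires the ``observed'' coordinates to be directly available and conditionally independent given $U$, whereas Theorem~\ref{thm:dontneedu} only recovers $Z$ up to a general invertible affine map $Y = AZ + b$, under which the conditional factoriality $Z_i \indep Z_j \mid U$ is generically lost. The first step is therefore to invoke Theorem~\ref{thm:dontneedu} to reconstruct $Y = AZ + b$ from $P(X)$, and then to use the extra hypothesis \ref{assm:latent}---which supplies both the conditional independence and the distinct-variance-ratio condition---to apply Theorem~\ref{thm:gmm-ICA} (equivalently Theorem~\ref{thm:ivae-up-to-perm}). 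This yields a variable $\tilde Z = QDZ + c$, where $Q$ is a permutation matrix, $D$ is diagonal with positive entries, and $c$ is a translation; in other words, the ICA step of Theorem~\ref{thm:gmm-ICA} precisely undoes the nuisance in $A$ that is not coordinate-wise.

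The decisive observation is that $\tilde Z$ is a \emph{coordinate-wise} affine reparametrization of $Z$, so the conditionally-factorial measurement-model structure is restored: $\tilde Z_i \indep \tilde Z_j \mid U$, and because permutation, scaling, and translation act one coordinate at a time, the neighborhood structure is preserved, $\nbhd(\tilde Z_i) = \nbhd(Z_{\pi(i)})$ for the permutation $\pi$ encoded by $Q$, with each $\nbhd(U_j)$ unchanged as a set of coordinates up to this relabeling. Hence $P(U, \tilde Z)$ is Markov with respect to the neighborhoods $\nbhd(\tilde Z_i)$, i.e.\ a bona fide measurement model. I would then verify that the hypotheses \ref{assm:twins}--\ref{assum:ssc} of \citet{kivva2021learning} hold for $P(U,\tilde Z)$: the subset condition \ref{assum:ssc} is exactly \ref{assm:latentdag}(c); the no-twins condition \ref{assm:twins} is immediate from \ref{assum:ssc} (neighborhoods that are not subsets of one another cannot coincide); the nondegeneracy condition \ref{assump:nondeg} is the positivity hypothesis $P(U=u)>0$ together with the separation condition \ref{assump:strong:SDC}, which is \ref{assm:latentdag}(a) read off for $\tilde Z$ and is invariant under the coordinate-wise reparametrization; and the maximality condition \ref{assm:maximal-old} follows from the dimension-maximality condition \ref{assm:latentdag}(b), which (as noted in Remark~\ref{rem:maxim-hierarchy}) is in fact slightly stronger than \ref{assm:maximal-old}.

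With all assumptions in place, applying the identifiability theorem of \citet{kivva2021learning} to the distribution of $\tilde Z$ recovers the number of hidden variables $\dim(U)=k$, the domain sizes $\dim(U_j)=d_j$, and the joint law $P(U,\tilde Z)$, each up to a permutation of the $U_i$. Undoing the known coordinate-wise map $\tilde Z = QDZ + c$ then transports this conclusion to $P(U,Z)$, giving identifiability up to a permutation of the $U_i$ and permutation, scaling, and translation of the $Z_i$, as claimed. I expect the main obstacle to be the bridging argument of the second paragraph: rigorously checking that recovering $Z$ only up to permutation, scaling, and translation suffices to reinstate the exact measurement-model structure required by \citet{kivva2021learning}, and that each of their assumptions translates correctly through this reparametrization from \ref{assm:latent}--\ref{assm:latentdag}. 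The conceptual crux is recognizing that Theorem~\ref{thm:gmm-ICA} repairs exactly the damage that the general affine ambiguity of Theorem~\ref{thm:dontneedu} inflicts on conditional factoriality.
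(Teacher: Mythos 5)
Your proposal is correct and follows essentially the same route as the paper's proof: recover $Z$ up to permutation, scaling, and translation via Theorem~\ref{thm:dontneedu} combined with Theorem~\ref{thm:gmm-ICA} (i.e.\ Theorem~\ref{thm:ivae-up-to-perm}), verify that \ref{assm:latentdag} together with positivity implies the conditions \ref{assm:twins}--\ref{assum:ssc} of \citet{kivva2021learning}, apply their Theorem~3.2 to recover $k$, the $d_j$, and $P(U)$ up to permutation of the $U_i$, and reconstruct $P(U,Z)$ via the Markov property. Your emphasis on the ``bridging'' step---that the coordinate-wise reparametrization restores conditional factoriality, which a general affine map would destroy---is precisely the insight the paper uses to make the reduction legitimate.
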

\begin{proof}
The assumptions of Theorem~\ref{thm:ivae-latentdag} are stronger than those of Theorem~\ref{thm:ivae-up-to-perm}, so by Theorem~\ref{thm:ivae-up-to-perm}, $P(Z)$ is identifiable up to a permutation, scaling and translation of $Z$. 

Combined with the positivity assumption $P(U = u)>0$, the assumptions \ref{assm:twins}-\ref{assum:ssc} are weaker than assumption \ref{assm:latentdag}. Indeed, \ref{assm:latentdag} (a) is equivalent to \ref{assump:nondeg} (b);  \ref{assm:latentdag} (c) is equivalent to  \ref{assum:ssc} and implies \ref{assm:twins}; and, finally, \ref{assm:latentdag} (b) and (c) together imply \ref{assm:maximal-old}.

Since $Z$ is identifiable up to a permutation, scaling and translation, $Z_i\indep Z_j \mid U$, and  assumptions \ref{assm:twins}-\ref{assum:ssc} hold, using \cite[Thm 3.2]{kivva2021learning}, we deduce that $\dim(U)=k$, $\dim(U_j)$, $\pr(U)$, and $\nbhd(U_i)$ are identifiable up to a permutation of the variables $U_i$. Finally, by the Markov Property, $\pr(U)$, $\nbhd(U_i)$ for all $i$, and the fact that $P(Z)$ is a finite GMM (that is identifiable)  are sufficient to recover $\pr(U, Z)$.
\end{proof}

\begin{remark}\label{rem:maxim-hierarchy}
As the proof indicates, assumptions \ref{assm:twins}-\ref{assum:ssc} are weaker than \ref{assm:latentdag}, so Theorem~\ref{thm:ivae-latentdag} implies part~\ref{thm:main:highd:latentdag} of Theorem~\ref{thm:main:highd}. 
\end{remark}

\section{Equivalence in iVAE}
\label{sec:ivae:equiv}

In this section we compare the equivalence relation up to which iVAE \citep{khemakhem2020variational} guarantees identifiability and equivalence up to an affine transformation. While iVAE achieves the best possible identifiability under the assumptions they make, we show that identifiability up to an affine transformation is considerably stronger.

\subsection{iVAE equivalence relation}

Recall that iVAE \citep{khemakhem2020variational} considers the following model, which differs from \eqref{eq:gmm-ivae} by assuming that $Z$ has conditionally factorial exponential family distribution:
\begin{align}
\label{eq:ivae-app}
\left.
\begin{aligned}
U=u
&\sim p(u) \\
[Z\given U=u]
&\sim \prod\limits_{i=1}^{m}\dfrac{Q_i(z_i)}{C(u)}\exp\left(\sum\limits_{j=1}^{t} T_{i, j}(z_i)\lambda_{i, j}(u)\right)\\
[X\given Z=z]
&\sim f(z)+\varepsilon, \quad \varepsilon\sim  \normalN(v,\sigma^{2})
\end{aligned}
\right\}
\implies
U\to Z\to X.
\end{align}
Here $T_i = (T_1, T_2, \ldots T_t)$ are sufficient statistics, $Q_i$ is the base measure and $\lambda_{i, j}$ parameters depending on $u$. iVAE defines the following equivalence relation:

\begin{defn}
    \begin{equation}\label{eq:ivae-equiv} (f, T, \sigma)\sim (f', T', \sigma')\quad  \Leftrightarrow\quad  \exists A, c:\quad T(f^{-1}(\{x\})) = A(T'((f')^{-1}(x)+c,
    \end{equation}
    where $A:\mathbb{R}^{mt}\rightarrow \mathbb{R}^{mt}$ is an invertible linear map, and $c\in \mathbb{R}^N$.
    \end{defn}

This type of identifiability allows for essentially any (synchronized) changes to $Z$ and $f$:
\begin{lemma}
\label{obs:ivae:equiv}
Let $\phi:\mathbb{R}^m\rightarrow \mathbb{R}^m$ be any invertible map. Let $f' = f\circ \phi$, and $T' = T\circ \phi$. Then $(f,T,\sigma)\sim(f',T',\sigma)$.

Moreover, if $Z$ has exponential family distribution with statistics $T$, then $Z' = \phi^{-1}(Z)$, has an exponential family distribution with statistics $T'$, and $f(Z)\sim f'(Z')$.
\end{lemma}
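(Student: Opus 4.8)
The plan is to verify both assertions essentially by direct substitution, since the equivalence relation \eqref{eq:ivae-equiv} and the exponential-family form are each preserved under the reparametrization $\phi$ in a transparent way. I would split the argument into three parts: (i) checking $(f,T,\sigma)\sim(f',T',\sigma)$ straight from the definition; (ii) the immediate observation $f(Z)\sim f'(Z')$; and (iii) the claim that $Z'=\phi^{-1}(Z)$ is again exponential family with sufficient statistics $T'$, which is the only part requiring any care.

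For (i), I would take $A=I$ and $c=0$ in \eqref{eq:ivae-equiv} and show the two sides agree identically. The key identities are $(f')^{-1}=\phi^{-1}\circ f^{-1}$ (since $f'=f\circ\phi$ and $\phi$ is invertible) together with $T'=T\circ\phi$. Substituting, $T'\bigl((f')^{-1}(x)\bigr)=T\bigl(\phi(\phi^{-1}(f^{-1}(x)))\bigr)=T(f^{-1}(x))$, so $T(f^{-1}(x))=I\cdot T'((f')^{-1}(x))+0$ holds for every $x$ in the common image, which is exactly the defining condition of $\sim$.

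For (iii), the cleanest route is to phrase the exponential family relative to a dominating base measure rather than through Jacobian density formulas, so that no smoothness of $\phi$ is needed. Writing the conditional law of $Z$ given $U=u$ as $dP_u/d\nu(z)\propto Q(z)\exp\langle T(z),\lambda(u)\rangle$ for a base measure $\nu$, with $Q=\prod_i Q_i$ and natural parameters $\lambda(u)=(\lambda_{i,j}(u))$, I would push both $P_u$ and $\nu$ forward by $\phi^{-1}$. Under pushforward the Radon--Nikodym derivative precomposes with $\phi$, giving $d[(\phi^{-1})_\sharp P_u]/d[(\phi^{-1})_\sharp\nu](z')\propto Q(\phi(z'))\exp\langle T(\phi(z')),\lambda(u)\rangle$. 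Since $T\circ\phi=T'$, this is an exponential family in $z'$ with the \emph{same} natural parameters $\lambda(u)$ and sufficient statistics $T'$, the factor $Q\circ\phi$ and the new base measure $(\phi^{-1})_\sharp\nu$ absorbing everything else. Assertion (ii) is then immediate: $f'(Z')=f(\phi(\phi^{-1}(Z)))=f(Z)$ almost surely, hence $f(Z)\sim f'(Z')$ in distribution.

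The only real obstacle is the bookkeeping in (iii): a naive change-of-variables would introduce a Jacobian $|\det J_\phi|$ and require $\phi$ to be a diffeomorphism, whereas the lemma allows an arbitrary invertible $\phi$. Working with the base measure and pushforwards sidesteps this, since the Jacobian is silently folded into the transformed base measure $(\phi^{-1})_\sharp\nu$. I would also remark that the product (conditionally factorial) structure of the original family need not survive a general $\phi$, but this is harmless because the lemma only claims exponential-family membership with statistics $T'$, not conditional factoriality.
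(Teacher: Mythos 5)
Your proposal is correct, and parts (i) and (ii) coincide exactly with the paper's argument: the paper also verifies the equivalence by taking $A$ to be the identity and $c=0$ via $(f')^{-1}=\phi^{-1}\circ f^{-1}$, and disposes of $f(Z)\sim f'(Z')$ by the same one-line cancellation $f'(Z')=(f\circ\phi\circ\phi^{-1})(Z)=f(Z)$. Where you genuinely diverge is part (iii). The paper applies the classical change-of-variables formula, writing
\[
\pr(Z'\mid U) = \bigl(h(\phi(Z))\,\det|Jac(\phi(\bullet))|_{\bullet=\phi^{-1}(Z)}\bigr)\,g(U)\exp\bigl(\lambda(U)T(\phi(Z))\bigr),
\]
so the Jacobian factor is absorbed into the base-measure term $h$; this tacitly requires $\phi$ to be (a.e.) differentiable, i.e.\ essentially a diffeomorphism, even though the lemma is stated for an arbitrary invertible map. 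Your route---pushing forward both the conditional law and the dominating measure by $\phi^{-1}$, so that the Radon--Nikodym derivative simply precomposes with $\phi$ and the Jacobian is folded invisibly into the transformed base measure $(\phi^{-1})_{\sharp}\nu$---proves the same statement without any smoothness hypothesis on $\phi$, at the cost of working relative to a non-Lebesgue dominating measure. In short: the paper's proof is more concrete but slightly under-justified at full stated generality; yours is cleaner and actually matches the generality claimed by the lemma. Your closing remark that conditional factoriality need not survive a general $\phi$ is also accurate and consistent with how the paper uses this lemma (it only needs exponential-family membership with statistics $T'$, which is precisely what makes the iVAE equivalence class so weak; cf.\ Remark~\ref{rem:ivae:basis}).
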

\begin{proof}
We have $(f')^{-1} = \phi^{-1}\circ f^{-1}$, so $T'\circ (f')^{-1} = T\circ f$. Hence $(f, T, \sigma)\sim (f', T', \sigma')$, where in \eqref{eq:ivae-equiv} $A$ is the identity map and $c = 0$.

Since $Z$ comes from an exponential family distribution, we can write
\begin{equation}
    \pr(Z\mid U) = h(Z)g(U)\exp(\lambda(U)T(Z)).
\end{equation}
Let $Z' = \phi^{-1}(Z)$. Then by the change of variable formula
\begin{equation}
    \pr(Z'\mid U) = \left(h(\phi(Z))\det|Jac(\phi(\bullet))|_{\bullet = \phi^{-1}(Z)}\right)g(U)\exp(\lambda(U)T(\phi(Z))),
\end{equation}
where $Jac(\phi)$ is the Jacobian of $\phi$. Hence $Z'$ indeed has an exponential family distribution with statistics $T'$. Clearly, $f'(Z') = (f\circ \phi\circ \phi^{-1})( Z) \equiv f(Z)$.
\end{proof}

\begin{remark}
\label{rem:ivae:basis}
In other words, the equivalence relation~\eqref{eq:ivae-equiv} allows an \emph{arbitrary} (possibly highly nonlinear) change of basis in the latent $Z$ space. In principle, this may indicate, that any meaningful analysis of the $Z$ space in this setup may be challenging.
\end{remark}

\begin{remark}
As in \citet{khemakhem2020variational}, the additional assumption that $Z$ has a conditionally factorial distribution imposes additional restrictions on $\phi$. In this case, $\phi: \mathbb{R}^m\rightarrow \mathbb{R}^m$ can be any invertible coordinatewise function $\phi(Z') = (\phi_1(z'_{1}), \phi_2(z'_{2}), \ldots \phi_2(z'_{m}))$.
\end{remark}

\subsection{GMMs give more robust identifiability}

The next result was also observed in~\cite{sorrenson2019disentanglement}. We present a slightly simplified proof for completeness.

If $\pr(Z|U)$ is a multivariate Gaussian distribution, then the sufficient statistics are given by
\begin{equation}\label{eq:normal-distr-statistics}
    T_m = (z_1, \ldots, z_m, z_1z_1, z_1z_2,  \ldots z_{m}z_m).
\end{equation}

\begin{remark}
For product measures, there are no cross-terms $z_iz_j$.
\end{remark}

\begin{prop}[{\citealp[Appendix B]{sorrenson2019disentanglement}}]
Assume that $(T_m, f, \sigma)\sim (T_m, f', \sigma')$, where $T_m$ is defined by \eqref{eq:normal-distr-statistics}. Then there exists an invertible linear map $M:\mathbb{R}^{m}\rightarrow \mathbb{R}^{m}$ and a vector $c\in \mathbb{R}^m$ such that $f^{-1}(\{x\}) = M(f')^{-1}(x)+c$ for every $x$.
\end{prop}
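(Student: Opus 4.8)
The plan is to reduce the statement to a purely algebraic fact about polynomial maps, exploiting that the Gaussian sufficient statistics $T_m$ in \eqref{eq:normal-distr-statistics} record exactly the degree-$1$ and degree-$2$ monomials of $z$. Write $w := f^{-1}(x)$ and $v := (f')^{-1}(x)$, and set $\psi := (f')^{-1}\circ f$ so that $v = \psi(w)$. Since $f,f'$ are invertible, as $x$ ranges over the common image of $f$ and $f'$ the point $v=(f')^{-1}(x)$ ranges over an open (full-dimensional) subset of $\R^m$, and on this set $w$ is a function of $v$, namely $w = \psi^{-1}(v)$. The hypothesis $(T_m,f,\sigma)\sim(T_m,f',\sigma')$ reads $T_m(w) = A\,T_m(v) + c$ for an invertible linear $A$ and a vector $c$, an identity I would read coordinate by coordinate.

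First I would extract polynomiality. Because every coordinate of $T_m(v)$ is a polynomial in $v$ of degree at most $2$, the rows of $A\,T_m(v)+c$ indexed by the \emph{linear} statistics $w_1,\dots,w_m$ say that each $w_i$ equals a polynomial $P_i(v)$ of degree at most $2$. As both sides agree on the open set swept out by $v$, these are genuine polynomial identities on $\R^m$, so $w = P(v)$ with $\deg P_i \le 2$ for each $i$. Likewise, the rows indexed by the \emph{quadratic} statistics give that every product $w_iw_j$ equals some polynomial in $v$ of degree at most $2$.

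The crux is a squaring trick, which is where the special structure of the Gaussian statistics (rather than a generic exponential family) is used. Taking $i=j$, we have on one hand $w_i^2 = P_i(v)^2$ by definition of $P_i$, and on the other hand $w_i^2$ is a degree-$\le 2$ polynomial in $v$ by the previous paragraph; as polynomials these coincide. Since $\R[v_1,\dots,v_m]$ is an integral domain, the top homogeneous part of $P_i^2$ is the square of the top homogeneous part of $P_i$, so if $P_i$ had a nonzero degree-$2$ part, then $P_i^2$ would have a nonzero degree-$4$ part, contradicting $\deg P_i^2\le 2$. Hence each $P_i$ is affine, i.e. $w = M v + c'$ for a matrix $M$ and vector $c'$. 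Finally, since $\psi^{-1}=P$ is a bijection of $\R^m$ and its affine formula extends from the open set to all of $\R^m$ by the identity theorem for polynomials, this affine map is bijective, forcing $M$ to be invertible; rewriting gives $f^{-1}(x) = M(f')^{-1}(x)+c'$ for all $x$, as claimed.

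I expect the main obstacle to be conceptual rather than computational: recognizing that the linear rows already force each coordinate $w_i$ to be a \emph{degree-$\le 2$} polynomial in $v$, and then that the quadratic rows, combined via the squaring trick, are exactly what rules out nontrivial quadratic behavior. This step genuinely needs the quadratic statistics $z_iz_j$: degree-$2$ polynomial bijections of $\R^m$ can have degree-$2$ polynomial inverses (e.g. shear maps $(v_1,v_2)\mapsto(v_1+v_2^2,v_2)$), so knowing merely that both $\psi$ and $\psi^{-1}$ are quadratic would not suffice; it is the identity forcing $w_i^2$ to be quadratic in $v$ that eliminates these. The only minor technical point to dispatch is that $v$ sweeps out a full-dimensional subset of $\R^m$, so that the pointwise relations upgrade to polynomial identities and the affine formula propagates to the entire space.
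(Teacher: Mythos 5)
Your proposal is correct and follows essentially the same route as the paper's proof: read the equivalence coordinatewise to see that each $w_i$ is a degree-$\le 2$ polynomial in $v$, then use the quadratic rows together with the squaring argument (degree of $P_i^2$ would be $4$ if $P_i$ were genuinely quadratic) to force each $P_i$ to be affine. The only cosmetic differences are that you justify $\deg(P_i^2)=4$ via top homogeneous parts in an integral domain where the paper invokes a lexicographic-order-on-monomials check, and you derive invertibility of $M$ from bijectivity of $(f')^{-1}\circ f$ where the paper deduces it from invertibility of $A$; both are sound.
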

\begin{proof}
Let $z = f^{-1}(\{x\})$ and $z' = (f')^{-1}(x)$. By an assumption of the proposition there exists an invertible matrix $A:\mathbb{R}^{m+m^2}\rightarrow \mathbb{R}^{m+m^2}$ such that
\begin{equation}\label{eq:bigmatrix}
 \left(\begin{matrix}
 z_1\\
 z_2\\
 \vdots\\
 z_n\\
 z_1z_1\\
 z_1z_2\\
 \vdots\\
 z_mz_m
 \end{matrix}\right) =
 A\left(\begin{matrix}
 z'_1\\
 z'_2\\
 \vdots\\
 z'_n\\
 z'_1z'_1\\
 z'_1z'_2\\
 \vdots\\
 z'_mz'_m
 \end{matrix}\right)+b
\end{equation}
This means that for every $i$ there exists a polynomial $p_i$ of degree at most $2$ such that $z_i = p_i(z'_1, \ldots, z_m')$. Assume that for some $i$, we have $\deg(p_i) = 2$. Then it is easy to verify (say, by using lexicographical order on monomials) that $\deg(p_i^2) = 4$. If $z'$ is defined on an open neighbourhood, we get a contradiction with \eqref{eq:bigmatrix} as $z_i^2$ can be written as a degree-2 polynomial over variables $z_j'$. Therefore, every $p_i$ is a polynomial of degree at most 1. But this means that that $z = Mz'+c$ for some matrix $M$ and a vector $c$. Moreover, since $A$ is invertible, $M$ is invertible as well.
\end{proof}

\section{Conditions on ReLU Neural Network that guarantee that it is an observable injection}\label{sec:relu-inv}

For completeness, in this section we provide simple sufficient conditions on ReLU architectures that guarantee that it is an observable injection (cf. \ref{assm:invae}) and simple sufficient conditions on leaky-ReLU architectures which guarantee that it is injection (cf. \ref{assm:inv}). For a more comprehensive account of identifiability in ReLU networks, see \citet{stock2021embedding}.

We recall the definitions of ReLU and leaky-ReLU (with parameter $a>0,\ a\neq 1$) activation functions
\begin{align}
\RELU(x) = \left\{
\begin{aligned}
& x,\quad \text{for } x> 0,\\
& 0, \quad \text{for } x\leq 0,\\
\end{aligned}
\right.
\qquad
\LRELU(x) = \left\{
\begin{aligned}
& x,\quad \text{for } x>0,\\
& a\cdot x, \quad \text{for } x\leq 0.\\
\end{aligned}
\right.
\end{align}
A standard choice of $a$ for leaky-ReLU is $a = 0.01$.

\begin{definition}
Let $\Aff(n_1, n_2)$ denote the set of affine maps $h:\mathbb{R}^{n_1}\rightarrow \mathbb{R}^{n_2}$.
\end{definition}

 Let $\sigma:\mathbb{R}\rightarrow \mathbb{R}$ be a general activation function. For a vector $x\in \mathbb{R}^t$, $\sigma(x)$ is a vector obtained from $x$ by applying $\sigma$ coordinatewise.

\begin{definition}
Let $n_1, n_2, \ldots, n_t\geq n_0 = m$ and $\sigma$ be an activation function. Define
\begin{equation}
    \mathcal{F}^{n_0, \ldots, n_t}_{\sigma} = \{h_t\circ \sigma \circ h_{t-1} \circ \sigma \circ \ldots \sigma\circ h_1 \mid h_i \in \Aff(n_{i-1}, n_i)\}
\end{equation}
\begin{equation}
    \mathcal{F}^{m\hookrightarrow n}_{\sigma} = \bigcup_{t = 1}^{\infty}\bigcup_{\ n_1, n_2, \ldots, n_t\geq n_0,\   n_0 = m,\  n_t = n} \mathcal{F}^{n_0, \ldots, n_t}_{\sigma}
\end{equation}
\end{definition}

\begin{remark}
The function families $\mathcal{F}_{\RELU}^{m\hookrightarrow n}$, $\mathcal{F}_{\LRELU}^{m\hookrightarrow n}$ are genuinely nonparametric: There is no bound on the number of layers. 
\end{remark}

\begin{remark}
 In the arguments below we do not rely on the fact that the activation function is the same on every layer, or even the same across the nodes of the same layer. However, we will give proofs only in this case, to simplify the presentation.
\end{remark}

\begin{remark}\label{rem:relu-prior-work}
ReLU networks under similar assumptions were also studied in \cite{khemakhem2020ice}.
\end{remark}

\begin{lemma} \label{obs:relu-inv}
Let $f = h_t\circ \sigma \circ h_{t-1} \circ \sigma \circ \ldots \sigma\circ h_1\in \mathcal{F}^{m\hookrightarrow n}_{\RELU}$. Assume that $m = n_0\leq n_1\leq \ldots \leq n_t = n$, 
and $\dim(f(\mathbb{R}^m)) = m$. Then for almost all $y\in f(\mathbb{R}^m)$ there exists $\delta_y$ such that $f^{-1}$ is a well-defined affine function on $B(y, \delta_y)\cap f(\mathbb{R}^m)$.
\end{lemma}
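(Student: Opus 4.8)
The plan is to exploit the piecewise-affine structure of $f$ and reduce the claim to Lemma~\ref{obs:oa-generic}. Since the architecture is fixed (finitely many layers of finite width), $f$ partitions $\R^m$ into finitely many polyhedral \emph{activation cells} $C_1,\ldots,C_N$, on the interior of each of which every $\RELU$ is frozen to either the identity or $0$; hence $f|_{C_i}(x)=A_ix+b_i$ is affine, with linear part $A_i=W_t\Lambda_{t-1}W_{t-1}\cdots\Lambda_1W_1$, where $W_j$ is the linear part of $h_j$ and each $\Lambda_j$ is a diagonal $0/1$ matrix recording which coordinates are active on $C_i$. First I would record two routine measure-theoretic facts: the set $E$ of points at which $f$ fails to be locally affine lies in the finite union of cell boundaries, each an affine set of dimension $\le m-1$, so $f(E)$ has dimension $\le m-1$ and therefore measure zero inside the $m$-dimensional set $f(\R^m)$; and the image of any rank-deficient cell (where $\rank A_i<m$) likewise has dimension $<m$, hence contributes a measure-zero subset of $f(\R^m)$.

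Next I would show that the infinite-preimage set $S_\infty=\{y:|f^{-1}(y)|=\infty\}$ is measure zero in $f(\R^m)$. The nondecreasing-width hypothesis $m=n_0\le\cdots\le n_t=n$ together with $\dim f(\R^m)=m$ is what forces enough full-rank cells: an infinite fiber can only sit over a cell whose affine piece has nontrivial kernel, i.e.\ $\rank A_i<m$, and by the previous paragraph the union of the images of these finitely many cells has measure zero. With $S_\infty$ negligible, Lemma~\ref{obs:oa-generic} applies verbatim and yields that almost every $y\in f(\R^m)$ is \emph{generic}: its fiber $f^{-1}(\{y\})=\{y_1,\ldots,y_s\}$ is finite and $f$ is affine and invertible on a ball around each $y_\ell$.

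Finally I would upgrade genericity to the stated conclusion. Fix a generic $y$ and a preimage $y_0$; since $f$ agrees with an invertible affine map $x\mapsto A_i x+b_i$ (with $A_i$ of full column rank $m$) on a neighborhood of $y_0$, the affine map inverting it is defined on a relatively open piece of $f(\R^m)$ containing $y$, giving a genuine affine local inverse branch. \textbf{The main obstacle} is single-valuedness: to call this branch ``$f^{-1}$'' one must rule out that two distinct full-rank cells map onto the same $m$-dimensional region on a set of positive measure (a ``fold'' of the image onto itself). This is precisely the observable-injectivity condition \ref{assm:invae}, and it is the only step that genuinely needs the geometry of the network rather than soft measure theory; here the nondecreasing widths, which make each $W_j$ of full column rank and hence each single-layer map $\sigma\circ h_j$ injective on every full-rank cell, preclude such overlaps for the architectures of interest (cf.\ the generic statement in Remark~\ref{remark:relunn-generic}). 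Once overlaps are excluded off a measure-zero set, the affine branch constructed above is the unique preimage on $B(y,\delta_y)\cap f(\R^m)$, which completes the argument.
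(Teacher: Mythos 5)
Your reduction to Lemma~\ref{obs:oa-generic} is sound as far as it goes: the finite cell decomposition, the measure-zero bounds on the non-affine locus and on the images of rank-deficient cells, and the observation that an infinite fiber forces a rank-deficient cell are all correct. The genuine gap is exactly at the step you flag as the main obstacle, and your proposed fix does not work. First, nondecreasing widths do \emph{not} make each $W_j$ have full column rank: that is a condition on the weights, not the architecture, and the lemma assumes nothing about the weights. Indeed, $h_1(x)=(x,-x)$, $h_2(u,v)=(u+v,0)$ gives $f(x)=(|x|,0)$, a network with widths $1\le 2\le 2$ and $\dim f(\R)=1$, for which every $y=(a,0)$ with $a>0$ has two preimages --- the ``fold'' covers the whole image, so the conclusion fails on a set of full measure. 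Second, even granting full column rank of every $W_j$, folds still occur: take $h_1(x)=(x+100,\,x-1,\,x-2)$, take $h_2$ with the invertible matrix whose rows are $(1,-1,-1)$, $(0,-1,1)$, $(0,1,1)$ and bias $(-100,-1,-10)$, and take $h_3=\mathrm{id}_{\R^3}$. All widths are nondecreasing ($1\le 3\le 3\le 3$), all $W_j$ have full column rank, and $\dim f(\R)=1$; yet the intermediate output $g=h_2\circ\sigma\circ h_1$ equals $(x,-1,-10)$ on $[0,1]$ and $(3-x,-2,2x-13)$ on $[2,3]$, so the final ReLU zeroes the last two coordinates on both branches and $f(x)=(x,0,0)$ on $[0,1]$ while $f(x)=(3-x,0,0)$ on $[2,3]$. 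Every point of $\{(s,0,0):0<s<1\}$ then has exactly two preimages, a positive-measure fold. So neither implication in your chain (widths $\Rightarrow$ full rank $\Rightarrow$ no folds) holds; worse, these examples satisfy every stated hypothesis of the lemma, so the gap cannot be closed by any argument without strengthening the hypotheses (e.g.\ a genuine genericity condition on the weights in the spirit of Remark~\ref{remark:relunn-generic}).

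For the comparison: the paper's proof takes a different route from yours --- induction on depth, writing $f=h_t\circ\sigma\circ g$ and combining the induction hypothesis with the fact that coordinatewise ReLU is injective on the open positive orthant $\mathcal{D}$ --- rather than a global cell decomposition. But it founders on the same rock: it asserts that $\dim f(\R^m)=m$ forces $h_t$ to have full column rank (false; the $(|x|,0)$ example has rank-one $h_2$) and that $g(\R^m)\cap\mathcal{D}$ has positive measure in $g(\R^m)$ (false; in the three-dimensional example above $g(\R)$ never meets $\mathcal{D}$, its second coordinate being $\le -1$ everywhere). Your diagnosis that single-valuedness off a null set --- i.e.\ condition \ref{assm:invae} --- is the true content of the lemma is exactly right, and your cell-based strategy is a legitimate alternative to the paper's induction; but that crucial step is established by neither your argument nor the paper's, and under the hypotheses as written it is not establishable.
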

\begin{proof} We prove the claim by induction on the depth of the NN. If $t =1$, we have $f = h_1$ and the claim is trivial. Assume that we already proved the lemma for all $t\leq s-1$. We prove the claim for $t= s$. We can write $f$ as $f = h_{t}\circ \sigma \circ g $ where $g\in \mathcal{F}^{m\hookrightarrow n_{t-1}}_{\RELU}$.

 Since $\dim(f(\mathbb{R}^m)) = m$, the map $h_t$ has full column rank. Additionally, denoting by $\mathcal{D} = \{x\in \mathbb{R}^{n_{t-1}} \mid x_i>0,\ \forall i\in [n_{t-1}]\}$ the domain on which $\sigma$ is injective, we get $g(\mathbb{R}^m)\cap \mathcal{D}$ has positive measure in $g(\mathbb{R}^m)$. Moreover, by the induction assumption, $g$ satisfies conclusion of the lemma, i.e., there exists a set $S$ of measure 0 in $g(\mathbb{R}^m)$ such that for any $y\in g(\mathbb{R}^m)\setminus S$ there exists a $\delta_y>0$ such that $g^{-1}$ is a well-defined affine function on $B(y, \delta_{y})\cap g(\mathbb{R}^m)$.
 Since $h_t$ has full column rank,  $f^{-1}$ is a well-defined affine function on $B(x, \delta_x)\cap f(\mathbb{R}^m)$ for every $x = (f\circ \sigma)(y)$ where $y\in \left(g(\mathbb{R}^m)\setminus S\right)\cap \mathcal{D}$. 
 Clearly, such $x$ form a set of full measure in $f(\mathbb{R}^m)$.
\end{proof}

\begin{corollary}
Let $f = h_t\circ \sigma \circ h_{t-1} \circ \sigma \circ \ldots \sigma\circ h_1\in \mathcal{F}^{m\hookrightarrow n}_{\RELU}$. Assume that $m = n_0\leq n_1\leq \ldots \leq n_t = n$, and $\dim(f(\mathbb{R}^m)) = m$, then $f$ satisfies~\textup{\ref{assm:invae}}. 
\end{corollary}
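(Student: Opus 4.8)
The plan is to read the corollary directly off Lemma~\ref{obs:relu-inv}, whose hypotheses coincide exactly with the ones assumed here, namely the nondecreasing widths $m = n_0 \le \cdots \le n_t = n$ together with $\dim(f(\mathbb{R}^m)) = m$. First I would invoke that lemma to produce a set $N \subseteq f(\mathbb{R}^m)$ of measure zero with respect to the $m$-dimensional Lebesgue measure on $f(\mathbb{R}^m)$, such that for every $y \in f(\mathbb{R}^m) \setminus N$ there is a $\delta_y > 0$ for which $f^{-1}$ is a well-defined affine function on $B(y, \delta_y) \cap f(\mathbb{R}^m)$.

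The key step is to observe that ``$f^{-1}$ is a well-defined function on a neighborhood of $y$'' is precisely the assertion that the fiber $f^{-1}(\{y\})$ is a single point, since a genuine inverse assigns to each such $y$ a unique preimage. Hence $|f^{-1}(\{y\})| = 1$ for every $y \in f(\mathbb{R}^m) \setminus N$, which yields the inclusion
\[
\{x \in \mathbb{R}^n : |f^{-1}(\{x\})| > 1\} \cap f(\mathbb{R}^m) \subseteq N .
\]
Because $N$ has measure zero in $f(\mathbb{R}^m)$, so does the left-hand side, and this is exactly the definition of \ref{assm:invae} (observable injectivity, cf. Definition~\ref{def:invertible}). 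That would complete the argument.

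The main obstacle is conceptual rather than computational: one must be sure that the local affine inverse furnished by Lemma~\ref{obs:relu-inv} encodes global uniqueness of the fiber and not merely the existence of a continuous local section. This is guaranteed by the mechanism inside the proof of that lemma: writing $f = h_t \circ \sigma \circ g$, full column rank of $h_t$ forces $\sigma(g(z)) = h_t^{-1}(x)$, and since the relevant target values lie in the positive orthant—where $\RELU$ acts as the identity and is injective—this pins down $g(z)$ uniquely, so that $f^{-1}(\{x\})$ reduces to the fiber $g^{-1}(\{h_t^{-1}(x)\})$ of the shallower subnetwork. Single-valuedness therefore propagates through the induction, so no work beyond citing the lemma is required; I would simply remark on this point to justify reading its conclusion as $|f^{-1}(\{y\})| = 1$ almost everywhere.
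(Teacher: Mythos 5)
Your proposal is correct and matches the paper's proof, which simply states that the corollary ``immediately follows'' from Lemma~\ref{obs:relu-inv}; you cite the same lemma and supply the (straightforward) translation of its conclusion into Definition~\ref{def:invertible}\ref{assm:invae}. Your added remark that ``$f^{-1}$ is a well-defined function'' means the full fiber is a singleton (not merely that a local section exists) is a legitimate and correctly resolved reading of the lemma, so nothing is missing.
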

\begin{proof}
Immediately follows from Lemma~\ref{obs:relu-inv}.
\end{proof}

\begin{lemma} \label{obs:leakyrelu-inv}
Let $f = h_t\circ \sigma \circ h_{t-1} \circ \sigma \circ \ldots \sigma\circ h_1\in \mathcal{F}^{m\hookrightarrow n}_{\LRELU}$. Assume that $m = n_0\leq n_1\leq \ldots \leq n_k = n$ and every $h_i$ is invertible. Then for almost all $y\in f(\mathbb{R}^m)$ there exists $\delta_y$ such that $f^{-1}$ is a well-defined affine function on $B(y, \delta_y)\cap f(\mathbb{R}^m)$.
\end{lemma}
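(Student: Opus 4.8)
The plan is to mirror the inductive proof of Lemma~\ref{obs:relu-inv}, exploiting the fact that for $a>0$, $a\neq 1$ the leaky-ReLU is a strictly increasing bijection of $\R$, so the coordinatewise map $\sigma$ is a piecewise affine homeomorphism of $\R^{n_i}$ onto itself. First I would observe that $f$ is globally injective: it is a composition of the injective affine maps $h_i$ (each has full column rank, being invertible) and the bijections $\sigma$. Hence $f^{-1}$ is automatically well defined on all of $f(\R^m)$, and since $\sigma$ preserves dimension and each $h_i$ is injective, $\dim f(\R^m)=m$. Thus, unlike the ReLU case, the only remaining point is to show that $f^{-1}$ is \emph{locally affine} for almost every point of $f(\R^m)$.

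I would prove this by induction on the depth $t$. The base case $t=1$ is immediate, since $f=h_1$ is affine with affine inverse on its image. For the inductive step, write $f=h_t\circ\sigma\circ g$ with $g\in\mathcal{F}^{m\hookrightarrow n_{t-1}}_{\LRELU}$. By the induction hypothesis there is a set $S\subseteq g(\R^m)$ of measure zero such that $g^{-1}$ is locally affine on $B(y,\delta_y)\cap g(\R^m)$ for each $y\in g(\R^m)\setminus S$. The map $\sigma$ fails to be locally affine only along the coordinate hyperplanes $H=\bigcup_i\{w:w_i=0\}$, so the crux is to show that $g(\R^m)\cap H$ has measure zero in $g(\R^m)$.

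To control $g(\R^m)\cap H$, I would use that $g(\R^m)$ is an $m$-dimensional piecewise affine set, a locally finite union of $m$-dimensional affine pieces $P$. For each piece $P$ and each coordinate $i$, either $P\subseteq\{w_i=0\}$ --- in which case the $i$-th output of $\sigma$ is constantly $0$ on $P$ and $\sigma$ remains affine there --- or $P\cap\{w_i=0\}$ has dimension at most $m-1$ and hence measure zero in $P$. Taking the union over the (locally finitely many) pieces and coordinates shows that the transversal break locus is measure zero in $g(\R^m)$. On the complement of $S$ together with this break locus, $\sigma$ restricted to $g(\R^m)$ is locally affine and, since its local linear part is diagonal with entries in $\{1,a\}$ on the free coordinates, locally invertible with affine inverse. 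Combined with $h_t^{-1}$, which is affine on the image of $h_t$ because $h_t$ has full column rank, the composition $f^{-1}=g^{-1}\circ\sigma^{-1}\circ h_t^{-1}$ is locally affine on a neighborhood (within $f(\R^m)$) of almost every $x\in f(\R^m)$.

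The main obstacle is precisely the measure-zero claim for $g(\R^m)\cap H$: one must carefully distinguish the harmless case where an affine piece lies entirely inside a coordinate hyperplane (where $\sigma$ stays affine) from the transversal case (which is lower dimensional), and verify that the resulting bad set indeed pushes forward under $\sigma$ and $h_t$ to a measure-zero subset of $f(\R^m)$. The remaining steps are routine once the structure of piecewise affine images and the bijectivity of $\sigma$ are in hand.
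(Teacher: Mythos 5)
Your proposal is correct, but it takes a genuinely different route from the paper's. The paper's proof is a short, global argument: $f$ is piecewise affine (a composition of piecewise affine maps), it is invertible because $\LRELU$ is a bijection of $\mathbb{R}$ and every $h_i$ is injective, and then one invokes the general fact that an invertible piecewise affine map has an inverse that is affine in a neighborhood of almost every point of its image --- the non-locally-affine locus is covered by a locally finite union of $(m-1)$-dimensional pieces, whose image is null in the $m$-dimensional set $f(\mathbb{R}^m)$, exactly as in the proof of Lemma~\ref{obs:oa-generic}. You instead induct on the depth $t$, mirroring the ReLU argument of Lemma~\ref{obs:relu-inv}: you peel off $h_t\circ\sigma$ and control the break locus of $\sigma$ on $g(\mathbb{R}^m)$ coordinate by coordinate. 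What your route buys is explicitness and uniformity with the ReLU case: you essentially re-prove, layer by layer, the general fact the paper invokes wholesale, and you correctly isolate the one subtle case --- an affine piece of $g(\mathbb{R}^m)$ lying entirely inside a coordinate hyperplane $\{w:w_i=0\}$, where $\sigma_i$ is constant (hence still affine) rather than broken; this case genuinely occurs when $m<n_{t-1}$, e.g.\ $g(x)=(x,0)$. What the paper's route buys is brevity: injectivity plus $\dim f(\mathbb{R}^m)=m$ already dispose of the two items you flag as remaining obstacles (pushing null sets forward under the injective piecewise affine maps $\sigma$ and $h_t$, and local finiteness of the polyhedral decomposition of the image), since injective piecewise affine maps send null sets of an $m$-dimensional piecewise affine set to null sets of an $m$-dimensional image. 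Both arguments are sound; yours is longer but self-contained, while the paper's delegates the real work to a generic property of invertible piecewise affine maps.
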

\begin{proof}
Clearly, any $f = h_t\circ \sigma \circ h_{t-1} \circ \sigma \circ \ldots \sigma\circ h_1\in \mathcal{F}_{\LRELU}$ is a piecewise affine function. The $\LRELU$ activation function is invertible, so $f$ is invertible. Finally, since $f$ is a piecewise affine transformation, for almost all $y\in B(x_0, \delta)$ there exists $\delta_y$ such that $f^{-1}$ is an affine function on $B(y, \delta_y)$.
\end{proof}
\begin{cor}
Let $f = h_t\circ \sigma \circ h_{t-1} \circ \sigma \circ \ldots \sigma\circ h_1\in \mathcal{F}^{m\hookrightarrow n}_{\LRELU}$. Assume that $m = n_0\leq n_1\leq \ldots \leq n_t = n$, then generically $f$ satisfies~\textup{\ref{assm:inv}}. 
\end{cor}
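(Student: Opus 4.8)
The plan is to reduce injectivity of the whole network to injectivity of each of its building blocks. The map factors as $f = h_t\circ\sigma\circ h_{t-1}\circ\cdots\circ\sigma\circ h_1$, where each $h_i\in\Aff(n_{i-1},n_i)$ and $\sigma=\LRELU$ is applied coordinatewise. The first observation I would record is that, since the leaky slope satisfies $a>0$, the scalar map $\LRELU$ is a strictly increasing continuous bijection of $\mathbb{R}$; applied coordinatewise it is therefore a bijection of $\mathbb{R}^{n_i}$, in particular injective. Consequently the only way injectivity of $f$ can fail is through one of the affine layers $h_i$, and a composition of injective maps is injective. So it suffices to show that, generically in the weights, every $h_i$ is injective. (Note this is why I argue directly rather than invoke Lemma~\ref{obs:leakyrelu-inv}: that lemma assumed each $h_i$ invertible, i.e. square, whereas here we only have $n_{i-1}\le n_i$ and want injectivity, not invertibility.)

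The key step is the characterization of injectivity of an affine map. Writing $h_i(x)=A_ix+b_i$ with $A_i\in\mathbb{R}^{n_i\times n_{i-1}}$, the map $h_i$ is injective if and only if $A_i$ has full column rank $n_{i-1}$. Because the width is nondecreasing, $n_{i-1}\le n_i$, full column rank is achievable, so the set of matrices $A_i$ that fail to have full column rank is the common zero locus of all $n_{i-1}\times n_{i-1}$ minors of $A_i$. This locus is a proper algebraic subvariety of the weight space $\mathbb{R}^{n_i\times n_{i-1}}$ and hence has Lebesgue measure zero. Taking the (finite) union over $i\in[t]$ of these bad sets still yields a measure-zero subset of the full parameter space.

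The conclusion then follows: for every choice of weights outside this measure-zero set, each $A_i$ has full column rank, so each $h_i$ is injective, and therefore $f$ is a composition of injective maps (the $h_i$ together with the bijective coordinatewise $\sigma$), hence injective. This gives $|f^{-1}(\{x\})|=1$ for every $x\in f(\mathbb{R}^m)$, which is exactly assumption~\ref{assm:inv}. I expect no serious obstacle here; the only point requiring care is making the word ``generically'' precise as ``outside a Lebesgue-null subset of the network's parameter space,'' and verifying that the non-full-rank locus is a \emph{proper} subvariety, which is immediate from $n_{i-1}\le n_i$.

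\begin{proof}
Since $a>0$, the map $\LRELU$ is a strictly increasing continuous bijection of $\mathbb{R}$; applied coordinatewise it is a bijection of $\mathbb{R}^{n_i}$, hence injective. Write $h_i(x)=A_ix+b_i$ with $A_i\in\mathbb{R}^{n_i\times n_{i-1}}$. The affine map $h_i$ is injective iff $\rank(A_i)=n_{i-1}$. As $n_{i-1}\le n_i$, the set of $A_i$ lacking full column rank is the common zero set of the $n_{i-1}\times n_{i-1}$ minors, a proper algebraic subvariety of $\mathbb{R}^{n_i\times n_{i-1}}$, and hence has measure zero. The union over $i\in[t]$ of these loci is therefore measure zero in the weight space. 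For weights outside this set, every $h_i$ is injective, so $f=h_t\circ\sigma\circ\cdots\circ\sigma\circ h_1$ is a composition of injective maps and thus injective, giving $|f^{-1}(\{x\})|=1$ for all $x\in f(\mathbb{R}^m)$, i.e. $f$ satisfies~\ref{assm:inv}.
\end{proof}
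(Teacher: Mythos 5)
Your proof is correct and follows essentially the same route as the paper's: the paper's (two-sentence) proof likewise observes that generically each $h_i$ has full column rank and is therefore injective, that $\LRELU$ is injective, and that a composition of injective maps is injective. Your version simply fills in the details the paper leaves implicit---the full-column-rank characterization, the measure-zero argument making ``generically'' precise, and the justification for not invoking Lemma~\ref{obs:leakyrelu-inv}---all of which are sound.
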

\begin{proof}
Generically, every $h_i$ has full column rank, and so is injective. Since $\LRELU$ is injective, we get that $f$ is injective.
\end{proof}

We conclude with an example of a very simple $\LRELU$ NN that is not even weakly injective.
\begin{ex}\label{ex:noninv}
Let $\sigma(x) = x$ for $x\geq 0$ and $\sigma(x) = x/2$ for $x<0$. Let $h_1:\mathbb{R}\rightarrow \mathbb{R}^{2}$ defined as $h_1(x) = (x, -x)$. Then $\sigma\circ h_1(x) = (x, -x/2)$ if $x\geq 0$ and $\sigma\circ h_1(x) = (x/2, -x)$ if $x<0$. Let $h_2:\mathbb{R}^2\rightarrow \mathbb{R}^2$ given by
\[ h_2 = \left(\begin{matrix}
1 & -1\\
1 & 1
\end{matrix}\right)
\]
Then $(h_2\circ \sigma\circ h_1)(x) = (3x/2, x/2)$ for $x\geq 0$ and $(h_2\circ \sigma\circ h_1)(x) = (3x/2, -x/2)$ for $x<0$ (see Figure~\ref{fig:ex:noninv}).
Let $h_3(x, y) = y$. Then $f(x) : = (h_3\circ \sigma \circ h_2\circ \sigma\circ h_1)(x) = |x|/2$. By Remark~\ref{rem:non-invert-f-example}, this implies that $f$ is not invertible at every point except 0.
\begin{figure}[h]
\begin{center}
\includegraphics[scale = 0.4]{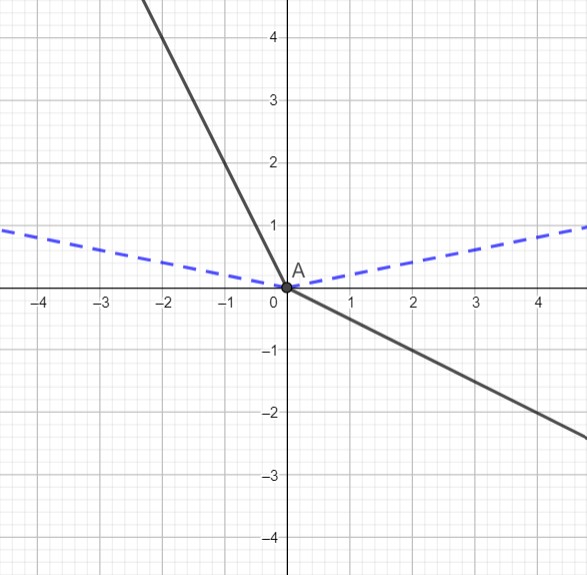}
\caption{Graphs of $\sigma\circ h_1$ (black) and $h_2\circ \sigma \circ h_1$ (blue) in Example~\ref{ex:noninv} }\label{fig:ex:noninv}.
\end{center}
\end{figure}
\end{ex}

\addtocounter{section}{+1}

\section{Experiment details}\label{app:expt}

\subsection{Metrics}\label{sec:metrics}

Previous work has relied on the Mean Correlation Coefficient (MCC) as a metric to quantify identifiability. For consistency with previous work, we report this metric, but also propose a new metric to quantify identifiability up to an affine transformation.
There are two challenges in designing such a metric:
Firstly, for two Gaussian mixtures, standard distance metrices such as TV-distance or KL-divergence do not have a closed form. Secondly, we need to find an affine map $A$ that best aligns a pair of Gaussian mixtures. 
Therefore, developing a metric to quantify identifiability up to an affine transformation has natural challenges. 
We propose $\dist_{\Aff, L2}$, defined below, as an additional metric in this setting.

\paragraph{Measuring loss}
In this work, we consider two different metrics.
For a pair of distributions $p_1, p_2$, we define $\dist_{\Aff, L2}$ loss as
\begin{equation}
  \dist_{\Aff, L2}(p_1, p_2) =  \min\limits_{\substack{A:\mathbb{R}^m\rightarrow \mathbb{R}^m,\\ \text{affine}}} 
  \Delta_{L_2}(A_{\sharp}p_1, p_2), \quad \text{where}\quad   \Delta_{L_2}(p_1, p_2) = \dfrac{ \norm{p_1 - p_2}_{L_2}}{\norm{p_1}^{1/2}_{L_2}\norm{p_2}_{L_2}^{1/2}}
\end{equation}
The other metric we consider is the Mean Correlation Coefficient (MCC) metric which had been used in prior works \citep{khemakhem2020ice, willetts2021don}. See \citet[Appendix A.2]{khemakhem2020ice} for a detailed discussion. There are two versions of MCC that have been used:
\begin{itemize}
    \item The \emph{strong} MCC is defined to be the MCC before alignment via the affine map $A$.
    \item The \emph{weak} MCC is defined to be the MCC after alignment.
\end{itemize}
In our experiments, we report both the strong MCC and weak MCC. Moreover, all reported MCCs are out-of-sample, i.e. the optimal affine map $A$ is computed over half the dataset and then reused for the other half of the dataset.

\paragraph{Alignment}
To find the affine map $A$ that best aligns the two GMMs, we use two approaches.
One approach is to use Canonical Correlation Analysis (CCA) as was done in prior works in computing MCC.

We describe an alternative approach now.
Given two GMMs, we iterate over all permutations of the components and for each fixed permutation, we find the best map $A$ that maps the components accordingly. In an ideal setting, we would want to find $A$ to align not just the means but also the covariance matrices but unfortunately this is a challenging optimization problem. Therefore, we instead find $A$ that maps the means of the first GMM to the means of the second GMM. The map $A$ can be found by solving a least-squares optimization problem which is straightforward using a Singular Value Decomposition (SVD). In practice, we find that this technique of matching the means works well.

\subsection{Implementation}

For VaDE \citep{jiang2016variational}, we use the implementation available at \url{https://github.com/mperezcarrasco/Pytorch-VaDE}.
For MFCVAE \citep{falck2021multi}, we use the author implementation available at \url{https://github.com/FabianFalck/mfcvae}. For iVAE \citep{khemakhem2020variational}, we use the implementation available at \url{https://github.com/MatthewWilletts/algostability}. Experiments were performed on an NVIDIA Tesla K80 GPU with 12GB memory.

\subsection{Setup}

Our experiments consist of three different setups, designed to probe different aspects of identifiability. First, we checked the exact log-likelihood for a unique global minimizer on simple toy models (Appendix~\ref{app:expt:mle}). We then used VaDE \citep{jiang2016variational} to train a practical VAE on a simulated dataset where the ground truth latent space is known (Appendix~\ref{app:expt:sim}). Finally, we compared the performance of MFCVAE \citep{falck2021multi} against iVAE on MNIST (Appendix~\ref{app:expt:real}). The last experiment is based on previous work by \cite{willetts2021don} that compares iVAE to VaDE; we successfully replicated these experiments using MFCVAE as an additional baseline that closely aligns with our assumptions.

The fact that our theory closely aligns with and replicates existing empirical work illustrates that the model \eqref{eq:gmm-ivae} is not merely a theoretical curiosity, but in fact practically relevant in modern applications. In our view, this is a significant advantage compared to related work.

\subsubsection{Maximum likelihood} 
\label{app:expt:mle}

We simulated random models of the form \eqref{eq:defn:nica} as follows:
\begin{enumerate}
\item Fix $J=2$ or $J=3$;
\item Randomly select $(\lambda_{1},\ldots,\lambda_{J})$ from a uniform grid by discretizing the simplex;
\item Randomly select $(\mu_{1},\ldots,\mu_{J})$ from a uniform grid on the hypercube;
\item Randomly select coefficients $(\alpha_{1},\alpha_{2})$, weights $(\beta_{1},\beta_{2})$, and biases $(\pi_{1},\pi_{2})$ from a uniform grid on the hypercube.
\end{enumerate}
Given these parameters, the prior $P(Z)$ is defined as in \eqref{eq:assm:mixture} and the decoder $f$ is defined to be the following single-layer ReLU network
\begin{align*}
f(z)
= \alpha_{1}\RELU(\beta_{1}z+\pi_{1}) + \alpha_{2}\RELU(\beta_{2}z+\pi_{2}).
\end{align*}
As a result of the simulation mechanism, the following important cases of misspecification naturally arise:
\begin{itemize}
\item We allow $\lambda_{j}=0$, i.e. the model allows for $J=3$ components, but the true model only has two nontrivial components.
\item We allow $\alpha_{j}=0$ and $\beta_{j}=0$, i.e. the model allows for up to two neurons in the hidden layer, but the true model only has one nontrivial neuron.
\item $f$ is not forced to be injective or even weakly injective, i.e. assumptions \ref{assm:invib}-\ref{assm:inv} are not checked explicitly. 
\end{itemize}

After generating a pair $(f,P(Z))$, the exact negative log-likelihood is approximated via numerical integration. An exhaustive grid search is performed over all parameters to identify the global minimizers. The computational cost of this step limited the complexity of the models that could be tested, hence the restriction to simple toy models in this experiment. In all runs, the ground truth was the unique global minimizer of the negative log-likelihood, as predicted by our theory. Since the problem is nonconvex, there often exist additional (non-global) local minima (see e.g. Figure~\ref{fig:mle}), however, the global minimizer is always unique up to affine equivalence. That is, due to affine equivalence, in some cases there is more than one global minimizer, but in all such cases it is easy to check that the different minimizers are indeed affinely equivalent. Multiple minimizers also arise when certain parameters (e.g. $\lambda_{j}$ or $\alpha_{j}$) vanish, again, these are easily checked.

\subsubsection{Simulated data}
\label{app:expt:sim}

We consider 4 synthetic datasets described below: Pinwheel and three different copies of the ``Random parallelograms" dataset

See Section~\ref{sec:expt} for results of the simulated experiments on the ``pinwheels'' dataset (see \citealp{johnson2016composing}). In those experiments we use 5000 samples and set $m=n=2$. In that experiment we used the same neural network architecture as discussed below for ``Random parallelograms".

We simulate an artificial dataset ``Random parallelograms'' as follows: We generate 3 randomly oriented parallelograms in the plane. After that, an $n$-dimensional observed distribution is obtained by sampling points uniformly at random from these parallelograms and by adding Gaussian noise to every sampled point.

We fit VaDE to each (observed) dataset 5 times (see Figures~\ref{fig:pinwheel3}, \ref{fig:rectangles1}-\ref{fig:rectangles3}). Let $Z^{(1)}, Z^{(2)}, \ldots, Z^{(5)}$ be the learned latent spaces. 
For every pair $Z^{(i)}, Z^{(j)}$ we evaluate the MCC and  $\dist_{\Aff, L2}$ loss. We report means of the MCCs/losses and their standard deviations in Table~\ref{tab: rectangles}.

For the VaDE training, we use a sequential neural network architecture with LeakyReLU activations for the encoder, with four fully connected layers of the following dimentions: $n\rightarrow 64\rightarrow 512\rightarrow 64\rightarrow m$.
For the decoder, we use a sequential neural network architecture with LeakyReLU activations, with four fully connected layers of the following dimentions: $m\rightarrow 64 \rightarrow 512 \rightarrow 512 \rightarrow n$.
We pretrain the autoencoder for 15 epochs and then run VaDE training for 20 epochs.

In all experiments with simulated data we set $m = 2$. We set the number of observed samples to be $5000$.

\begin{table}[htbp!]
\centering
\begin{tabular}{|c c c c |}
 \hline
 Dataset & $\dist_{\Aff, L2}$ & Strong MCC & Weak MCC \\
 \hline
 Random parallelograms \#1 & 0.1542 (0.150) & 0.86 (0.09) & 0.99 (0.003)\\
 Random parallelograms \#2 & 0.1231 (0.076) & 0.83 (0.12) & 0.99 (0.003)\\
 Random parallelograms \#3 & 0.578 (0.301) & 0.91 (0.08) & 0.99 (0.001)\\
 [1ex]
 \hline
\end{tabular}
\caption{Mean (std)  $\dist_{\Aff, L2}$ distance (lower is better) and Mean (std) MCC (higher is better) for synthetic data}
\label{tab: rectangles}
\end{table}

\begin{figure}[h!]
\includegraphics[width = \textwidth]{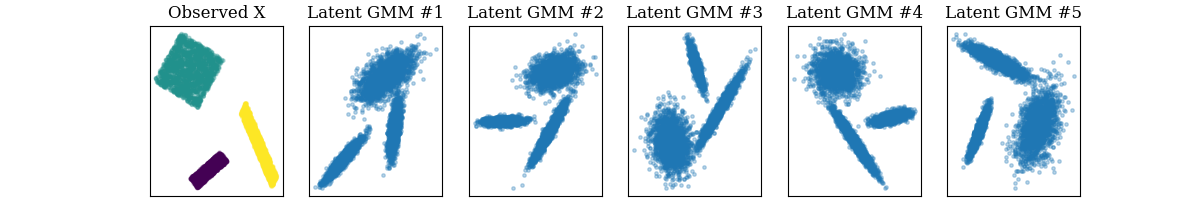}
\caption{Recovered latent spaces for 5 runs of VaDE on ``Random parallelograms'' dataset \#1 with 3 clusters}\label{fig:rectangles1}
\end{figure}

\begin{figure}[h!]
\includegraphics[width = \textwidth]{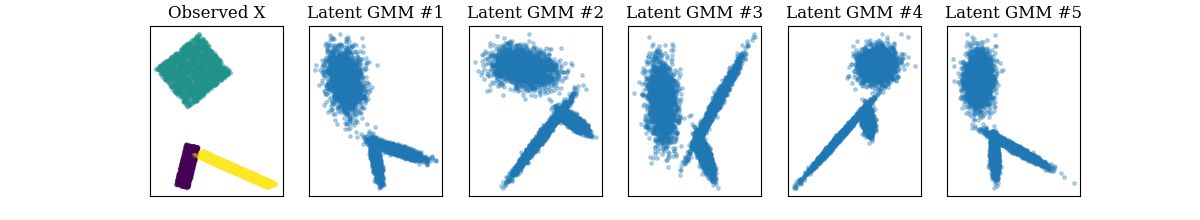}
\caption{Recovered latent spaces for 5 runs of VaDE on ``Random parallelograms'' dataset \#2 with 3 clusters}\label{fig:rectangles2}
\end{figure}

\begin{figure}[h!]
\includegraphics[width = \textwidth]{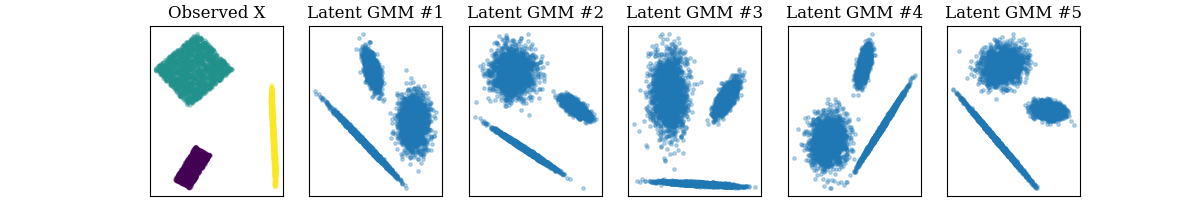}
\caption{Recovered latent spaces for 5 runs of VaDE on ``Random parallelograms'' dataset \#3 with 3 clusters}\label{fig:rectangles3}
\end{figure}

\subsubsection{Real data}
\label{app:expt:real} 

We run MFCVAE \citep{falck2021multi} on the MNIST dataset 10 times with different initializations. For all the 45 pairs of runs, we compute the strong MCC (before alignment) and weak MCC (after alignment with CCA of dimension $5$).
For these experiments, we omit the $\dist_{\Aff, L2}$ metric since it's computationally infeasible with a large number of components.
The mean and standard deviation of the MCCs are reported in Table~\ref{tab: mnist_mcc}. 
As a baseline, we also report the same metrics for 10 runs of iVAE \citep{khemakhem2020variational} on identical architecture and latent dimension, but recall that iVAE has additional access to the true digit labels $U$.

\begin{table}[htbp!]
\centering
\begin{tabular}{|c c c | c c|}
 \hline
 Architecture & Model & Activation & Strong MCC & Weak MCC \\
 \hline
 \multirow{3}{*}{Arch1} & MFCVAE & ReLU &
 0.7 (0.07) & 0.91 (0.05)\\
 & MFCVAE & LeakyReLU & 0.69 (0.06) & 0.94 (0.02)\\
 & iVAE & LeakyReLU & 0.65 (0.07) & 0.88 (0.07)\\
 \hline
 \multirow{3}{*}{Arch2} &
 MFCVAE & ReLU & 0.69 (0.07) & 0.89 (0.08)\\
 & MFCVAE & LeakyReLU & 0.69 (0.06) & 0.92 (0.03)\\
 & iVAE & LeakyReLU & 0.64 (0.07) & 0.87 (0.04)\\
 \hline
 \multirow{3}{*}{Arch3} &
 MFCVAE & ReLU & 0.69 (0.07) & 0.86 (0.08)\\
 & MFCVAE & LeakyReLU & 0.70 (0.05) & 0.92 (0.03)\\
 & iVAE & LeakyReLU & 0.67 (0.06) & 0.87 (0.05)\\
 [1ex]
 \hline
\end{tabular}
\caption{Mean and standard deviation of the MCCs (higher is better) across various models, architectures and activations}
\label{tab: mnist_mcc}
\end{table}

As recommended in \cite{falck2021multi}, we set the dimension of the latent space to be $5$ and number of components to be $25$. No hyperparameter tuning was done. The architectures we use are as follows:
\begin{itemize}
    \item Arch1: The encoder is a sequential neural network architecture with fully connected layers of dimensions $n\rightarrow 500\rightarrow 1000\rightarrow m$. The decoder is also a sequential neural network architecture with fully connected layers of dimensions $m \rightarrow 500 \rightarrow 500 \rightarrow n$.
    \item Arch2: The encoder is a sequential neural network architecture that is fully connected with dimensions $n\rightarrow 256\rightarrow 512\rightarrow 512\rightarrow m$. The decoder is similarly a sequential neural network architecture with fully connected layers of dimensions $m \rightarrow 512 \rightarrow 256 \rightarrow n$.
    \item Arch3: The encoder is a sequential neural network architecture that is fully connected with dimensions $n\rightarrow 128\rightarrow 256\rightarrow 128\rightarrow 128\rightarrow m$. The decoder is again a sequential neural network architecture with fully connected layers of dimensions $m \rightarrow 128 \rightarrow 128 \rightarrow n$.
\end{itemize}

The work \cite{willetts2021don} ran extensive experiments comparing VaDE and iVAE. We augment these experiments by using MFCVAE instead of VaDE. 
We observe that even without access to $U$, MFCVAE has competitive performance (stability) in recovering the latent space as compared to iVAE which has full access to $U$. This offers strong evidence for stability of training, as predicted by our theory.

For purely illustrative purposes, we also show the output of MFCVAE on MNIST. In Figure~\ref{fig: gen}, we show samples synthetically generated from each learnt cluster. In Figure~\ref{fig: recons}, we visualize the true datapoint $x$ and the corresponding reconstructed $\widehat{x}$  for four different datapoints in each cluster.
For similar experiments on other datasets and other architectures, we refer the reader to \cite{falck2021multi}.

\begin{figure}[htbp!]
    \centering
    \begin{subfigure}{.45\textwidth}
      \centering
      \includegraphics[width = \textwidth]{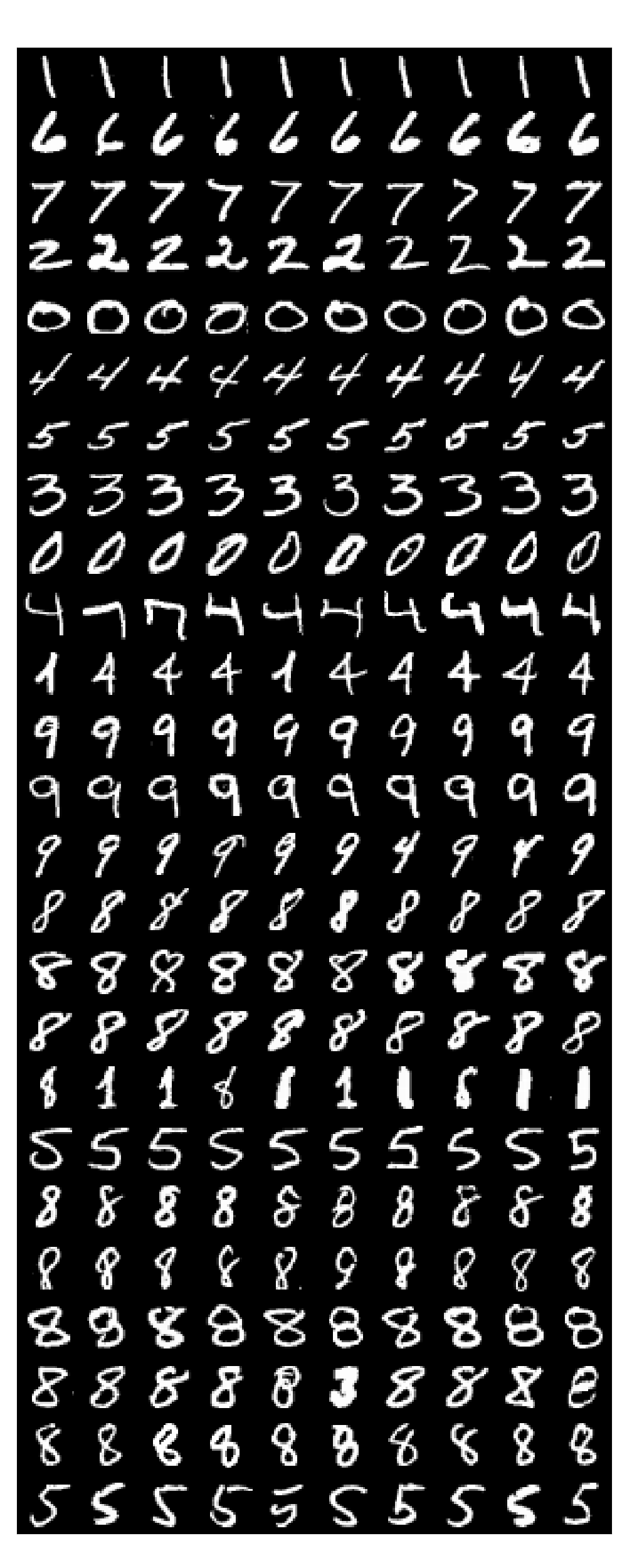}
        \caption{Arch1}\label{fig: gen1}
    \end{subfigure}
    \begin{subfigure}{.45\textwidth}
        \centering
        \includegraphics[width = \textwidth]{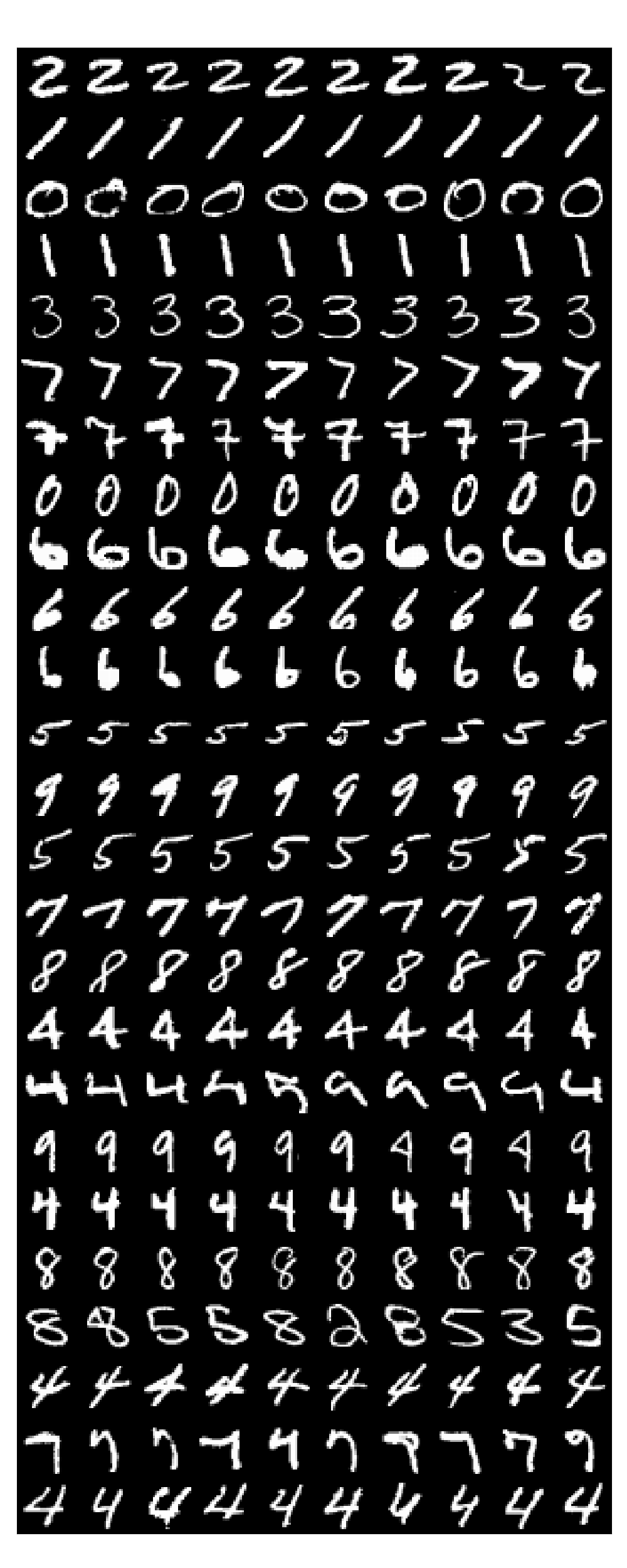}
        \caption{Arch2}\label{fig:gen2}
    \end{subfigure}
    \caption{Output of MFCVAE on MNIST data: Synthetically generated samples. Each row corresponds to a different learnt component. The columns are samples generated from the component. The rows are sorted by average confidence.}
    \label{fig: gen}
\end{figure}

\begin{figure}[htbp!]
    \centering
    \begin{subfigure}{.45\textwidth}
      \centering
      \includegraphics[width = \textwidth]{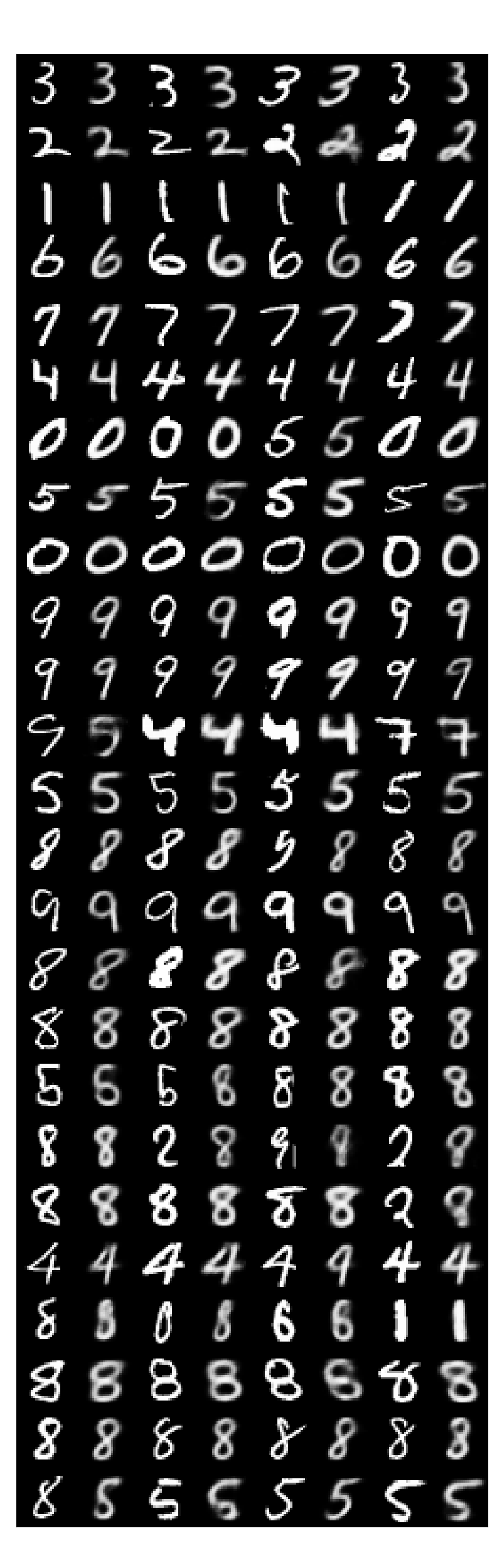}
        \caption{Arch1}\label{fig: recons1}
    \end{subfigure}
    \begin{subfigure}{.45\textwidth}
        \centering
        \includegraphics[width = \textwidth]{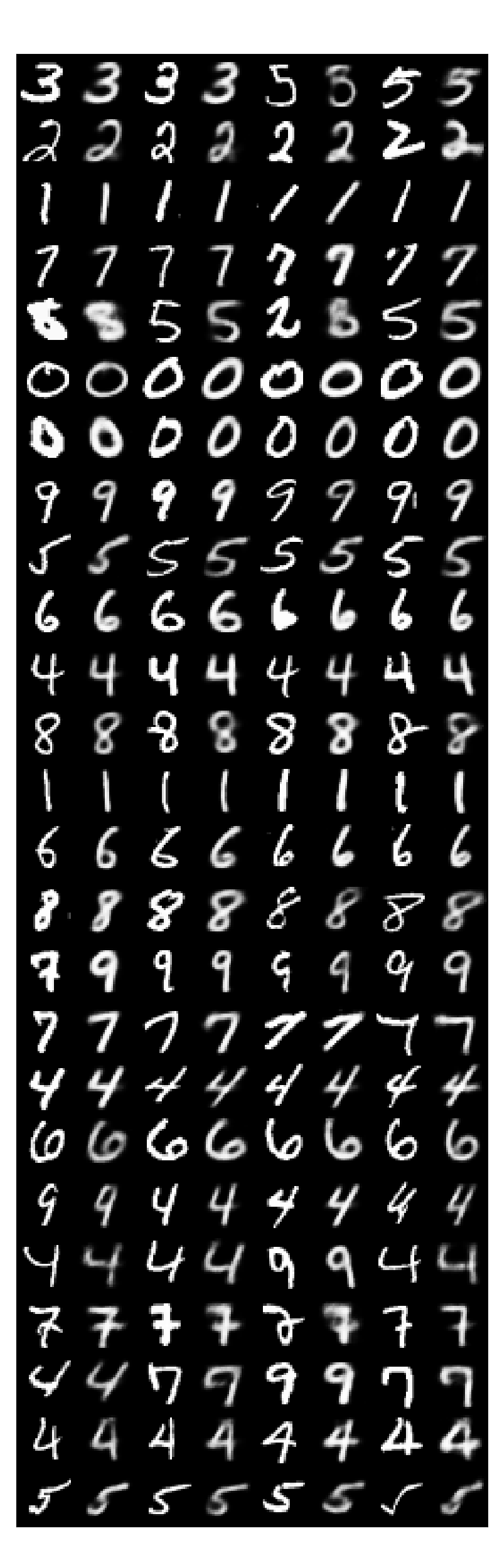}
        \caption{Arch2}\label{fig:recons2}
    \end{subfigure}
    \caption{Output of MFCVAE on MNIST data: Reconstruction accuracy. Each row corresponds to a different learnt component, the columns correspond to 4 different pairs of $x$ and $\widehat{x}$ in that order.}
    \label{fig: recons}
\end{figure}

\end{document}